\documentclass[lettersize,onecolumn]{IEEEtran}

\usepackage[T1]{fontenc}

\usepackage{tcolorbox}
\usepackage{amsmath}
\usepackage{import}
\usepackage{stackengine}
\usepackage{algorithm}
\usepackage{algpseudocode}
\usepackage{ushort}
\usepackage{cite}

\usepackage{booktabs} 
\usepackage{todonotes}
\usepackage{nicefrac} 
\usepackage{hyperref}

\usepackage{microtype}     
\usepackage{xcolor}        

\usepackage{enumitem}
\usepackage{bm}
\usepackage{amsmath}
\usepackage{amsthm}
\usepackage{caption}
\usepackage{amssymb}
\usepackage{subcaption}
\usepackage{wrapfig}
\usepackage{import}
\usepackage[usenames,dvipsnames]{pstricks}
\usepackage{pstricks-add}
\usepackage{epsfig}
\usepackage{pst-grad} 
\usepackage{pst-plot} 
\usepackage[space]{grffile}
\usepackage{etoolbox}

\usepackage{tikz}
\usepackage{amsmath, amssymb}
\usetikzlibrary{arrows.meta, positioning, shapes.geometric}

\theoremstyle{plain}
\newtheorem{definition}{Definition}
\newtheorem{theorem}{Theorem}
\theoremstyle{definition}

\theoremstyle{definition}
\newtheorem{remark}{Remark}
\theoremstyle{plain}

\newtheorem{proposition}{Proposition}
\newtheorem{lemma}{Lemma}
\newtheorem{corollary}{Corollary}

\newtheorem{assumption}{Assumption}
\newtheorem{example}{Example}

\def\enc{{u_\textrm{enc}}}
\def\dec{{u_\textrm{dec}}}

\def\encstar{{u^*_\textrm{enc}}}
\def\decstar{{u^*_\textrm{dec}}}

\def\encdagger{{u^\dagger_\textrm{enc}}}

\def\enclamb{\lambda_\textrm{e}}
\def\declamb{\lambda_\textrm{d}}

\def\lenc{\mathcal{L}_\textrm{enc}}
\def\ldec{\mathcal{L}_\textrm{dec}}

\def\gcc{\texttt{GCC}}
\def\stset{\mathcal{F}}
\def\fhat{\widehat{f}}

\def\objwc{J_{\textrm{wc}}}
\def\objp{J_{\textrm{p}}}

\newcommand\delmax[1]{\bar{\delta}_{#1}}
\newcommand\delmin[1]{\ushort{\delta}_{#1}}

\newcommand{\norm}[1]{\left\lVert#1\right\rVert}
\newcommand{\lp}[1]{{L}^{#1}\left(\Omega; \mathbb{R}\right)}
\newcommand{\lpm}[1]{{L}^{#1}\left(\Omega; \mathbb{R}^M\right)}
\newcommand{\lpd}[2]{{L}^{#1}\left(\Omega; \mathbb{R}^{#2}\right)}

\newcommand{\smp}{\mathbb{W}^{m,q}\left(\Omega; \mathbb{R}^M\right)}

\newcommand{\smpeq}{\widetilde{\mathbb{W}}^{m,q}\left(\Omega; \mathbb{R}^M\right)}

\newcommand{\soblocm}[2]{\mathbb{W}_{\textrm{loc}}^{#1,#2}\left(\Omega; \mathbb{R}^M\right)}
\newcommand{\soblocmd}[3]{\mathbb{W}_{\textrm{loc}}^{#1,#2}\left(\Omega; \mathbb{R}^{#3}\right)}

\newcommand{\sobeq}[2]{\widetilde{\mathbb{W}}^{#1,#2}\left(\Omega; \mathbb{R}\right)}
\newcommand{\sobm}[2]{\mathbb{W}^{#1,#2}\left(\Omega; \mathbb{R}^M\right)}

\newcommand{\sobmeq}[2]{\widetilde{\mathbb{W}}^{#1,#2}\left(\Omega; \mathbb{R}^M\right)}

\newcommand{\hil}[1]{\mathcal{H}^{#1}\left(\Omega;\mathbb{R}\right)}
\newcommand{\hilz}[1]{\mathcal{H}_0^{#1}\left(\Omega;\mathbb{R}\right)}
\newcommand{\hilm}[1]{\mathcal{H}^{#1}\left(\Omega;\mathbb{R}^M\right)}
\newcommand{\hild}[2]{\mathcal{H}^{#1}\left(\Omega;\mathbb{R}^{#2}\right)}

\newcommand{\hiltilde}[1]{\widetilde{\mathcal{H}}^{#1}\left(\Omega;\mathbb{R}\right)}
\newcommand{\hilmtilde}[1]{\widetilde{\mathcal{H}}^{#1}\left(\Omega;\mathbb{R}^M\right)}

\newcommand{\splineCC}[3]{\mathbf{S}_{#1,#2,#3}}
\newcommand{\lec}[2]{\stackrel{\text{#1}}{#2}}

\newtcolorbox{mathbox}[1][]{colback=gray!20,  #1}
\tcbset{width=(\linewidth-4mm),
before=,after=\hfill,colframe=black!75!black,colback=black!5!gray}

\begin{document}

\title{Learning-Theoretic Foundation for General Coded Computing: The Straggler Setting}


\author{
    \IEEEauthorblockN{Parsa Moradi}
    \IEEEauthorblockA{
    University of Minnesota, Twin Cities
    moradi@umn.edu\\}
    \and
    \IEEEauthorblockN{Behrooz Tahmasebi}
    \IEEEauthorblockA{
    Harvard University
    behrooz\_tahmasebi@seas.harvard.edu\\}
    \and
    \IEEEauthorblockN{Mohammad Ali Maddah-Ali}
    \IEEEauthorblockA{
    University of Minnesota, Twin Cities
    maddah@umn.edu}

\thanks{
  The work has been partially supported by the National Science Foundation under Grant CIF-2348638.
  }
  \thanks{
  This work was partially presented at the 2024 Conference on Neural Information Processing Systems (NeurIPS 2024), held in Vancouver, Canada, from December 10 to 15, 2024, at the 2025 IEEE International Symposium on Information Theory (ISIT 2025), held in Ann Arbor, Michigan, USA, from June 22 to 27, 2025, and in a survey paper in~\cite{GCCBITS}. 
    }
}


\maketitle

\begin{abstract}
Coded computing has emerged as a powerful paradigm for mitigating the impact of straggling workers in distributed computing systems. However, existing coded-computing schemes are predominantly designed for the exact recovery of highly structured computations, such as polynomial evaluation and matrix multiplication, and typically rely on strict recovery thresholds. These assumptions significantly limit their applicability to modern machine-learning workloads, particularly deep neural networks (DNNs), whose computations generally lack rigid algebraic structure and, in many applications, require only accurate approximations rather than exact recovery.

To address this gap, we revisit coded computing from a learning-theoretic perspective and introduce General Coded Computing (\gcc{}). Rather than adopting existing algebraic tools, \gcc{} formulates coded computing through a natural end-to-end mean-squared error loss that directly measures the discrepancy between the desired computations and their recovered estimates. By deriving suitable upper bounds and restricting the encoder and decoder to a reproducing kernel Hilbert space (RKHS) with mild smoothness constraints, we show that both the encoder and decoder admit specific representations as linear combinations of RKHS kernel functions. This representation allows the corresponding coefficients to be computed efficiently. Moreover, this framework enables us to establish theoretical performance guarantees for \gcc{} under two complementary straggler regimes. In the worst-case setting with $N$ worker nodes, and at most $S$ stragglers, we show that the end-to-end loss decays at least at rate $\mathcal{O}(S^3N^{-3})$ for standard configurations. We then study a probabilistic setting in which each worker independently straggles with probability $p$. We prove that the expected loss can still converge at rate $\mathcal{O}(\log_{1/p}^3(N)N^{-3})$. Extensive experiments on complex DNN architectures with millions of parameters validate the theoretical predictions and show that \gcc{} achieves lower reconstruction error and faster convergence than state-of-the-art coded-computing schemes.
\end{abstract}

\begin{IEEEkeywords}
Distributed Computing, Coded Computing, Smoothing Spline, Non-parametric Regression, Straggler Resiliency
\end{IEEEkeywords}

\IEEEpeerreviewmaketitle

\section{Introduction}\label{sec:introduction}

Distributed computing has become an indispensable technology for managing massive datasets and evaluating increasingly complex functions, such as Machine Learning (ML) models. Typically, these systems operate under a master-worker architecture, where a central \emph{master node} delegates computational tasks to a cluster of \emph{worker nodes}. However, distributing these tasks across large clusters introduces systemic vulnerabilities, most notably the presence of \emph{stragglers}, worker nodes that experience unpredictable delays or fail to return their results within a prescribed deadline. Traditional naive distribution strategies, which assign data partitions to individual workers without redundancy, are bottlenecked by the slowest nodes, leading to severe resource underutilization.

To address this bottleneck, \emph{coded computing} was introduced, drawing inspiration from error-correcting codes in communications~\cite{Abraham1984, lee2017speeding, yu2019lagrange, yu2020straggler,short,  opt-recovery, jia2019capacity}. By deliberately injecting structured redundancy into the distributed data, the master node can recover the exact computational results using only the outputs from a subset of the workers. While highly successful, classical coded computing was fundamentally developed for algebraic tasks over finite fields and highly structured computations, such as matrix multiplication and polynomial evaluations~\cite{lee2017speeding, yu2019lagrange,yu2020straggler,yu2017polynomial,short, jahani2018codedsketch, opt-recovery}. Additionally, these methods guarantee exact recovery but impose a strict \emph{recovery threshold}: if the number of non-straggling worker nodes falls below this threshold, the decoding procedure fails entirely. This rigidity renders classical schemes ill-suited for more general and complex computations, such as deep neural networks (DNNs), which lack the structured algebraic properties and only require good approximation. 

To extend the reach of coded computing to a broader class of functions, several research directions have emerged. Initial efforts attempted to fit general computations into the classical framework by approximating non-polynomial target functions with polynomial surrogates~\cite{so2020scalable,codedpri}. However, because high-degree polynomial interpolation over the real field is susceptible to numerical instability, subsequent studies focused heavily on refining the underlying coding mechanisms, such as utilizing alternative polynomial bases or analog codes~\cite{AnooshehRobust,RamamoorthyCir,RamamoorthyConv,soleymani2020analog,fahim2019numerically}. More recently, recognizing that many applications, including machine learning workloads, inherently tolerate small computational perturbations, a paradigm shift toward \emph{approximate coded computing} has gained traction~\cite{jahani2018codedsketch, e-app-coded, overSketch,overSketchN, SuccApp}. By intentionally trading exact recovery for bounded approximation errors, these approaches eliminate the rigid recovery threshold constraint. This yields a more flexible system where the master node can compute a good approximation from any subset of workers, and the accuracy of the final computation improves as more worker nodes return their results.

Despite this paradigm shift, these attempts ultimately fail to bridge the gap between classical coded computing and general distributed computing systems. The root cause of this disconnect lies in the foundational reliance of classical coded computing on algebraic coding theory, a paradigm inherently incompatible with general, high-dimensional functions. For instance, while polynomial approximations are useful for simple non-linear functions, such as the one used in~\cite{codedpri} for the sigmoid function in the logistic regression setting, it is both theoretically and practically infeasible to approximate the complex, highly non-linear behavior of a deep neural network using a single global polynomial with acceptable accuracy. As a result, although approximating with polynomials or refining polynomial bases provides some flexibility, these methods still rely on algebraic structures that simply do not scale to arbitrary real-valued functions.

This mismatch motivates a re-examination of the design principles underlying coded computing. The work in~\cite{jahani2022berrut} takes an initial step in this direction by moving away from strictly algebraic codes and leveraging tools from approximation theory to handle general computation tasks. However, its encoder and decoder are not explicitly designed to optimize the end-to-end coded-computing objective. Instead, their design is largely guided by off-the-shelf approximation-theoretic tools without a direct mechanism to minimize the error between the recovered function and the true computational target. Consequently, developing a unified approach that directly optimizes the approximation error for arbitrary, high-dimensional continuous functions remains an open challenge.

To fill this critical gap, we propose General Coded Computing (\gcc{}), a novel framework that bridges the divide between coded computing and general distributed computing. Unlike previous approaches that rely on rigid algebraic structures or off-the-shelf approximation techniques, \gcc{} is built around an explicit end-to-end loss function aligned with the central objective of coded computing: accurately approximating the target function values on a batch of input data from the results returned by a subset of worker nodes.

To systematically minimize this objective, we restrict the encoder and decoder to a reproducing kernel Hilbert space (RKHS) of second-order Sobolev functions. This functional-analytic formulation enables the use of tools from learning theory and spline approximation. Instead of solving the original joint optimization problem directly, we derive a tractable upper bound on the end-to-end loss. This bound decomposes into two interpretable terms: a decoder approximation term, which measures how accurately the decoder reconstructs the target computation from the returned worker outputs, and an encoder approximation term, which measures how faithfully the encoder preserves the original input batch. The decoder approximation term, however, depends jointly on both the encoder and the decoder. By invoking the Representer Theorem, we show that both the encoder and decoder admit explicit representations as linear combinations of RKHS kernel functions, with coefficients that can be computed efficiently in closed form. Consequently, although the code design is grounded in learning theory, neither the encoder nor the decoder requires training.

In summary, the main contributions of this paper are as follows:
\begin{itemize}
    \item \textbf{A learning-theoretic formulation of coded computing:}
    We introduce General Coded Computing (\gcc{}), a framework for coded computing beyond algebraically structured tasks. The framework is built around an explicit end-to-end loss function that directly captures the central coded-computing objective: accurately approximating the target function values on a batch of inputs from the results returned by a subset of worker nodes. By formulating the encoder and decoder as functions in an RKHS of second-order Sobolev functions, we obtain a principled approach for designing efficient encoding and decoding procedures through tractable upper bounds on this loss.

    \item \textbf{Theoretical guarantees and convergence rates:}
    We characterize the end-to-end loss of \gcc{} under two complementary straggler regimes. In the worst-case setting with at most $S$ stragglers, we prove that, under standard node configurations, the loss decays at least at rate $\mathcal{O}(S^3/N^3)$. We also analyze a probabilistic setting in which each worker independently straggles with probability $p$. We prove that the expected loss still can vanish at rate $\mathcal{O}(\log_{1/p}^3(N)/N^3)$.

    \item \textbf{Extensive Empirical Evaluation:} We validate the \gcc{} framework across a diverse spectrum of computational tasks, ranging from classical structured target functions, such as polynomials, to highly complex, state-of-the-art deep neural networks, including Vision Transformers (ViTs)~\cite{dosovitskiy2020image}. The numerical evaluations strongly support our theoretical derivations, demonstrating that \gcc{} achieves superior recovery accuracy and faster convergence compared to existing state-of-the-art coded computing schemes.
\end{itemize}

The remainder of this paper is organized as follows. Section~\ref{sec:prob_form} formulates the distributed computing problem and introduces the \gcc{} framework. Section~\ref{sec:prelim} presents the mathematical preliminaries used throughout the paper. Sections~\ref{sec:worst_case} and~\ref{sec:prob_straggler} develop the main theoretical results under the worst-case and probabilistic straggler models, respectively. Section~\ref{sec:proofs} provides the proofs of the main results. Section~\ref{sec:exp_result} presents the experimental setup and results. Section~\ref{sec:conclusion} concludes the paper and discusses future research directions. Appendices~\ref{app:defs}, \ref{app:sobolev_props}, and~\ref{app:smoothspline} collect supplementary definitions and mathematical tools used in the theoretical analysis, while Appendix~\ref{app:proof_lemmas} provides the proofs of the auxiliary lemmas used in the proof of the main theorems.

\subsection{Notation}
Throughout this paper, scalars are denoted by non-boldface lowercase letters
(e.g., $x$), vectors by boldface lowercase letters (e.g., $\mathbf{x}$), and
matrices by boldface uppercase letters (e.g., $\mathbf{A}$). Sets are denoted
by calligraphic uppercase letters (e.g., $\mathcal{S}$). Coded quantities are
indicated by a tilde; for example, $\widetilde{\mathbf{x}}$ denotes a coded
vector and $\widetilde{\mathbf{A}}$ denotes a coded matrix. For
$n\in\mathbb{N}$, we use the shorthand $[n]:=\{1,2,\ldots,n\}$. For a finite
set $\mathcal{S}$, its cardinality is denoted by $|\mathcal{S}|$.

For a differentiable scalar-valued function
$f:\mathbb{R}^d\to\mathbb{R}$, the gradient at $\mathbf{x}$ is denoted by
$\nabla f(\mathbf{x})\in\mathbb{R}^d$, and the Hessian is denoted by
$D^2 f(\mathbf{x})\in\mathbb{R}^{d\times d}$. For a vector-valued function
$f=[f_1,\ldots,f_m]^T:\mathbb{R}^d\to\mathbb{R}^m$, $\nabla f_j(\mathbf{x})$
and $D^2 f_j(\mathbf{x})$ denote the gradient and Hessian of its $j$-th
component, respectively.

For a function $g:\Omega\to\mathbb{R}^M$ defined on an interval
$\Omega\subseteq\mathbb{R}$, we denote by $g^{(i)}$ its $i$-th weak derivative (see Definition~\ref{def:weak}) with respect to the scalar argument, taken component-wise. In particular,
$g^{(0)}:=g$, and we use the shorthand $g':=g^{(1)}$ and
$g'':=g^{(2)}$. The Euclidean norm is denoted by $\|\cdot\|_2$, and the operator norm of a matrix $\mathbf{A}$ is
\(
    \|\mathbf{A}\|_{\mathrm{op}}
    :=
    \sup_{\|\mathbf{x}\|_2=1}\|\mathbf{A}\mathbf{x}\|_2 .
\)
For a measurable set $\Omega\subseteq\mathbb{R}$, its Lebesgue measure is
denoted by $\operatorname{Leb}(\Omega)$.

\section{Problem Formulation}\label{sec:prob_form}

Consider a distributed computing system consisting of a master node and $N$ worker nodes. The master node aims to compute a batch of function evaluations $\{f(\mathbf{x}_k)\}_{k=1}^K$, where each input data point satisfies $\mathbf{x}_k \in \mathbb{R}^d$ and the target function is $f:\mathbb{R}^d \to \mathbb{R}^m$. The parameters $K,d,m\in\mathbb{N}$ denote the batch size, input dimension, and output dimension, respectively.

A key challenge in such systems is the presence of \emph{straggling} workers, namely workers that fail to complete their assigned computations within a prescribed deadline. Consequently, a naive strategy that assigns each input data point to a single worker is highly vulnerable to delays and can lead to inefficient utilization of the distributed resources.

To mitigate this issue, the master node introduces redundancy through coding. Instead of directly assigning the original input data points to the worker nodes, the master node employs an encoding procedure to generate $N$ coded data points, where each coded data point is constructed from the input batch. Each worker then evaluates the function $f(\cdot)$ on its assigned coded data point and transmits the computed result, referred to as the \emph{coded output}, back to the master node. Upon receiving the outputs from the non-straggling workers, the master node applies a decoding procedure to recover approximations $\fhat(\mathbf{x}_k)$ of the desired evaluations $f(\mathbf{x}_k)$ for $k\in[K]$. The redundancy induced by the encoding procedure enables the master node to estimate the desired function evaluations $\{f(\mathbf{x}_k)\}_{k=1}^K$ even when a subset of workers are stragglers and their computation results are unavailable.

\subsection{General Coded Computing (\gcc{})}\label{sec:framework}

\begin{figure}[!t]
\centering
\resizebox{\columnwidth}{!}{
\begin{tikzpicture}[
    >=Stealth,
    node distance=1cm,
    block/.style={rectangle, draw, minimum width=2.8cm, minimum height=0.7cm, align=center, thick},
    label style/.style={font=\small}
]

    \draw[thick, fill=gray!5] (0,0) -- (2.5,1.5) -- (2.5,-4.6) -- (0,-3.1) -- cycle;
    \node at (1.25, -1.2) {\textbf{Encoder}};
    \node at (1.25, -2) {\small $u_{\text{enc}}(\beta_n) = \tilde{\mathbf{x}}_n$};

    \node[block] (w1) at (5.5, 1.2)  {$f(\cdot)$};
    \node[block] (w2) at (5.5, 0.1)  {$f(\cdot)$};
    \node[block] (w3) at (5.5, -1.0) {$f(\cdot)$};
    \node[block] (wn) at (5.5, -2.1) {$f(\cdot)$};
    \node[block] (w5) at (5.5, -3.2) {$f(\cdot)$};
    \node[block] (wN) at (5.5, -4.3) {$f(\cdot)$};

    \node[above=0.2cm of w1, font=\bfseries] {Worker Nodes};

    \begin{scope}[shift={(8.5,0)}]
        \draw[thick, fill=gray!5] (0,1.5) -- (2.5,0) -- (2.5,-3.1) -- (0,-4.6) -- cycle;
        \node at (1.25, -1.2) {\textbf{Decoder}};
        \node at (1.25, -2) {\small $u_{\text{dec}}(\alpha_k) = \widehat{f}(\mathbf{x}_k)$};
    \end{scope}

    \draw[->, thick] (-1, -0.5) node[left]{$\mathbf{x}_1$} -- (0, -0.5);
    \draw[->, thick] (-1, -1.2) node[left]{$\mathbf{x}_2$} -- (0, -1.2);
    \draw[->, thick] (-1, -2.6) node[left]{$\mathbf{x}_K$} -- (0, -2.6);

    \draw[->, thick] (2.5, 1.2)  -- node[above, yshift=-1mm]{$\tilde{\mathbf{x}}_1$} (w1.west);
    \draw[->, thick] (2.5, 0.1)  -- node[above, yshift=-1mm]{$\tilde{\mathbf{x}}_2$} (w2.west);
    \draw[->, thick] (2.5, -1.0) -- (w3.west);
    \draw[->, thick] (2.5, -2.1) -- node[above, yshift=-1mm]{$\tilde{\mathbf{x}}_n$} (wn.west);
    \draw[->, thick] (2.5, -3.2) -- (w5.west);
    \draw[->, thick] (2.5, -4.3) -- node[above, yshift=-1mm]{$\tilde{\mathbf{x}}_N$} (wN.west);

    \draw[->, thick] (w1.east) -- node[above, yshift=-1mm]{$f(\tilde{\mathbf{x}}_1)$} (8.5, 1.2);
    \draw[->, thick] (w2.east) -- node[above, yshift=-1mm]{$f(\tilde{\mathbf{x}}_2)$} (8.5, 0.1);
    
    \draw[->, thick, gray] (w3.east) -- (7.5, -1.0);
    \node[red, thick] at (7.6, -1.0) {\huge $\times$};
    
    \draw[->, thick] (wn.east) -- node[above, yshift=-1mm]{$f(\tilde{\mathbf{x}}_n)$} (8.5, -2.1);
    
    \draw[->, thick, gray] (w5.east) -- (7.5, -3.2);
    \node[red, thick] at (7.6, -3.2) {\huge $\times$};
    
    \draw[->, thick] (wN.east) -- node[above, yshift=-1mm]{$f(\tilde{\mathbf{x}}_N)$} (8.5, -4.3);

    \draw[->, thick] (11, -0.5) -- (12, -0.5) node[right]{$\widehat{f}(\mathbf{x}_1)$};
    \draw[->, thick] (11, -1.2) -- (12, -1.2) node[right]{$\widehat{f}(\mathbf{x}_2)$};
    \draw[->, thick] (11, -2.6) -- (12, -2.6) node[right]{$\widehat{f}(\mathbf{x}_K)$};

\end{tikzpicture}
 }
\caption{System model for coded computing with $N$ worker nodes and $K$ input tasks, where some nodes act as stragglers.}
\label{fig:framework}
\end{figure}

We now introduce \gcc{}, a straggler-resilient distributed computing framework for a general high-dimensional computation. As illustrated in Figure~\ref{fig:framework}, \gcc{} consists of three stages: an \emph{encoding} stage at the master node, a \emph{computation} stage across the worker nodes, and a \emph{decoding} stage at the master node. The main idea is to embed the input batch $\{\mathbf{x}_k\}_{k=1}^K$ into a continuous curve, evaluate this curve at worker-specific points to generate coded data points, and then reconstruct the desired outputs from the returned worker results through a decoding procedure. 

Formally, the scheme operates as follows.
\begin{enumerate}[label={(\arabic*)}]
\setlength\itemsep{0.1em}
    \item \emph{Encoding:} The master node selects $K$ points $\alpha_1 < \cdots < \alpha_K$ in a bounded interval $\Omega \subset \mathbb{R}$, referred to as the \emph{encoder design points}. It then constructs an encoder function
    \(
        \enc:\Omega \to \mathbb{R}^d
    \)
    such that
    \begin{align}
        \label{eq:enc_approx}
        \enc(\alpha_k) \approx \mathbf{x}_k, \quad k \in [K].
    \end{align}
    The approximation error in \eqref{eq:enc_approx} will be characterized in the subsequent theoretical analysis (see Section~\ref{sec:worst_code_design}). Next, the master node selects another set of $N$ points $\beta_1 < \cdots < \beta_N$ in $\Omega$, referred to as the \emph{decoder design points}. It then evaluates the encoder function at these points to generate the coded data points
    \[
        \widetilde{\mathbf{x}}_n := \enc(\beta_n) \in \mathbb{R}^d, \quad n \in [N].
    \]
    The master node assigns the coded data point $\widetilde{\mathbf{x}}_n$ to worker node $n$. Since $\enc(\cdot)$ is constructed from $\{\mathbf{x}_k\}_{k=1}^K$, each coded data point $\widetilde{\mathbf{x}}_n$ generally depends on the entire input batch.

    \item \emph{Computation:} Worker node $n \in [N]$ evaluates the target function at its assigned coded data point and returns
    \[
        f(\widetilde{\mathbf{x}}_n)=f(\enc(\beta_n)) \in \mathbb{R}^m
    \]
    to the master node.

    \item \emph{Decoding:} Let $\mathcal{F} \subseteq [N]$ denote the set of non-straggling workers for the current batch. Using the received results $\{f(\enc(\beta_v))\}_{v \in \mathcal{F}}$, the master node constructs a decoder function
    \(
        \dec:\Omega \to \mathbb{R}^m
    \)
    intended to approximate the composite map $z \mapsto f(\enc(z))$, namely,
    \begin{align}\label{eq:dec_approx}
        \dec(z) \approx f(\enc(z)), \quad z \in \Omega.
    \end{align}
    The approximation error in \eqref{eq:dec_approx} will also be characterized in the subsequent theoretical analysis (see Section~\ref{sec:worst_code_design}). Consequently, given the non-straggler set $\mathcal{F}$, the master node forms the estimate $\fhat_{\mathcal{F}}(\mathbf{x}_k)$ by evaluating the decoder at the encoder design points $\alpha_k$ for $k \in [K]$:
\begin{equation}\label{eq:gcc_approx}
        \fhat_{\mathcal{F}}(\mathbf{x}_k)
        :=
        \dec(\alpha_k)
        \overset{\mathrm{(a)}}{\approx}
        f(\enc(\alpha_k))
        \overset{\mathrm{(b)}}{\approx}
        f(\mathbf{x}_k),
    \end{equation}
    where step (a) follows from the decoder approximation in \eqref{eq:dec_approx}, and step (b) follows from the encoder approximation in \eqref{eq:enc_approx}.
\end{enumerate}

Our goal is to make the resulting approximations as accurate as possible. To this end, for a non-straggler set $\mathcal{F}$, we quantify the performance of the scheme as the mean-squared error (MSE) of the approximations:
\begin{align}\label{eq:gen_obj}
    \mathcal{L}( \enc, \dec)
    =
    \frac{1}{K}
    \sum_{k=1}^K
    \left\|
        \fhat_{\mathcal{F}}(\mathbf{x}_k) - f(\mathbf{x}_k)
    \right\|_2^2.
\end{align}

In the subsequent sections, after establishing the necessary preliminaries (Section~\ref{sec:prelim}), we study two distinct straggler models:
\begin{enumerate}
    \item \textbf{Worst-Case Straggler Setting} (Section~\ref{sec:worst_case}): In this setting, we assume that at most $S \leq N$ workers straggle. The goal is to design the encoder and decoder so as to minimize the objective in \eqref{eq:gen_obj} under the worst-case straggling pattern over all non-straggler sets $\mathcal{F} \subseteq [N]$ satisfying $|\mathcal{F}| \geq N-S$. Formally, the objective is
    \begin{align}\label{eq:obj_opt_worst_0}
 \inf_{\enc}\ 
        \max_{|\mathcal{F}| \geq N-S} \inf_{\dec}\
        \mathcal{L}( \enc, \dec).
    \end{align}
    
    \item \textbf{Probabilistic Straggler Setting} (Section~\ref{sec:prob_straggler}): We assume that each worker independently becomes a straggler with probability $p \in (0,1)$. Equivalently, each worker belongs to the non-straggler set independently with probability $1-p$. The goal is to minimize the expected MSE over the distribution of the non-straggling worker set:
    \begin{align}
\label{eq:obj_opt_prob_0}
\inf_{\enc}\ 
        \mathbb{E}_{\mathcal{F} \sim \mathsf{F}_{p,N}}
        \big[ \inf_{\dec}
            \mathcal{L}( \enc, \dec)
        \big],
    \end{align}
    where $\mathsf{F}_{p,N}$ denotes the distribution over subsets of $[N]$ from independent worker failures with probability $p$.
\end{enumerate}
Thus, the main objective is to systematically design the encoder and decoder functions, $\enc(\cdot)$ and $\dec(\cdot)$, and to characterize the resulting approximation errors in \eqref{eq:enc_approx} and \eqref{eq:dec_approx} according to the performance objectives of worst-case and probabilistic settings characterized in \eqref{eq:obj_opt_worst_0} and \eqref{eq:obj_opt_prob_0}, respectively. 

\begin{remark}
    The order of the optimizations in~\eqref{eq:obj_opt_worst_0} and~\eqref{eq:obj_opt_prob_0} reflects the fact that the decoder may depend on the set of non-straggling workers $\mathcal{F}$, whereas the encoder cannot. Indeed, at the decoding stage, $\mathcal{F}$ is known because the master has observed which workers have responded. In contrast, the encoder is designed before the straggler pattern is realized and therefore must be independent of $\mathcal{F}$.
\end{remark}

\section{Preliminaries}\label{sec:prelim}

Let $\Omega \subset \mathbb{R}$ be a bounded interval, and let $M\in\mathbb{N}$ be fixed.  For $1 \le \rho < \infty$, let $\lpm{\rho}$ denote the space of measurable functions $g:\Omega \to \mathbb{R}^M$ whose components are $\rho$-integrable over $\Omega$. Similarly, let $\lpm{\infty}$ denote the space of essentially bounded functions on $\Omega$; see Appendix~\ref{app:defs} for formal definitions.

Next, we define the Sobolev space $\sobm{r}{\rho}$, which consists of functions whose weak derivatives up to order $r$ belong to the corresponding $L^\rho$ space.

\begin{definition}[Sobolev space $\sobm{r}{\rho}$]\label{def:sobolev}
Let $r \in \mathbb{N}$ with $r \ge 1$, let $1 \le \rho \le \infty$, and let $\Omega \subset \mathbb{R}$ be a bounded interval. The Sobolev space $\sobm{r}{\rho}$ consists of all functions $g \in \lpm{\rho}$ whose weak derivatives $g^{(i)}$, $i=1,\dots,r$, exist and also belong to $\lpm{\rho}$; see Definition~\ref{def:weak} for the definition of weak derivatives. For notational convenience, we set $g^{(0)} := g$.

For $1 \le \rho < \infty$, the space $\sobm{r}{\rho}$ is equipped with the norm
\begin{align}\label{eq:sob_norm1}
    \norm{g}_{\sobm{r}{\rho}}
    :=
    \left(
        \sum_{i=0}^r
        \norm{g^{(i)}}_{\lpm{\rho}}^\rho
    \right)^{1/\rho}.
\end{align}
For $\rho=\infty$, it is equipped with the norm
\begin{align}\label{eq:sob_norm2}
    \norm{g}_{\sobm{r}{\infty}}
    :=
    \max_{i \in \{0,\dots,r\}}
    \norm{g^{(i)}}_{\lpm{\infty}}.
\end{align}
\end{definition}

For $\rho=2$, the Sobolev space $\sobm{r}{2}$ is a Hilbert space. Under the conditions stated in Proposition~\ref{prop:sob_hilb} below, this Hilbert space is also a reproducing kernel Hilbert space (RKHS).
For background on RKHS, the reader is referred to~\cite{leoni2024first,adams2003sobolev}. The RKHS structure is important in our setting because it enables the use of the representer theorem, stated in Theorem~\ref{thm:representer}, which plays a key role in the design of the encoder and decoder functions.

\begin{proposition}\label{prop:sob_hilb}
\cite[Section~7.2]{leoni2024first}, \cite[Theorem~121]{berlinet2011reproducing}, \cite{wahba1990spline}
For any bounded interval $\Omega \subseteq \mathbb{R}$ and any $r,M \in \mathbb{N}$ with $r\ge 1$, the Sobolev space
\begin{align}
    \hilm{r} := \sobm{r}{2}
\end{align}
is a reproducing kernel Hilbert space.
\end{proposition}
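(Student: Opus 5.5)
The plan is to check the defining property of a reproducing kernel Hilbert space directly. For every $t\in\Omega$ and every coordinate $j\in[M]$, the point-evaluation functional $g\mapsto g_j(t)$ must be continuous on $\hilm{r}$. The Riesz representation theorem then produces the kernel. We first take as known that $\hilm{r}$ is a Hilbert space: the inner product is $\langle g,h\rangle=\sum_{i=0}^r\int_\Omega \langle g^{(i)},h^{(i)}\rangle\,dt$, and completeness follows from completeness of $\lpm{2}$ together with closedness of weak differentiation. Since the norm in \eqref{eq:sob_norm1} splits over the $M$ coordinates, $\hilm{r}$ is isometric to the $M$-fold orthogonal direct sum of the scalar space $\hil{r}$. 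It therefore suffices to treat $M=1$, and then to assemble a matrix-valued kernel $\mathbf{K}(s,t)=k(s,t)\mathbf{I}_M$, with $k$ the scalar kernel. Moreover, because $\|g\|_{\hil{1}}\le\|g\|_{\hil{r}}$ for $r\ge1$, it is enough to bound point evaluations by the $\hil{1}$ norm.

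The key step is the one-dimensional Sobolev embedding $\sup_{t\in\Omega}|g(t)|\le C_\Omega\|g\|_{\hil{1}}$. Let $\Omega=(a,b)$ with $L=b-a<\infty$. We first use the standard fact that a function in $\mathbb{W}^{1,1}$ on an interval has an absolutely continuous representative. For that representative, $g(t)-g(s)=\int_s^t g'(\tau)\,d\tau$ holds for all $s,t\in\Omega$; this is obtained by mollification, or by noting that $g-\int_{a}^{\cdot}g'$ has zero weak derivative and is therefore a.e. constant. Averaging this identity over $s\in\Omega$ gives
\[
|g(t)|\le \frac{1}{L}\int_\Omega|g(s)|\,ds+\int_\Omega|g'(\tau)|\,d\tau\le L^{-1/2}\|g\|_{L^2}+L^{1/2}\|g'\|_{L^2}.
\]
The second inequality is Cauchy--Schwarz. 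Hence $|g(t)|\le \sqrt{2}\max(L^{-1/2},L^{1/2})\,\|g\|_{\hil{r}}$. The right-hand side does not depend on $t$, so every evaluation functional is bounded.

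From here Riesz gives, for each $t$, a unique $k_t\in\hil{r}$ with $g(t)=\langle g,k_t\rangle$. Setting $k(s,t):=k_t(s)$ yields the reproducing kernel. Symmetry and positive semidefiniteness follow from $k(s,t)=\langle k_t,k_s\rangle$. One subtlety must be handled: elements of $\hil{r}$ are equivalence classes, so point evaluation is only meaningful because of the continuous-representative step above. I expect the main obstacle to be proving that step rigorously, namely that the weak derivative integrates back to the function, which needs the lemma that a zero weak derivative implies a constant function. Alternatively it can simply be cited from \cite{leoni2024first} together with the results on kernels in \cite{berlinet2011reproducing}.
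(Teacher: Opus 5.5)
Your proof is correct. It differs from the paper only in that the paper gives no proof at all: it defers entirely to the cited references (Leoni, Berlinet--Thomas-Agnan, Wahba).

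Your self-contained argument works as written. You reduce to $M=1$ through the coordinatewise orthogonal splitting of the norm \eqref{eq:sob_norm1}. You bound point evaluations by the $\hil{1}$ norm via the absolutely continuous representative and averaging in $s$. You then apply Riesz. Your key estimate $|g(t)|\le L^{-1/2}\|g\|_{L^2}+L^{1/2}\|g'\|_{L^2}$ is the $\ell\to\operatorname{Leb}(\Omega)$ case of Theorem~\ref{th:interp_ineq}, which the paper uses later for a different purpose. Your coordinate splitting also justifies the diagonal kernel $\Phi(t,s)=\phi(t,s)I_M$ of Remark~\ref{rem:rep_vec}, which the paper only cites.

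What your route buys is norm-independence. Continuity of point evaluations survives passing to an equivalent Hilbert norm. So the same argument shows that $\hilm{r}$ is an RKHS under \eqref{eq:sob_norm_eq} or any other equivalent norm, which also gives Proposition~\ref{prop:sob_hilb_comp} for free.

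What the citation route buys is explicit kernels, and the paper needs them. The closed-form kernel $\phi(t,s)=1+ts+\phi_0(t,s)$, with $\phi_0$ from Remark~\ref{rem:kernel}, drives the representations in \eqref{eq:final_encoder_form} and \eqref{eq:dec_sol_form}. That kernel is piecewise cubic, so it is the reproducing kernel of an equivalent norm built from boundary data plus $\|g''\|_{L^2}^2$, not of \eqref{eq:sob_norm1}. Your argument covers that norm as well, but only gives existence of the kernel, not its formula.
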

Throughout the paper, we use $\hilm{r}$ to denote the Sobolev space of order $r$. 
As an RKHS, $\hilm{r}$ has an associated positive-definite reproducing kernel~\cite{leoni2024first,adams2003sobolev} 
\begin{align}
\Phi:\Omega\times\Omega \to \mathbb{R}^{M\times M}.
\end{align}
Here, the kernel is matrix-valued because the functions in $\hilm{r}$ take values in $\mathbb{R}^M$. The representer theorem states that, for a broad class of regularized optimization problems over an RKHS, the minimizer can be written as a finite linear combination of kernel evaluations at the design points. We state this result next.

\begin{theorem}[Representer theorem \cite{scholkopf2001generalized,micchelli2005learning}]\label{thm:representer}
Let $\mathcal{H}$ be an RKHS of functions from $\Omega$ to $\mathbb{R}^M$ with reproducing kernel $\Phi:\Omega \times \Omega \to \mathbb{R}^{M \times M}$. Let $t_1,\dots,t_n \in \Omega$ be given design points, and let
\[
    J(g)
    =
    \Psi\big(g(t_1),\dots,g(t_n)\big)
    +
    \lambda\,\Theta\big(\norm{g}_{\mathcal H}\big),
\]
where $\lambda>0$, $\Psi:(\mathbb{R}^M)^n \to \mathbb{R}\cup\{+\infty\}$ is a functional depending only on the values of $g$ at $\{t_i\}_{i=1}^n$, and $\Theta:[0,\infty)\to\mathbb{R}$ is strictly increasing. Then every minimizer  of
\[
    \min_{g\in\mathcal H} J(g)
\]
admits a representation of the form
\begin{align}\label{eq:representer}
    g^*(t)
    =
    \sum_{i=1}^n
    \Phi(t,t_i)\mathbf{w}_i,
\end{align}
for some coefficient vectors $\mathbf{w}_1,\dots,\mathbf{w}_n \in \mathbb{R}^M$.
\end{theorem}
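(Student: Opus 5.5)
We prove the representer theorem by the standard orthogonal-decomposition argument in the vector-valued RKHS $\mathcal H$. The only property of $\Phi$ we need is the reproducing property in its matrix-valued form: for every $t\in\Omega$, $\mathbf{c}\in\mathbb{R}^M$ and $g\in\mathcal H$, the function $\Phi(\cdot,t)\mathbf{c}$ belongs to $\mathcal H$ and
\[
\langle g, \Phi(\cdot,t)\mathbf{c}\rangle_{\mathcal H} = \mathbf{c}^T g(t).
\]
Define the finite-dimensional, hence closed, subspace
\[
\mathcal V := \Big\{ \textstyle\sum_{i=1}^n \Phi(\cdot,t_i)\mathbf{w}_i : \mathbf{w}_i\in\mathbb{R}^M \Big\}\subseteq \mathcal H .
\]
By the projection theorem, every $g\in\mathcal H$ decomposes uniquely as $g = g_{\parallel} + g_{\perp}$ with $g_{\parallel}\in\mathcal V$ and $g_{\perp}\in\mathcal V^{\perp}$.

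The first key step is to show that $g_\perp$ does not affect the data term. For each $i$ and every $\mathbf{c}\in\mathbb{R}^M$, the reproducing property gives
\[
\mathbf{c}^T g_\perp(t_i) = \langle g_\perp, \Phi(\cdot,t_i)\mathbf{c}\rangle_{\mathcal H} = 0 .
\]
Since $\mathbf{c}$ is arbitrary, $g_\perp(t_i)=\mathbf{0}$, and therefore $g(t_i)=g_\parallel(t_i)$ for all $i$. It follows that
\[
\Psi\big(g(t_1),\dots,g(t_n)\big) = \Psi\big(g_\parallel(t_1),\dots,g_\parallel(t_n)\big).
\]

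The second step handles the regularizer. By Pythagoras, $\norm{g}_{\mathcal H}^2 = \norm{g_\parallel}_{\mathcal H}^2 + \norm{g_\perp}_{\mathcal H}^2$, so $\norm{g}_{\mathcal H}\ge \norm{g_\parallel}_{\mathcal H}$, with equality if and only if $g_\perp=0$. Because $\Theta$ is strictly increasing and $\lambda>0$, we get $J(g)\ge J(g_\parallel)$, and the inequality is strict whenever $g_\perp\neq 0$.

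Now let $g^*$ be any minimizer. If $g^*_\perp\neq 0$, then $J(g^*_\parallel)<J(g^*)$, which contradicts minimality. One should check that this remains valid when $\Psi$ takes the value $+\infty$. If $J(g^*)=+\infty$, then every $g$ has $J(g)=+\infty$, and the strict inequality is not available. In that degenerate case the claim as stated is vacuous at best, so we assume $\inf J<\infty$, as is implicit in the statement. Under this assumption $J(g^*)<\infty$ is finite, the strict comparison goes through, and hence $g^*=g^*_\parallel\in\mathcal V$. This is exactly the form \eqref{eq:representer}.

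The only delicate point is the matrix-valued reproducing identity. I would justify it from the definition of a vector-valued RKHS, in which the evaluation functionals $g\mapsto \mathbf{c}^T g(t)$ are continuous and $\Phi(\cdot,t)\mathbf{c}$ is their Riesz representer. Beyond that, the argument is routine.
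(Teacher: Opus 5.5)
Your proposal is correct and is the standard orthogonal-projection argument from the cited sources (Sch\"olkopf et al.; Micchelli--Pontil), which the paper states without reproducing a proof. One small correction: when $J\equiv+\infty$ the stated conclusion is not ``vacuous'' but false whenever $\mathcal{V}\neq\mathcal{H}$, because every $g\in\mathcal{H}$ is then a minimizer. So your extra hypothesis $\inf J<\infty$ is genuinely needed, and it holds in the paper's applications, where $\Psi$ is a finite squared loss.
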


\begin{remark}\label{rem:rep_vec}
When $\mathcal{H}=\hild{2}{M}$, the corresponding matrix-valued reproducing kernel takes the form
\begin{align}
    \Phi(t,s)
    =
    \phi(t,s)\,I_M,
\end{align}
where $\phi$ is the scalar reproducing kernel associated with $\hild{2}{}$, and $I_M$ is the $M\times M$ identity matrix~\cite{aubin2000applied,li2024towards}. Hence, the representation in Theorem~\ref{thm:representer} reduces to
\begin{align}
    g^*(t)
    =
    \sum_{i=1}^n
    \phi(t,t_i)\mathbf{w}_i,
\end{align}
for some coefficient vectors $\mathbf{w}_1,\dots,\mathbf{w}_n \in \mathbb{R}^M$.
\end{remark}

Finally, we introduce two quantities that characterize the spacing of an ordered finite set of points in an interval.

\begin{definition}\label{def:max_dist_st}
Let $\Omega = (a,b) \subset \mathbb{R}$, and let $\mathcal{T} = \{t_1,t_2,\dots,t_J\} \subset \Omega$ be an ordered finite set with $J\ge 2$ and
\[
    a < t_1 < t_2 < \cdots < t_J < b.
\]
Define the augmented sequence by setting $t_0=a$ and $t_{J+1}=b$. The maximum and minimum consecutive distances of $\mathcal{T}$ are defined, respectively, as
\begin{align}
    \delmax{\mathcal{T}}
    &:=
    \max_{j \in \{0,\dots,J\}}
    \left( t_{j+1} - t_j \right), \\
    \delmin{\mathcal{T}}
    &:=
    \min_{j \in \{1,\dots,J-1\}}
    \left( t_{j+1} - t_j \right).
\end{align}
\end{definition}

\begin{example}\label{ex:spacing_points}
For the domain $\Omega=(-1,1)$, two common choices of design points are equidistant points and Chebyshev points. For $J$ equidistant points, we have
\[
    t_j=-1+\frac{2j}{J+1}, \qquad j\in[J].
\]
Then $\delmax{\mathcal{T}}=\delmin{\mathcal{T}}=2/(J+1)$. For $J$ Chebyshev points of the first kind, ordered increasingly, we have
\[
    t_j=-\cos\!\left(\frac{(2j-1)\pi}{2J}\right),
    \qquad j\in[J].
\]
Then one can verify that $\delmax{\mathcal{T}}=\mathcal{O}(J^{-1})$ and $\delmin{\mathcal{T}}=\mathcal{O}(J^{-2})$.
\end{example}

In the remainder of the paper, we restrict both the encoder and decoder to the second-order Sobolev spaces. Specifically, we assume
\[
    \enc \in \hild{2}{d},
    \qquad
    \dec \in \hild{2}{m}.
\]
Equivalently, the encoder and decoder, together with their weak derivatives up to second order, are square-integrable on $\Omega$ (see Definition~\ref{def:sobolev}). This design choice imposes a mild regularity condition while providing powerful theoretical tools for encoder and decoder design and error analysis through the associated RKHS structure.

We further set
\begin{align}
    \Omega = (-1,1)
\end{align}
as the common domain for both the encoder and decoder. This normalization is without loss of generality, since any bounded interval can be mapped to $(-1,1)$ through an affine change of variables. Such a rescaling may change the constants in the theoretical bounds, but it does not affect the convergence rates. Therefore, the encoder design points $\{\alpha_k\}_{k=1}^K$ and decoder design points $\{\beta_n\}_{n=1}^N$ are chosen from $(-1,1)$.

\section{Worst-Case Straggler Setting}\label{sec:worst_case}

In this section, we study \gcc{} under the worst-case straggler model, in which at most $S<N$ worker nodes straggle. The objective is to design the encoder and decoder functions according to the following the worst-case optimization:
\begin{align}\label{eq:obj_opt_worst_case}
    \objwc
    :=
    \inf_{\substack{
        \enc \in \hild{2}{d}
    }}
    \ 
    \max_{|\mathcal{F}| \geq N-S}  \inf_{\substack{
        \dec \in \hild{2}{m}
    }}
    \mathcal{L}( \enc, \dec).
\end{align}
where $\mathcal{F}$ denotes the set of non-straggling workers.

In what follows, we first design the encoder and decoder for \gcc{} in the worst-case straggler setting in Section~\ref{sec:worst_code_design}. In Section~\ref{sec:worst_comp_analysis}, we analyze the master-side computational complexity of the resulting encoder and decoder and compare it with that of existing coded-computing schemes, including Berrut Approximation Coded Computing (BACC)~\cite{jahani2022berrut} and Lagrange Coded Computing (LCC)~\cite{yu2019lagrange}. Finally, in Section~\ref{sec:worst_asymp}, we present an asymptotic analysis of the worst-case end-to-end loss and derive an upper bound on the convergence rate of $\objwc$ as the number of workers $N$ tends to infinity.

\subsection{Code Design}\label{sec:worst_code_design}

In this subsection, we present the encoder and decoder design for the worst-case optimization problem in \eqref{eq:obj_opt_worst_case}. This optimization is highly challenging because it requires jointly optimizing the encoder and decoder over infinite-dimensional function spaces. A central step in our approach is to replace this intractable objective with a carefully constructed upper bound. This bound is not merely a technical relaxation; it reveals the structure of the problem by decomposing the loss into two interpretable, yet coupled, components. 

The first component captures the decoder approximation error, namely the error incurred when reconstructing the target computation from the returned worker outputs. The second component captures the encoder approximation error, namely the error caused when the encoder does not exactly reproduce the original input data at the encoder design points. We then optimize this upper bound over $\enc \in \hild{2}{d}$ and $\dec \in \hild{2}{m}$. To this end, for any non-straggler set $\mathcal{F}\subseteq[N]$, we have
\begin{align}
    \mathcal{L}( \enc, \dec)
    &=
    \frac{1}{K}
    \sum_{k=1}^K
    \left\|
        \fhat_{\mathcal{F}}(\mathbf{x}_k)-f(\mathbf{x}_k)
    \right\|_2^2
    \nonumber\\
    &\overset{\mathrm{(a)}}{=}
    \frac{1}{K}
    \sum_{k=1}^K
    \left\|
        \dec(\alpha_k)-f(\mathbf{x}_k)
    \right\|_2^2
    \nonumber\\
    &=
    \frac{1}{K}
    \sum_{k=1}^K
    \left\|
        \dec(\alpha_k)-f(\enc(\alpha_k))
        +
        f(\enc(\alpha_k))-f(\mathbf{x}_k)
    \right\|_2^2
    \nonumber\\
    &\overset{\mathrm{(b)}}{\leq}
    \frac{2}{K}
    \sum_{k=1}^K
    \left\|
        f(\enc(\alpha_k))-f(\mathbf{x}_k)
    \right\|_2^2
    +
    \frac{2}{K}
    \sum_{k=1}^K
    \left\|
        \dec(\alpha_k)-f(\enc(\alpha_k))
    \right\|_2^2
    \nonumber\\
    &=
    \underbrace{
    \frac{2}{K}
    \sum_{k=1}^K
    \left\|
        f(\enc(\alpha_k))-f(\mathbf{x}_k)
    \right\|_2^2
    }_{\lenc} + 
    \underbrace{
    \frac{2}{K}
    \sum_{k=1}^K
    \left\|
        \dec(\alpha_k)-f(\enc(\alpha_k))
    \right\|_2^2
    }_{\ldec}.
\label{eq:loss_decomposition_fixed_F}
\end{align}
Here, step (a) follows from the definition
$\fhat_{\mathcal{F}}(\mathbf{x}_k)=\dec(\alpha_k)$, and step (b) follows from
\(
    \|\mathbf{u}+\mathbf{v}\|_2^2
    \leq
    2\|\mathbf{u}\|_2^2+2\|\mathbf{v}\|_2^2 
\). 
Thus, we obtain
\begin{align}\label{eq:worst_case_obj_upperbound}
    \inf_{\dec \in \hild{2}{m}}
    \mathcal{L}(\enc,\dec)
    \le
    \lenc
    +
    \inf_{\dec \in \hild{2}{m}}
    \ldec .
\end{align}
Here, the infimum on the right-hand side applies only to $\ldec$, since $\lenc$ is independent of the decoder $\dec$.

\begin{remark}
The terms $\ldec$ and $\lenc$ in \eqref{eq:loss_decomposition_fixed_F} correspond to the two approximation steps in the \gcc{} construction in \eqref{eq:gcc_approx}. The first term, $\ldec$, captures the decoding approximation error associated with
\(
    \dec(\alpha_k) \approx f(\enc(\alpha_k)),
\)
namely, how accurately the decoder estimates the composite map $z\mapsto f(\enc(z))$ at the encoder design points $\{\alpha_k\}_{k=1}^K$. Note that although the decoder is constructed using the returned worker results at the decoder design points $\{\beta_v\}_{v\in\mathcal{F}}$, its accuracy is evaluated at the encoder design points $\{\alpha_k\}_{k=1}^K$. The second term, $\lenc$, captures the encoding approximation error associated with
\(
    f(\enc(\alpha_k)) \approx f(\mathbf{x}_k),
\)
namely, the effect of approximating the original input points $\{\mathbf{x}_k\}_{k=1}^K$ by the encoder values $\{\enc(\alpha_k)\}_{k=1}^K$ after applying the target function $f(\cdot)$.
\end{remark}

\begin{remark}\label{rem:coupling}
The two terms $\ldec$ and $\lenc$ in \eqref{eq:loss_decomposition_fixed_F} are interpretable but not independent. Both depend on the encoder $\enc(\cdot)$. The encoding term depends directly on the discrepancy between $\enc(\alpha_k)$ and $\mathbf{x}_k$, while the decoding term depends on $\enc(\cdot)$ through the composite function $f\circ \enc$ that the decoder must reconstruct.
\end{remark}

As discussed in Remark~\ref{rem:coupling}, the optimization problem in \eqref{eq:obj_opt_worst_case} remains coupled because both $\lenc$ and $\ldec$ depend on $\enc(\cdot)$. A systematic way to address this coupling is to formulate the design as a nested optimization problem. Specifically, for a fixed encoder, we first construct the decoder that minimizes $\ldec$. Substituting this decoder into the upper bound in \eqref{eq:loss_decomposition_fixed_F} yields an objective that depends only on the encoder, which can then be minimized in an outer optimization step.

In the following subsections, we first design the decoder for a given encoder and then design the encoder.

\subsubsection{\bf Decoder Design}\label{sec:worst_decoder_design}
We first design the decoder for a fixed encoder $\enc\in\hild{2}{d}$ and a fixed non-straggler set $\mathcal{F}\subseteq[N]$ satisfying $|\mathcal{F}|\ge N-S$. Ideally, our goal is to minimize
\begin{align}\label{eq:dec_obj_raw}
    \frac{2}{K}
    \sum_{k=1}^K
    \left\|
        \dec(\alpha_k)-f(\enc(\alpha_k))
    \right\|_2^2 .
\end{align}
However, the decoder must be constructed from the available worker outputs,
\[
    \{(\beta_v, f(\enc(\beta_v)))\}_{v\in\mathcal{F}} .
\]
A natural alternative is therefore to choose $\ldec$ as a minimizer of the empirical objective
\[
    \frac{1}{|\mathcal{F}|}
    \sum_{v \in \mathcal{F}}
    \left\|
        u(\beta_v) - f(\enc(\beta_v))
    \right\|_2^2 .
\]
The performance of the resulting scheme, however, is ultimately evaluated at the encoder design points $\{\alpha_k\}_{k=1}^K$, as reflected in \eqref{eq:dec_obj_raw}. Thus, the decoder must approximate the composite map
\[
    z \mapsto f(\enc(z))
\]
at points that are generally \emph{different} from those used in its construction. Consequently, a small approximation error at the available decoder design points $\{\beta_v\}_{v\in\mathcal{F}}$ does not, by itself, guarantee a small error at the target points $\{\alpha_k\}_{k=1}^K$.

This mismatch is analogous to the \emph{generalization gap} in learning theory: a model is trained on observed samples but evaluated on unseen test points. In statistical learning, such gaps are commonly mitigated by regularizing the training objective~\cite{hastie2009elements,evgeniou2000regularization}. This observation motivates us to design the decoder through a regularized objective that balances fidelity to the available worker outputs with a smoothness constraint. The smoothness regularization encourages the decoder to vary in a controlled manner over $\Omega$, which can improve its accuracy at unobserved target points when the underlying composite map is sufficiently smooth.

Accordingly, for the fixed encoder $\enc$ and non-straggler set $\mathcal{F}$, the decoder is defined as
\begin{align}\label{eq:decoder_opt}
\decstar
=
\operatorname*{arg\,min}_{u \in \hild{2}{m}}
\left\{
    \frac{1}{|\mathcal{F}|}
    \sum_{v \in \mathcal{F}}
    \left\|
        u(\beta_v) - f(\enc(\beta_v))
    \right\|_2^2
    +
    \declamb
    \norm{u^{\prime\prime}}^2_{\lpd{2}{m}}
\right\},
\end{align}
where $\declamb\ge 0$ is the decoder smoothing parameter. The first term in
\eqref{eq:decoder_opt} measures the empirical 
MSE over the
available worker outputs, while the second term penalizes the $L^2$ norm of the
second derivative of the decoder.

\begin{remark}[Cubic smoothing spline for $\declamb>0$]
\label{rem:smoothspline_def}
For $\declamb>0$, the optimization problem in \eqref{eq:decoder_opt} has a unique solution. This solution is the cubic smoothing spline associated with the data
\(
    \left\{
        \left(\beta_v,f(\enc(\beta_v))\right)
    \right\}_{v\in\mathcal{F}}
\)
and the smoothing parameter $\declamb$~\cite{wahba1975smoothing,wahba1990spline}. In particular, it is a piecewise cubic polynomial with continuous first and second derivatives. Additional details are provided in Appendix~\ref{app:smoothspline}.
\end{remark}

\begin{remark}[Natural cubic spline for $\declamb=0$]
\label{rem:natural_spline_decoder}
We extend the decoder construction in \eqref{eq:decoder_opt} to the limiting case $\declamb=0$ by taking the limit of the optimal smoothing-spline decoder as $\declamb\to0$. Specifically, the solution $\lim_{\declamb\to0} \decstar$ is defined in the minimum-curvature interpolation sense: among all functions $u \in \hild{2}{m}$ satisfying
\[
    u(\beta_v) = f(\enc(\beta_v)), \qquad v \in \mathcal{F},
\]
we select the interpolant that minimizes the curvature $\|u''\|_{\lpd{2}{m}}^2$. The resulting function is a natural cubic spline interpolating the returned worker outputs.
\end{remark}

Since the squared Euclidean norm is additive across output dimensions, the
optimization in \eqref{eq:decoder_opt} decouples into $m$ scalar problems, one
for each component of the decoder. Specifically, for each $j\in[m]$, the
decoder component $u_j:\Omega\to\mathbb{R}$ is constructed from the scalar
responses $\{f_j(\enc(\beta_v))\}_{v\in\mathcal{F}}$.

Using the Representer Theorem (Theorem~\ref{thm:representer}) for smoothing spline objective in \eqref{eq:decoder_opt}
\cite{wahba1975smoothing,wahba1990spline,scholkopf2001generalized}, each
scalar solution lies in the finite-dimensional space spanned by kernel functions
centered at the available decoder design points, together with the null space of
the second-derivative penalty. For this penalty, the null space is $\operatorname{span}\{1,t\}$.

Formally, let $\mathcal{F}=\{i_1,\ldots,i_{|\mathcal{F}|}\}$ denote the ordered set of
non-straggling worker indices. The decoder therefore admits the representation
\begin{align}\label{eq:dec_sol_form}
    \decstar(t)
    =
    \mathbf{d}_0+\mathbf{d}_1 t
    +
    \sum_{\ell=1}^{|\mathcal{F}|}
    \mathbf{c}_{\ell}\,\phi_0(t,\beta_{i_\ell}),
\end{align}
where $\mathbf{d}_0,\mathbf{d}_1\in\mathbb{R}^m$ determine the linear component,
and $\mathbf{c}_{\ell}\in\mathbb{R}^m$ are coefficient vectors associated with
the returned worker outputs. The function $\phi_0(\cdot,\cdot)$ is the reproducing kernel associated with the space $\hilz{2}$, which is the second-order Sobolev space with compact support (see Definition~\ref{def:sobz} and Appendix~\ref{app:sobolev_props} for details).

\begin{remark}[Explicit kernel form]\label{rem:kernel}
For the second-order Sobolev setting on $\Omega=(-1,1)$, the kernel
$\phi_0(t,s)$ admits the closed-form expression
\begin{align} \label{eq:sobolev_kernel_explicit}
    \phi_0(t,s)
    =
    \begin{cases}
      \displaystyle
      \frac{1}{3}+\frac{t+s}{2}+ts+\frac{ts^2}{2}-\frac{s^3}{6},
      & -1 \le s \le t \le 1, \\[0.7em]
      \displaystyle
      \frac{1}{3}+\frac{t+s}{2}+ts+\frac{st^2}{2}-\frac{t^3}{6},
      & -1 \le t \le s \le 1.
   \end{cases}
\end{align}
\end{remark}

Consequently, the coefficients
$\mathbf{C}=[\mathbf{c}_1,\ldots,\mathbf{c}_{|\mathcal{F}|}]^T
\in\mathbb{R}^{|\mathcal{F}|\times m}$ and
$\mathbf{D}=[\mathbf{d}_0,\mathbf{d}_1]^T\in\mathbb{R}^{2\times m}$ are
computed by solving the standard linear system for cubic smoothing splines
\cite{wahba1975smoothing,wahba1990spline,duchon1977splines}. 

For a fixed encoder, we next derive an upper bound on the worst-case decoder term
\(
    \max_{|\mathcal{F}|\ge N-S}
    \inf_{\dec}
    \ldec(\mathcal{F}),
\)
using the decoder construction in \eqref{eq:dec_sol_form}.

\begin{theorem}[Non-asymptotic upper bound on $\ldec$ in the worst-case setting]
\label{th:ldec_worstcase}
Consider the \gcc{} framework with $N$ worker nodes, and let
$\mathcal{F}\subseteq[N]$ be a non-straggler set satisfying
$|\mathcal{F}|\ge N-S$. Let
$\bm{\beta}:=\{\beta_1,\dots,\beta_N\}$ denote the ordered decoder design
points. Assume that the target function
$f=[f_1,\dots,f_m]^T:\mathbb{R}^d\to\mathbb{R}^m$ is twice differentiable and
satisfies the uniform bounds
\(
    \|\nabla f_j(\mathbf{x})\|_2 \le q_j
\)
and
\(
    \|D^2 f_j(\mathbf{x})\|_{\mathrm{op}} \le \mu_j
\)
for all $\mathbf{x}\in\mathbb{R}^d$ and $j\in[m]$. Let
\(
    q:=(\sum_{j=1}^m q_j^2)^{1/2}
\)
and
\(
    \mu:=(\sum_{j=1}^m \mu_j^2)^{1/2}.
\)
Then
\begin{equation}\label{eq:th_ldec_worstcase}
    \max_{|\mathcal{F}| \geq N-S}  \inf_{\dec}\ldec
    \le
    C_1 \max\{\mu^2,q^2\}
    A_{S,N}^{3/4}
    \left(1+\frac{A_{S,N}}{16}\right)^{1/4}
    \psi\!\left(\|\enc\|_{\hild{2}{d}}^2\right),
\end{equation}
where $C_1>0$ is a constant independent of $N$ and $S$,
$\psi(t):= t + 4t^2$ is a fixed, strictly increasing function, and
\begin{align}\label{eq:const_th2}
    A_{S,N}
    :=
    (S+1)^4\delmax{\bm{\beta}}^4
    +
    N\declamb\,
    (S+1)^3
    \frac{\delmax{\bm{\beta}}^3}{\delmin{\bm{\beta}}^2}.
\end{align}
\end{theorem}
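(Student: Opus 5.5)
We bound $\inf_{\dec}\ldec$ by evaluating the loss at the smoothing-spline decoder $\decstar$ of \eqref{eq:decoder_opt}, and then treat the evaluation at the points $\{\alpha_k\}$ through a sup-norm bound. Write $h:=f\circ\enc:\Omega\to\mathbb{R}^m$. Then
\[
\ldec(\decstar)\le 2\,\|\decstar-h\|_{\lpd{\infty}{m}}^2 .
\]
The plan has three steps: (i) show $h\in\hild{2}{m}$ with an explicit norm bound; (ii) obtain an $L^2$ and an $H^2$ type bound on the spline error $e:=\decstar-h$ that depends only on the largest gap of the surviving points; (iii) pass from these to $L^\infty$ by a Gagliardo--Nirenberg interpolation inequality. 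The exponents $3/4$ and $1/4$ in \eqref{eq:th_ldec_worstcase} are the signature of the inequality $\|e\|_\infty^2\lesssim \|e\|_{L^2}^{3/2}\big(\|e\|_{L^2}^{2}+\|e''\|_{L^2}^{2}\big)^{1/4}$ on a bounded interval.

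\textbf{Step (i).} Use the chain rule componentwise:
\[
h_j' = \nabla f_j(\enc)^T\enc', \qquad h_j''=\enc'^T D^2f_j(\enc)\,\enc'+\nabla f_j(\enc)^T\enc''.
\]
Hence $|h_j''|\le \mu_j\|\enc'\|_2^2+q_j\|\enc''\|_2$. In the one-dimensional setting, $\|\enc'\|_{L^\infty}^2\lesssim\|\enc\|_{\hild{2}{d}}^2$ by the Sobolev embedding. Squaring, integrating and summing over $j$ then gives
\[
\|h''\|_{\lpd{2}{m}}^2\lesssim \max\{\mu^2,q^2\}\big(\|\enc\|^2+\|\enc\|^4\big)\asymp \max\{\mu^2,q^2\}\,\psi(\|\enc\|^2_{\hild{2}{d}}).
\]
Only $h''$ enters the spline error estimates below, so this produces the factor $\max\{\mu^2,q^2\}\psi(\cdot)$ in the theorem.

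\textbf{Step (ii).} Removing at most $S$ of the $\beta$'s creates a gap of at most $(S+1)\delmax{\bm\beta}$, so $\delmax{\bm\beta_{\mathcal F}}\le (S+1)\delmax{\bm\beta}$. Also $\delmin{\bm\beta_{\mathcal F}}\ge\delmin{\bm\beta}$, and $|\mathcal F|\le N$. We then invoke the standard smoothing-spline error bounds from Appendix~\ref{app:smoothspline}. The energy argument rests on two facts: minimality of $\decstar$ gives $\|\decstar''\|\le\|h''\|$ up to the data term, and a Rolle/Peano argument between consecutive nodes controls $\|e\|_{L^2}$ by the gap size times $\|e''\|$ plus the nodal residuals. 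Together these should give
\[
\|e\|_{L^2}^2\lesssim A_{S,N}\,\|h''\|^2,\qquad \|e''\|_{L^2}^2\lesssim \|h''\|^2\big(1+\text{lower order}\big).
\]
The two contributions to $A_{S,N}$ arise as follows. The term $(S+1)^4\delmax{\bm\beta}^4$ is the interpolation error, since a gap of size $(S+1)\delmax{\bm\beta}$ raised to the fourth power bounds the $L^2$ error of a function with bounded second derivative. The term $N\declamb(S+1)^3\delmax{\bm\beta}^3/\delmin{\bm\beta}^2$ comes from the smoothing bias. For that term one bounds the residuals $\sum_v|e(\beta_v)|^2\le|\mathcal F|\declamb\|h''\|^2$ and converts a discrete sum into an integral, which costs the local mesh ratio factors.

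\textbf{Step (iii).} Apply the interpolation inequality $\|e\|_\infty^2\lesssim\|e\|_{L^2}^{3/2}\|e\|_{H^2}^{1/2}$ and insert the bounds from Step (ii). This yields
\[
\|e\|_\infty^2\lesssim \|h''\|^2 A_{S,N}^{3/4}\big(1+A_{S,N}/16\big)^{1/4}.
\]
Here the constant $1/16$ reflects the normalization on $\Omega=(-1,1)$, whose length is $2$. Since the resulting bound does not depend on which $\mathcal F$ with $|\mathcal F|\ge N-S$ occurs, taking the maximum over $\mathcal F$ is immediate.

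The main obstacle is Step (ii), in particular the smoothing term of $A_{S,N}$. Tracking the mesh-ratio dependence $\delmax{}^3/\delmin{}^2$ for a non-uniform, straggler-thinned mesh requires a careful discrete-to-continuous comparison, namely bounding $\int e^2$ by $\delmax{}\sum e(\beta_v)^2$ plus local terms. To handle it, I would first establish a Rolle-type interpolation lemma on each subinterval and only then combine it with the variational inequality for the spline.
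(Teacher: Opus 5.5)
Your proposal is correct, and it reaches the same bound by a genuinely different route from the paper's.

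\textbf{The paper's route.} It works with the first derivative. It shows each component of $e=\decstar-h$ has a zero in $\Omega$; otherwise shifting $\decstar$ by a constant would lower the spline objective. From this it deduces $\|e\|_{L^2}\le\sqrt2\,\|e'\|_{L^2}$. Gagliardo--Nirenberg with an optimized $\ell$ then gives $\|e\|_\infty^2\le 4\|e\|_{L^2}\|e'\|_{L^2}$. Both factors come as a black box from Ragozin's Theorem~4.10 (Lemma~\ref{lem:lit_spline_noiseless}), with $L$ written through $p_2(\delmax{}/\delmin{})$. The reduction $L\lesssim A_{S,N}$ then uses the same gap estimates you use.

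\textbf{Your route.} You interpolate between $L^2$ and $H^2$ instead. So you need only an $L^2$ error bound plus $\|e''\|_{L^2}\le 2\|h''\|_{L^2}$. The latter is free from optimality: compare the objective with $u=h$, and for $\declamb=0$ use the minimum-curvature property. The $\|e'\|$ part of $\|e\|_{H^2}$ is handled by $\|e'\|^2\lesssim\|e\|^2+\|e''\|^2$ on the bounded interval. The specific constant $1/16$ is then immaterial, since $1+cA\le\max\{1,16c\}(1+A/16)$.

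\textbf{Trade-offs.} Your route is self-contained: it needs no zero-of-error lemma and no imported derivative estimate, and it shows where each term of $A_{S,N}$ comes from. The paper's route is shorter but depends on the cited theorem. The price for you is that you must actually prove the $L^2$ lemma, since Appendix~\ref{app:smoothspline} contains no error bounds.

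\textbf{The $L^2$ lemma does go through.}
\begin{enumerate}
\item On each interior surviving interval of length $\Delta$, subtracting the linear interpolant gives $\int e^2\lesssim\Delta^4\|e''\|^2+\Delta\,(e(t_i)^2+e(t_{i+1})^2)$.
\item The two end intervals have length at most $(S+1)\delmax{\bm{\beta}}$, because $\delmax{}$ includes the endpoints. On them, extrapolate the secant of the first (or last) surviving interval. That interval's length is bounded below only by $\delmin{\bm{\beta}}$, and this is exactly where the factor $\delmax{}^3/\delmin{}^2$ enters.
\item Combine with the residual bound $\sum_{v\in\mathcal{F}}\|e(\beta_v)\|_2^2\le|\mathcal{F}|\declamb\|h''\|^2$. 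This gives $\|e\|_{L^2}^2\lesssim A_{S,N}\|h''\|^2$ whenever $|\mathcal{F}|\ge 2$, and your Step~(iii) finishes the proof.
\end{enumerate}
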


\subsubsection{\bf Encoder Design}\label{sec:worst_encoder_design}

For a fixed encoder $u\in\hild{2}{d}$, Theorem~\ref{th:ldec_worstcase} provides an upper bound on the decoding term $\ldec$. On the other hand, since $\|\nabla f_j(\mathbf{x})\|_2\le q_j$ for all $\mathbf{x}\in\mathbb{R}^d$ and $j\in[m]$, and since $q:=(\sum_{j=1}^m q_j^2)^{1/2}$, we have
\begin{align}
    \lenc
    &=
    \frac{2}{K}
    \sum_{k=1}^K
    \left\|
        f(\enc(\alpha_k))-f(\mathbf{x}_k)
    \right\|_2^2
    \nonumber\\
    &\le
    \frac{2q^2}{K}
    \sum_{k=1}^K
    \left\|
        \enc(\alpha_k)-\mathbf{x}_k
    \right\|_2^2,
    \label{eq:lenc_bound}
\end{align}
using the Lipschitz continuity implied by the gradient bounds on $f$. 
Combining \eqref{eq:lenc_bound} with Theorem~\ref{th:ldec_worstcase}, the total loss is upper-bounded as
\begin{align}\label{eq:l_unified_bound}
     \ 
    \max_{|\mathcal{F}| \geq N-S}  \inf_{\substack{
        \dec \in \hild{2}{m}
    }}  \mathcal{L}( \enc, \dec)
    &\lec{(a)}{\le}
     \lenc + 
    \max_{|\mathcal{F}| \geq N-S}  \inf_{\substack{
        \dec \in \hild{2}{m}
    }} \ldec
    \nonumber\\
    &\le
    2q^2\left[
    \frac{1}{K}
    \sum_{k=1}^K
    \left\|
        \enc(\alpha_k)-\mathbf{x}_k
    \right\|_2^2+
    \Gamma_{S,N}
    \psi\!\left(\|\enc\|_{\hild{2}{d}}^2\right)\right],
\end{align}
where (a) follows by the fact that $\lenc$ does not depend on the non-straggler set $\stset$ and 
\begin{align}\label{eq:lamb_enc}
    \Gamma_{S,N}
    :=
    \frac{C_1}{2q^2}\max\{\mu^2,q^2\}
    A_{S,N}^{3/4}
    \left(1+\frac{A_{S,N}}{16}\right)^{1/4}.
\end{align}
We therefore choose the encoder as a minimizer of this upper bound:
\begin{align}\label{eq:enc_opt}
    \enc^*
    =
    \operatorname*{arg\,min}_{u\in\hild{2}{d}}
    \left\{
        \frac{1}{K}
        \sum_{k=1}^K
        \left\|
            u(\alpha_k)-\mathbf{x}_k
        \right\|_2^2
        +
        \Gamma_{S,N}
        \psi\!\left(\|u\|_{\hild{2}{d}}^2\right)
    \right\}.
\end{align}
Using the representer theorem and the  properties of vector-valued Sobolev representation in Remark~\ref{rem:rep_vec}, any minimizer of \eqref{eq:enc_opt}, admits the finite-dimensional representation
\begin{align}\label{eq:final_encoder_form}
    \encstar(t)
    =
    \sum_{k=1}^K
    \phi(t,\alpha_k)\mathbf{z}_k,
\end{align}
for some coefficient vectors $\mathbf{z}_1,\ldots,\mathbf{z}_K\in\mathbb{R}^d$, where $\phi(\cdot,\cdot)$ is the scalar reproducing kernel of $\hild{2}{}$ given by $\phi(t,s)=1+ts+\phi_0(t,s)$, with $\phi_0$ defined in Remark~\ref{rem:kernel}. Thus, the encoder design reduces to a finite-dimensional optimization over the coefficients $\{\mathbf{z}_k\}_{k=1}^K$.

The encoder objective in \eqref{eq:enc_opt} contains the nonlinear term
$\psi(\|u\|_{\hild{2}{d}}^2)$, where $\psi(t)=t+4t^2$. Although this objective admits a finite-dimensional RKHS representation by \eqref{eq:final_encoder_form}, directly calculating the representation coefficients can still be computationally expensive (see Section~\ref{sec:worst_comp_analysis}). The following theorem shows that the encoder can instead be constructed through a cubic smoothing-spline objective without losing the validity of the loss upper bound.

\begin{theorem}\label{th:efficient_encoder}
Under the assumptions of Theorem~\ref{th:ldec_worstcase}, there exist constants $m_1,m_2>0$ and 
\[ 
R_{\mathrm e} = R_{\mathrm e} \left( \{\alpha_k\}_{k=1}^K, \{\mathbf{x}_k\}_{k=1}^K \right) >0, \]
such that 
\begin{align}\label{eq:efficient_encoder_loss_bound}
     \objwc
\le
2q^2 \left[ \frac{1}{K}
\sum_{k=1}^K
\left\|
    \encdagger(\alpha_k)-\mathbf{x}_k
\right\|_2^2
+
\lambda_{\mathrm e}
\left(
    R_{\mathrm e}
    +
    \|(\encdagger)''\|_{\lpd{2}{d}}^2
\right) \right],
\end{align}
where encoder smoothing parameter
\(
    \lambda_{\mathrm e}
    :=
    \frac{\Gamma_{S,N}(m_1+m_2R_{\mathrm e})}{2q^2},
\)
$\Gamma_{S,N}$ is defined in \eqref{eq:lamb_enc}, and  $\encdagger$ is a minimizer of
\begin{align}\label{eq:efficient_encoder_opt}
    \encdagger
    =
    \operatorname*{arg\,min}_{u\in\hild{2}{d}}
    \left\{
        \frac{1}{K}
        \sum_{k=1}^K
        \left\|
            u(\alpha_k)-\mathbf{x}_k
        \right\|_2^2
        +
        \lambda_{\mathrm e}
        \|u''\|_{\lpd{2}{d}}^2
    \right\}.
\end{align}
\end{theorem}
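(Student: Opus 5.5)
The plan is to start from the unified bound \eqref{eq:l_unified_bound}, which holds for every encoder $u\in\hild{2}{d}$. Taking the infimum over $u$ on its left-hand side gives $\objwc$. Evaluating its right-hand side at $u=\encdagger$ gives
\[
\objwc\le 2q^2\Big[\tfrac1K\textstyle\sum_k\|\encdagger(\alpha_k)-\mathbf{x}_k\|_2^2+\Gamma_{S,N}\,\psi\big(\|\encdagger\|_{\hild{2}{d}}^2\big)\Big].
\]
It therefore suffices to show
\[
\Gamma_{S,N}\,\psi\big(\|\encdagger\|_{\hild{2}{d}}^2\big)\le \lambda_{\mathrm e}\big(R_{\mathrm e}+\|(\encdagger)''\|^2_{\lpd{2}{d}}\big),
\]
with $\lambda_{\mathrm e}=\Gamma_{S,N}(m_1+m_2R_{\mathrm e})/(2q^2)$ as in the statement. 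Since $\lambda_{\mathrm e}$ carries the factor $1/(2q^2)$ while the term in the display does not, I would absorb the factor $2q^2$ into $m_1,m_2$. These constants are free, and $q$ is fixed by $f$.

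The key step is to control the full Sobolev norm $\|\encdagger\|^2_{\hild{2}{d}}=\sum_{i=0}^2\|(\encdagger)^{(i)}\|^2_{L^2}$ by the curvature $\|(\encdagger)''\|^2_{L^2}$ plus a data-dependent constant.

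\emph{Step 1: bound the curvature.} Let $I$ be the natural cubic spline interpolant of $\{(\alpha_k,\mathbf{x}_k)\}$, which requires $K\ge 2$. Comparing the objective in \eqref{eq:efficient_encoder_opt} at $\encdagger$ with its value at $I$ gives
\[
\tfrac1K\textstyle\sum_k\|\encdagger(\alpha_k)-\mathbf{x}_k\|^2+\lambda_{\mathrm e}\|(\encdagger)''\|^2\le\lambda_{\mathrm e}\|I''\|^2.
\]
Hence $\|(\encdagger)''\|^2\le\|I''\|^2=:B_0$, which depends only on the data and not on $\lambda_{\mathrm e}$. The data-fit term is bounded by $\lambda_{\mathrm e}B_0$.

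\emph{Step 2: bound the lower-order terms.} I would use a Poincaré-type argument with anchor points. Write $\encdagger=\ell+r$, where $\ell$ is the linear interpolant of $\encdagger$ at $\alpha_1,\alpha_K$. Then $r$ vanishes at two points, so $\|r\|^2_{L^2},\|r'\|^2_{L^2}\le c\,\|(\encdagger)''\|^2$ with $c$ depending on $\alpha_1,\alpha_K$. The values $\encdagger(\alpha_1),\encdagger(\alpha_K)$ are within $\sqrt{K\lambda_{\mathrm e}B_0}$ of $\mathbf{x}_1,\mathbf{x}_K$, which controls $\ell$ and $\ell'$. If I want to avoid the circular dependence on $\lambda_{\mathrm e}$, I would instead use the crude bound from comparing with $u=0$, namely $\tfrac1K\sum_k\|\encdagger(\alpha_k)-\mathbf{x}_k\|^2\le\tfrac1K\sum_k\|\mathbf{x}_k\|^2$. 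This gives $\|\encdagger(\alpha_k)\|\le 2\sqrt{K}\max_k\|\mathbf{x}_k\|$.

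Combining Steps 1 and 2 yields $\|\encdagger\|^2_{\hild{2}{d}}\le a\|(\encdagger)''\|^2+b$ with data-dependent $a,b$. Using $\|(\encdagger)''\|^2\le B_0$ then gives $\|\encdagger\|^2_{\hild{2}{d}}\le T:=aB_0+b$.

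\emph{Step 3: handle the quadratic term in $\psi$.} We have $\psi(t)=t+4t^2\le t(1+4T)$ for $t\le T$, so
\[
\psi\big(\|\encdagger\|^2\big)\le(1+4T)\big(a\|(\encdagger)''\|^2+b\big).
\]
Set $R_{\mathrm e}:=\max\{b,T\}$ or a similar data quantity, and choose $m_1,m_2$ with $m_1+m_2R_{\mathrm e}\ge 2q^2(1+4T)\max\{a,1\}$. This is possible because $T$ is at most linear in $R_{\mathrm e}$ up to constants depending on the data through $R_{\mathrm e}$ itself. Then $\Gamma_{S,N}\psi\le\lambda_{\mathrm e}(R_{\mathrm e}+\|(\encdagger)''\|^2)$, which concludes the proof.

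The main obstacle is Step 2 with Step 3: obtaining a bound on $\|\encdagger\|_{\hild{2}{d}}$ that is uniform in $\lambda_{\mathrm e}$. The minimizer depends on $\lambda_{\mathrm e}$, which in turn is defined through $R_{\mathrm e}$, so the argument must avoid circularity. I would resolve this by relying only on the $\lambda_{\mathrm e}$-free comparisons (against $u=0$ and against the interpolant $I$). Those comparisons keep $a,b,T$, and hence $R_{\mathrm e}$, functions of $\{\alpha_k\},\{\mathbf{x}_k\}$ alone, as the statement requires. Some bookkeeping is also needed so that the polynomial-in-$R_{\mathrm e}$ factor $(1+4T)$ fits the affine form $m_1+m_2R_{\mathrm e}$. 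If it does not fit directly, I would redefine $R_{\mathrm e}$ as a larger data-dependent quantity, such as $T(1+4T)$, so that the affine form absorbs the factor.
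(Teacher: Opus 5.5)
Your proposal is correct and shares the paper's overall structure. You evaluate the surrogate \eqref{eq:l_unified_bound} at $\encdagger$ and prove a bound $\|\encdagger\|_{\hild{2}{d}}^2\le R_{\mathrm e}$ that is uniform in the smoothing parameter, which is what removes the circularity you identify. You then linearize $\psi(t)\le t(1+4R_{\mathrm e})$. The two routes differ only in how the uniform bound is obtained.

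The paper (Lemma~\ref{lem:efficient_encoder_uniform_bound}) uses the structure of the minimizer:
\begin{itemize}
\item it lies in the span of the natural-spline cardinal functions $\ell_i$;
\item its nodal values solve a system $(I_K+K\lambda\mathbf{G})\mathbf{y}=\mathbf{x}$, where $\mathbf{G}$ is the positive semidefinite Gram matrix of the $\ell_i''$, so they are contracted: $\sum_k\|\encdagger(\alpha_k)\|_2^2\le\sum_k\|\mathbf{x}_k\|_2^2$;
\item the triangle inequality then gives the explicit factorized constant $R_{\mathrm e}=(\sum_i\|\ell_i\|_{H^2}^2)(\sum_k\|\mathbf{x}_k\|_2^2)$.
\end{itemize}
You argue purely variationally instead. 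Comparison with the interpolant bounds the curvature, comparison with $u=0$ bounds the nodal values, and a two-anchor Poincar\'e argument bounds the lower-order terms. This never uses the spline form or the linear-smoother representation of the minimizer, so it is more robust to changing the estimator, at the price of cruder constants.

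Three further remarks:
\begin{itemize}
\item Absorbing $2q^2$ into $m_1,m_2$ (e.g.\ $m_1=2q^2$, $m_2=8q^2$) is actually needed to match the statement as written. The paper's proof takes $m_1=1$, $m_2=4$ with $\lambda_{\mathrm e}=\Gamma_{S,N}(m_1+m_2R_{\mathrm e})$, which omits the factor $1/(2q^2)$ in the theorem.
\item Your Step~3 bookkeeping can be collapsed. Once $t=\|\encdagger\|^2_{\hild{2}{d}}\le T$, set $R_{\mathrm e}:=T$ and use $\psi(t)\le T(1+4T)\le(1+4R_{\mathrm e})(R_{\mathrm e}+\|(\encdagger)''\|^2_{\lpd{2}{d}})$, as the paper does. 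The constants $a,b$, the factor $\max\{a,1\}$, and the hedge about redefining $R_{\mathrm e}$ are then unnecessary.
\item The requirement $K\ge 2$ is needed in your Step~2, not for the interpolant. For $K=1$, every affine function through $(\alpha_1,\mathbf{x}_1)$ is a minimizer, so no norm bound holds for an arbitrary minimizer. The paper's $\ell_1$ is likewise not unique in that case.
\end{itemize}
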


Based on Theorem~\ref{th:efficient_encoder}, $\encdagger$ is a vector-valued cubic smoothing spline and admits the representation
\begin{align}\label{eq:enc_final_rep}
    \encdagger(t)
    =
    \mathbf{a}_0+\mathbf{a}_1t
    +
    \sum_{k=1}^K
    \mathbf{b}_k\,\phi_0(t,\alpha_k),
\end{align}
where $\mathbf{a}_0,\mathbf{a}_1,\{\mathbf{b}_i\}^K_{i=1}\in\mathbb{R}^d$, and $\phi_0$ is defined in Remark~\ref{rem:kernel}. Therefore, the coefficients are computed efficiently by solving the standard linear system for cubic smoothing splines with smoothing parameter $\lambda_{\mathrm e}$.

\subsubsection{\gcc{} Scheme}
In the previous two subsections, we described the encoder and decoder designs, the intuition behind them, and the techniques used to bound their errors. These constructions lead to the practical \gcc{} scheme summarized in Algorithm~\ref{alg:gcc}. Recall that the optimization problems in \eqref{eq:decoder_opt} and \eqref{eq:efficient_encoder_opt}, corresponding to Steps~1 and~4 of the algorithm, admit the special structure given by the representer theorem stated in Theorem~\ref{thm:representer}. Moreover, the corresponding coefficients can be computed efficiently, as discussed in the next subsection.

The algorithm has two smoothing parameters: the encoder smoothing parameter $\lambda_{\mathrm e}$ and the decoder smoothing parameter $\declamb$. The decoder parameter $\declamb$ is a regularization parameter in the smoothing-spline decoder, while the encoder parameter $\lambda_{\mathrm e}$ arises from the upper-bound-based encoder design and incorporates several constants from the analysis. In practice, these constants depend on problem-specific quantities that are not directly available, and therefore $\lambda_{\mathrm e}$ and $\declamb$ should be viewed as two scalar knobs of the algorithm that can be tuned. Even one can choose limiting choice $\lambda_{\mathrm e}\to0$ and $\declamb\to0$, which reduces the encoder and decoder constructions to their minimum-curvature interpolation forms, namely natural cubic splines, as discussed in Remark~\ref{rem:natural_spline_decoder}. As shown in the experimental results in Section~\ref{sec:exp_result}, this hyperparameter-free choice already outperforms state-of-the-art coded-computing schemes.

\begin{remark}[The encoder and decoder are not trained]
The proposed code design is motivated by ideas from learning theory, particularly the notion of a generalization gap. However, the encoder and decoder are not obtained through an iterative training process. Instead, they have prescribed functional forms whose parameters can be computed efficiently, as described in Algorithm~\ref{alg:gcc} and further detailed in Subsection~\ref{sec:worst_comp_analysis}.
\end{remark}

\begin{algorithm}[t]
\caption{General Coded Computing (\gcc{})}
\label{alg:gcc}
\begin{algorithmic}[1]
\Require Input batch $\{\mathbf{x}_k\}_{k=1}^K$, target function $f(\cdot)$, encoder and decoder design points $\{\alpha_k\}_{k=1}^K$ and $\{\beta_n\}_{n=1}^N$, smoothing parameters $\lambda_{\mathrm e}$ and $\declamb$.
\Ensure Estimates $\{\fhat_{\mathcal{F}}(\mathbf{x}_k)\}_{k=1}^K$.
\State The master node constructs the encoder $\encdagger$ by solving the smoothing-spline optimization 
\[
    \encdagger
=
    \operatorname*{arg\,min}_{u\in\hild{2}{d}}
    \left\{
        \frac{1}{K}
        \sum_{k=1}^K
        \|u(\alpha_k)-\mathbf{x}_k\|_2^2
        +
        \lambda_{\mathrm e}
        \|u''\|_{\lpd{2}{d}}^2
    \right\},
\]
which has a closed-form solution given in \eqref{eq:final_encoder_form}.
\State The master nodes computes and sends coded inputs $\widetilde{\mathbf{x}}_n=\encdagger(\beta_n)$ to worker $n\in[N]$.

\State Each worker node $n \in [N]$ computes and returns $f(\widetilde{\mathbf{x}}_n)=f(\encdagger(\beta_n))$. 

\State  Let $\mathcal{F}$ be the set of non-stragglers. 
The master node constructs the decoder $\decstar$ from $\{(\beta_v,f(\encdagger(\beta_v)))\}_{v\in\mathcal{F}}$ by solving
\[
    \decstar
    =
    \operatorname*{arg\,min}_{u\in\hild{2}{m}}
    \left\{
        \frac{1}{|\mathcal{F}|}
        \sum_{v\in\mathcal{F}}
        \|u(\beta_v)-f(\encdagger(\beta_v))\|_2^2
        +
        \declamb
        \|u''\|_{\lpd{2}{m}}^2
    \right\},
\]
with a closed-form solution given in \eqref{eq:dec_sol_form}.
\State The master nodes outputs $\fhat_{\mathcal{F}}(\mathbf{x}_k)=\decstar(\alpha_k)$ for all $k\in[K]$.

\end{algorithmic}
\end{algorithm}

\subsection{Computational Analysis}\label{sec:worst_comp_analysis}

A key practical requirement for any coded-computing approach is that the master-side computational overhead remain small. This overhead includes constructing the coded inputs during encoding and aggregating the returned worker outputs during decoding. If the master spends as much time on these operations as it would take to compute $\{f(\mathbf{x}_k)\}_{k=1}^K$ locally, then the distributed framework loses its practical advantage.

In the following analysis, we divide the master-side overhead into two parts: encoder complexity and decoder complexity. The encoder complexity includes (i) constructing the encoder and (ii) evaluating the encoder at the decoder points to generate the coded inputs. The decoder complexity includes (iii) constructing the decoder from the returned worker outputs and (iv) evaluating the decoder at the encoder points to recover the estimates.

\subsubsection{\bf Encoder Complexity}
A useful distinction is between the encoder obtained from the nonlinear surrogate in \eqref{eq:enc_opt} and the efficient encoder in Theorem~\ref{th:efficient_encoder}. The nonlinear encoder admits the finite-dimensional kernel representation in \eqref{eq:final_encoder_form}. However, its coefficients do not generally admit a closed-form expression because of the nonlinear regularization term $\psi(\cdot)$. Since the objective in \eqref{eq:enc_opt} is smooth and convex, standard iterative solvers, such as limited-memory Broyden--Fletcher--Goldfarb--Shanno (L-BFGS)~\cite{nocedal2006numerical}, are used to compute the coefficient vectors. Formally, let
\(
    \mathbf{K}_{\alpha}
    :=
    \big[\phi(\alpha_i,\alpha_j)\big]_{i,j=1}^K
    \in\mathbb{R}^{K\times K},
\)
and
\(
    \mathbf{Z}
    :=
    [\mathbf{z}_1,\ldots,\mathbf{z}_K]^T
    \in\mathbb{R}^{K\times d}.
\)
Then the encoder values at the encoder design points are given by $\mathbf{K}_{\alpha}\mathbf{Z}$, and the RKHS norm reduces to the quadratic form
\(
    \|\enc\|_{\hild{2}{d}}^2
    =
    \sum_{i=1}^K
    \sum_{j=1}^K
    \mathbf{z}_i^\top
    \phi(\alpha_i,\alpha_j)
    \mathbf{z}_j
    =
    \operatorname{tr}(\mathbf{Z}^T\mathbf{K}_{\alpha}\mathbf{Z}).
\)
Thus, evaluating the objective in \eqref{eq:enc_opt} and its gradient requires matrix operations involving $\mathbf{K}_{\alpha}\mathbf{Z}$, which cost $\mathcal{O}(K^2d)$ per iteration. If the iterative solver uses $I$ iterations, the coefficient-computation cost is $\mathcal{O}(IK^2d)$. After the coefficients are computed, direct evaluation of the nonlinear encoder at the $N$ decoder design points requires computing
\(
    \widetilde{\mathbf{X}}
    =
    \mathbf{\Phi}_{\beta,\alpha}\mathbf{Z},
\)
where
\(
    \mathbf{\Phi}_{\beta,\alpha}
    :=
    \big[\phi(\beta_n,\alpha_k)\big]_{n,k}
    \in\mathbb{R}^{N\times K}.
\)
This direct kernel evaluation costs $\mathcal{O}(NKd)$. Therefore, the direct implementation of the nonlinear encoder incurs
\[
    \mathcal{O}(IK^2d+NKd)
\]
master-side encoding overhead.

In contrast, the efficient encoder in \eqref{eq:efficient_encoder_opt} is a vector-valued cubic smoothing spline. Using a B-spline basis, the corresponding linear system is banded and is solved in time linear in the number of design points~\cite{eilers1996flexible,de2001calculation}. Therefore, the efficient encoder is constructed in $\mathcal{O}(Kd)$ operations and evaluated at the $N$ decoder design points in $\mathcal{O}(Nd)$ operations. Hence, the total efficient encoding overhead is
\(
    \mathcal{O}((K+N)d),
\)
which becomes $\mathcal{O}(Nd)$ in the common regime where $N\ge K$.

\subsubsection{\bf Decoder Complexity} The decoder has the same smoothing-spline structure. Constructing the decoder from points
\(
    \{(\beta_v,f(\enc(\beta_v)))\}_{v\in\mathcal{F}}
\)
requires $\mathcal{O}(|\mathcal{F}|\cdot m)$ operations using the same B-spline implementation, and evaluating the decoder at the $K$ encoder design points requires $\mathcal{O}(Km)$ operations. Thus, the total decoding overhead is
\(
    \mathcal{O}((|\mathcal{F}|+K)m),
\)
which becomes $\mathcal{O}(|\mathcal{F}|\cdot m)$ when $|\mathcal{F}|\ge K$. Overall, with the efficient encoder, the master-side complexity of \gcc{} is
\begin{align}\label{eq:gcc_overhead}
    \mathcal{O}((K+N)d)
    +
    \mathcal{O}((|\mathcal{F}|+K)m).
\end{align}
Thus, \gcc{} scales linearly with the input dimension $d$, the output dimension $m$, the number of workers, and the number of input data points.

\subsubsection{\bf Comparison with existing schemes}
Lagrange Coded Computing (LCC) relies on polynomial interpolation and fast multipoint evaluation~\cite{yu2019lagrange,borodin1974fast}. Its encoding and decoding computational complexities scale as
\(
    \mathcal{O}\!\left(N\log^2K\log\log K\cdot d\right)
\)
and
\(
    \mathcal{O}\!\left(|\mathcal{F}|\log^2|\mathcal{F}|\log\log |\mathcal{F}|\cdot m\right),
\)
respectively. Therefore, LCC has super-linear dependence on the number of input data points and workers. 

Berrut Approximation Coded Computing (BACC) uses barycentric rational interpolation~\cite{jahani2022berrut}. With direct evaluation, its encoding and decoding computational costs scale as $\mathcal{O}(NKd)$ and $\mathcal{O}(K|\mathcal{F}|\cdot m)$, respectively. 

Comparing the master-side computational overhead of \gcc{} in \eqref{eq:gcc_overhead} with those of LCC and BACC shows that \gcc{} has lower encoding and decoding overhead.

\subsection{Asymptotic Analysis}\label{sec:worst_asymp}

In this section, we analyze the asymptotic dependence of the worst-case loss on the number of workers $N$ and the straggler budget $S$. The non-asymptotic bounds derived in the Section~\ref{sec:worst_comp_analysis} depend on the geometry of the decoder design points through $\delmax{\bm{\beta}}$ and $\delmin{\bm{\beta}}$. To state the asymptotic results we impose the following assumption.

\begin{assumption}[Decoder design point spacing] \label{ass:decoder_spacing}
 There exist constants $B_1,B_2,\gamma_1,\gamma_2>0$, independent of $N$ and $S$, such that the ordered decoder design points $\bm{\beta}=\{\beta_1,\ldots,\beta_N\}$ satisfy
\[
    \frac{\delmax{\bm{\beta}}}{\delmin{\bm{\beta}}}
    \le
    B_1N^{\gamma_1},
    \qquad
    \delmax{\bm{\beta}}
    \le
    B_2N^{-\gamma_2}.
\]
\end{assumption}

\begin{remark}
The spacing parameters in Assumption~\ref{ass:decoder_spacing} are mild and hold for the standard choices used in coded computing. For equidistant design points, one may take $\gamma_1=0$ and $\gamma_2=1$. For Chebyshev design points of the first or second kind, the maximum gap satisfies $\delmax{\bm{\beta}}=\mathcal{O}(N^{-1})$, while the ratio satisfies $\delmax{\bm{\beta}}/\delmin{\bm{\beta}} = \mathcal{O}(N)$; hence one may take $\gamma_1=1$ and $\gamma_2=1$.
\end{remark}

\begin{theorem}[Worst-case asymptotic bound for $\ldec$]
\label{th:ldec_worst_asymp}
Under the assumptions of Theorem~\ref{th:ldec_worstcase} and Assumption~\ref{ass:decoder_spacing}, suppose that 
\[
    \declamb
    \le
    N^{-(2\gamma_1+3\gamma_2+1)}.
\]
Then, for every non-straggler set $\mathcal{F}\subseteq[N]$ with $|\mathcal{F}|\ge N-S$, the decoder $\decstar$ defined by \eqref{eq:decoder_opt} satisfies
\begin{align}\label{eq:ldec_worst_asymp}
        \max_{|\mathcal{F}| \geq N-S}  \inf_{\dec}\ldec
    \le
    C_2
    \max\{\mu^2,q^2\}
    \left(
        \frac{S+1}{N^{\gamma_2}}
    \right)^3
    \left[
        1
        +
        \left(
            \frac{S+1}{N^{\gamma_2}}
        \right)^4
    \right]^{1/4}
    \psi\!\left(\|\enc\|_{\hild{2}{d}}^2\right),
\end{align}
where $\psi(t)=t+4t^2$, and $C_2>0$ is a constant independent of $N$, $S$, and $\mathcal{F}$.
\end{theorem}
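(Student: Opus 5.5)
This is a direct specialization of Theorem~\ref{th:ldec_worstcase}. We plug the spacing bounds of Assumption~\ref{ass:decoder_spacing} and the condition on $\declamb$ into the quantity $A_{S,N}$ in \eqref{eq:const_th2}, show that $A_{S,N}$ is of order $((S+1)/N^{\gamma_2})^4$, and then simplify the resulting expression.

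\textbf{Bounding the two terms of $A_{S,N}$.} For the first term, Assumption~\ref{ass:decoder_spacing} gives
\[
(S+1)^4\delmax{\bm{\beta}}^4\le B_2^4\left(\frac{S+1}{N^{\gamma_2}}\right)^4 .
\]
For the second term, write $\delmax{\bm{\beta}}^3/\delmin{\bm{\beta}}^2=(\delmax{\bm{\beta}}/\delmin{\bm{\beta}})^2\,\delmax{\bm{\beta}}$. This is at most $B_1^2B_2N^{2\gamma_1-\gamma_2}$. Multiplying by $N\declamb\le N^{-2\gamma_1-3\gamma_2}$ gives
\[
N\declamb(S+1)^3\frac{\delmax{\bm{\beta}}^3}{\delmin{\bm{\beta}}^2}
\le B_1^2B_2\,\frac{(S+1)^3}{N^{4\gamma_2}}
\le B_1^2B_2\left(\frac{S+1}{N^{\gamma_2}}\right)^4,
\]
where the last step uses $S+1\ge1$. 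The exponent in the hypothesis on $\declamb$ is chosen exactly so that the powers of $N$ combine this way. Setting $x:=(S+1)/N^{\gamma_2}$, we obtain $A_{S,N}\le c\,x^4$ with $c:=B_2^4+B_1^2B_2$, independent of $N$ and $S$.

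\textbf{Substituting into Theorem~\ref{th:ldec_worstcase}.} The map $A\mapsto A^{3/4}(1+A/16)^{1/4}$ is increasing, so
\[
A_{S,N}^{3/4}\left(1+\frac{A_{S,N}}{16}\right)^{1/4}\le c^{3/4}x^3\left(1+\frac{c}{16}x^4\right)^{1/4}\le c^{3/4}\max\{1,c/16\}^{1/4}\,x^3(1+x^4)^{1/4}.
\]
Inserting this into \eqref{eq:th_ldec_worstcase} gives \eqref{eq:ldec_worst_asymp} with $C_2:=C_1c^{3/4}\max\{1,c/16\}^{1/4}$. This constant is independent of $N$, $S$ and $\mathcal{F}$, since $C_1$ is, and the bound already holds for the maximum over $\mathcal{F}$.

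\textbf{Main obstacle.} The only delicate point is the second term of $A_{S,N}$. The factor $N$, together with the ratio $\delmax{\bm{\beta}}/\delmin{\bm{\beta}}$, must be absorbed by the assumed decay of $\declamb$, and we need $(S+1)^3\le(S+1)^4$ to merge it with the first term. Everything else is a monotonicity argument and collection of constants.
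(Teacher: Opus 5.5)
Your proposal is correct and follows the paper's proof essentially step by step. You bound both terms of $A_{S,N}$ using Assumption~\ref{ass:decoder_spacing} and the condition on $\declamb$, merge them via $(S+1)^3\le(S+1)^4$, and substitute into Theorem~\ref{th:ldec_worstcase}, absorbing the constants into $C_2$. Your bookkeeping is slightly cleaner than the paper's: you take $c=B_2^4+B_1^2B_2$, whereas the paper's $\max\{B_1^2B_2,B_2^4\}$ drops a harmless factor of $2$. You also state explicitly the monotonicity of $A\mapsto A^{3/4}(1+A/16)^{1/4}$ that the substitution relies on.
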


\begin{corollary}[Worst-case convergence rate]
\label{cor:worst_convergence}
Under the assumptions of Theorem~\ref{th:ldec_worst_asymp}, suppose that the decoder is constructed according to \eqref{eq:decoder_opt}, and that the encoder $\encstar$ is designed either according to \eqref{eq:enc_opt} or using the efficient encoder construction in Theorem~\ref{th:efficient_encoder}. If $S=o(N^{\gamma_2})$, then, for all sufficiently large $N$, the worst-case end-to-end loss satisfies
\begin{align}\label{eq:worst_total_convergence}
    \objwc
    =
    \mathcal{O}\!\left(
        \frac{(S+1)^3}{N^{3\gamma_2}}
    \right).
\end{align}
\end{corollary}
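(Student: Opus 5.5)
The plan is to bound $\objwc$ by evaluating the upper bound \eqref{eq:l_unified_bound} at one fixed, $N$-independent comparison encoder. The remaining work is then to show that the decoder term carries the rate $(S+1)^3N^{-3\gamma_2}$ while the encoder term is of no larger order.

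First, by definition \eqref{eq:obj_opt_worst_case} and \eqref{eq:l_unified_bound}, for every $u\in\hild{2}{d}$ we have
\[
\objwc\le \frac{2q^2}{K}\sum_{k=1}^K\|u(\alpha_k)-\mathbf{x}_k\|_2^2+\lenc\text{-free term}\;\; 2q^2\,\Gamma_{S,N}\,\psi\!\left(\|u\|_{\hild{2}{d}}^2\right),
\]
where the first summand is the encoder term and the second comes from the decoder bound. We take $u=u_{\mathrm{int}}$ to be the natural cubic spline interpolant of $\{(\alpha_k,\mathbf{x}_k)\}_{k=1}^K$. It lies in $\hild{2}{d}$ because it is piecewise cubic, $C^2$, and linear outside $[\alpha_1,\alpha_K]$. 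It satisfies $u_{\mathrm{int}}(\alpha_k)=\mathbf{x}_k$, so the encoder term vanishes. Its norm $R:=\|u_{\mathrm{int}}\|^2_{\hild{2}{d}}$ depends only on the data and the $\alpha_k$, not on $N$ or $S$.

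Since $\encstar$ minimizes \eqref{eq:enc_opt}, the surrogate value at $\encstar$ is at most the value at $u_{\mathrm{int}}$, which is $\Gamma_{S,N}\psi(R)$. The same bound therefore holds for the encoder of \eqref{eq:enc_opt}. For the efficient encoder, we use Theorem~\ref{th:efficient_encoder} together with the minimality of $\encdagger$ in \eqref{eq:efficient_encoder_opt}, again comparing against $u_{\mathrm{int}}$. This gives
\[
\objwc\le 2q^2\lambda_{\mathrm e}\left(R_{\mathrm e}+2\|u_{\mathrm{int}}''\|^2\right),
\]
and $\lambda_{\mathrm e}$ is proportional to $\Gamma_{S,N}$ with $N$-independent factors. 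In both cases $\objwc\le C\,\Gamma_{S,N}$ for a constant $C$ independent of $N$ and $S$.

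Next we bound $\Gamma_{S,N}$. Theorem~\ref{th:ldec_worst_asymp} was obtained from Theorem~\ref{th:ldec_worstcase} by controlling $A_{S,N}$, so we redo that step directly on $\Gamma_{S,N}$. Under Assumption~\ref{ass:decoder_spacing}, the first part of $A_{S,N}$ satisfies $(S+1)^4\delmax{\bm\beta}^4\le B_2^4(S+1)^4N^{-4\gamma_2}$. For the second part, write $\delmax{}^3/\delmin{}^2=(\delmax{}/\delmin{})^2\,\delmax{}\le B_1^2B_2N^{2\gamma_1-\gamma_2}$. Multiplying by $N\declamb(S+1)^3$ and using $\declamb\le N^{-(2\gamma_1+3\gamma_2+1)}$ bounds it by $B_1^2B_2(S+1)^3N^{-4\gamma_2}$, which is at most a constant times $(S+1)^4N^{-4\gamma_2}$. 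Hence $A_{S,N}\le c\,x^4$ with $x:=(S+1)/N^{\gamma_2}$.

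Then $A_{S,N}^{3/4}(1+A_{S,N}/16)^{1/4}\le c'x^3(1+x^4)^{1/4}$. When $S=o(N^{\gamma_2})$ we have $x\to0$, so for all sufficiently large $N$ we have $x\le1$ and the bracket is at most $2^{1/4}$. This yields $\Gamma_{S,N}=\mathcal{O}((S+1)^3N^{-3\gamma_2})$ and hence \eqref{eq:worst_total_convergence}.

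The main obstacle is the $N$-independence of the constants attached to the encoder. For the efficient encoder we need $R_{\mathrm e}$, $m_1$ and $m_2$ to be independent of $N$ and $S$. The statement of Theorem~\ref{th:efficient_encoder} indicates that $R_{\mathrm e}$ depends only on the $\alpha_k$ and $\mathbf{x}_k$, and I would check this against its proof. In the same spirit, we should confirm that $\psi(R)$ for the comparison interpolant does not grow with $N$. This holds because $u_{\mathrm{int}}$ is built from the $K$ encoder points alone.
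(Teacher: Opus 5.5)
Your proposal is correct and follows essentially the same route as the paper. Both arguments compare either encoder design against the $N$-independent natural cubic spline interpolant of $\{(\alpha_k,\mathbf{x}_k)\}_{k=1}^K$, which has zero fitting error, so that $J_{\textrm{wc}}$ is bounded by an $N$-independent constant times $\Gamma_{S,N}$ (respectively $\lambda_{\mathrm e}$), and then bound $A_{S,N}\le C(S+1)^4N^{-4\gamma_2}$ from Assumption~\ref{ass:decoder_spacing} and the condition on $\declamb$, exactly as in the proof of Theorem~\ref{th:ldec_worst_asymp}; the only differences are that you redo that estimate directly on $\Gamma_{S,N}$ instead of citing the theorem, and your extra factor $2$ in front of $\|u_{\mathrm{int}}''\|^2$ is harmless slack.
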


\begin{remark}\label{rem:worst_convergence}
Corollary~\ref{cor:worst_convergence} shows that the worst-case end-to-end loss converges to zero whenever $S=o(N^{\gamma_2})$. For the standard equidistant and Chebyshev design points discussed in Example~\ref{ex:spacing_points}, we have $\gamma_2=1$, and the asymptotic rate is $\mathcal{O}(S^3N^{-3})$. Thus, in these common settings, the end-to-end loss converges to zero as long as $S=o(N)$. This behavior contrasts with traditional exact coded-computing schemes, which typically rely on a recovery threshold: if the number of non-straggling workers falls below this threshold, the decoder may fail to recover the desired computation. In contrast, \gcc{} produces an approximation from any valid non-straggler set, with an error that degrades smoothly as the number of stragglers increases.
\end{remark}

\section{Probabilistic Straggler Setting}\label{sec:prob_straggler}

Next, we study \gcc{} under a probabilistic straggler setting, in which each worker node independently behaves as a straggler with probability $p\in(0,1)$. Unlike the worst-case setting, we do not impose a deterministic budget on the number of stragglers. Instead, the set of non-straggling workers is random.

Let $\mathsf{F}_{p,N}$ denote the distribution over subsets of $[N]$ induced by independent worker failures with probability $p$. Similar to the worst-case setting, the objective is to design the encoder and decoder so as to minimize the mean-squared error. Here, however, the loss is averaged over the probabilistic straggler distribution. Thus, the probabilistic design objective is
\begin{align}\label{eq:obj_opt_prob}
    \objp
    :=\inf_{\enc\in \hild{2}{d}}\ 
        \mathbb{E}_{\mathcal{F} \sim \mathsf{F}_{p,N}}
        \big[ \inf_{\dec \in \hild{2}{m}}
            \mathcal{L}( \enc, \dec)
        \big],
\end{align}

The analysis follows the same overall structure as the worst-case setting. We first derive an upper bound on the expected end-to-end loss and use it to construct the encoder and decoder (Section~\ref{sec:prob_code_design}). We then discuss the computational complexity of the resulting encoding and decoding procedures (Section~\ref{sec:prob_comp_analysis}). Finally, we characterize the asymptotic behavior of the expected loss as the number of workers $N$ increases (Section~\ref{sec:prob_asymp}).

\subsection{Code Design}\label{sec:prob_code_design}

The encoder and decoder design in the probabilistic straggler setting are the same as in the worst-case setting once the non-straggler set $\mathcal{F}$ is realized. The difference is in the performance criterion: instead of taking the maximum over all non-straggler sets satisfying $|\mathcal{F}|\ge N-S$, we take expectation over $\mathcal{F}\sim\mathsf{F}_{p,N}$.

For any realized non-straggler set $\mathcal{F}\subseteq[N]$, the loss decomposition in \eqref{eq:loss_decomposition_fixed_F} gives
\begin{align}\label{eq:prob_loss_decomposition_fixed_F}
    \mathcal{L}(\enc, \dec)
    \le
    \lenc + 
    \ldec(\mathcal{F}),
\end{align}
where
\(
    \ldec(\mathcal{F})
    :=
    \frac{2}{K}
    \sum_{k=1}^K
    \left\|
        \dec(\alpha_k)-f(\enc(\alpha_k))
    \right\|_2^2
\)
and
\(
    \lenc
    :=
    \frac{2}{K}
    \sum_{k=1}^K
    \left\|
        f(\enc(\alpha_k))-f(\mathbf{x}_k)
    \right\|_2^2 .
\)
Here, the notation $\ldec(\mathcal{F})$ is used to emphasize that the decoder is constructed from the worker outputs indexed by the realized non-straggler set $\mathcal{F}$. Since $\lenc$ is independent of $\mathcal{F}$ and $\dec$, taking the infimum over the decoder and then expectation over $\mathcal{F}\sim\mathsf{F}_{p,N}$ gives
\begin{align}\label{eq:prob_loss_decomposition}
    \mathbb{E}_{\mathcal{F}\sim\mathsf{F}_{p,N}}
    \!\left[
        \inf_{\dec\in\hild{2}{m}}
        \mathcal{L}(\fhat_{\mathcal{F}})
    \right]
    \le
    \lenc
    +
    \mathbb{E}_{\mathcal{F}\sim\mathsf{F}_{p,N}}
    \!\left[
        \inf_{\dec\in\hild{2}{m}}
        \ldec(\mathcal{F})
    \right].
\end{align}

\subsubsection{\bf Decoder Design} For a fixed encoder $\enc$ and a realized non-straggler set $\mathcal{F}$, the decoder is constructed exactly as in the worst-case setting:
\begin{align}\label{eq:prob_decoder_opt}
\decstar
=
\operatorname*{arg\,min}_{u \in \hild{2}{m}}
\left\{
    \frac{1}{|\mathcal{F}|}
    \sum_{v \in \mathcal{F}}
    \left\|
        u(\beta_v) - f(\enc(\beta_v))
    \right\|_2^2
    +
    \declamb
    \norm{u^{\prime\prime}}^2_{\lpd{2}{m}}
\right\}.
\end{align}
Thus, for each realization $\mathcal{F}$, the decoder remains a vector-valued cubic smoothing spline with the same representation as in \eqref{eq:dec_sol_form}.

Using the decoder construction in \eqref{eq:prob_decoder_opt}, we now derive an upper bound on
\(
\mathbb{E}_{\mathcal{F}\sim\mathsf{F}_{p,N}}
    [
        \inf_{\dec}
        \ldec(\mathcal{F})
    ].
\) 
The bound is expressed in terms of a new random variable $R_{\mathcal{F}}$ that captures the geometry of the realized straggler pattern. Let
\(
    \mathcal{S}_{\mathcal{F}}
    :=
    [N]\setminus\mathcal{F}
\)
be the corresponding straggler set, and let $R_{\mathcal{F}}$ denote the maximum number of consecutive indices in $\mathcal{S}_{\mathcal{F}}$, with the convention that $R_{\mathcal{F}}=0$ when $\mathcal{S}_{\mathcal{F}}=\emptyset$.

Unlike the worst-case setting, where the upper bound is expressed in terms of the deterministic quantity $S+1$ in \eqref{eq:th_ldec_worstcase}, the upper bound here is expressed in terms of $R_{\mathcal{F}}+1$, which depends on the \emph{longest consecutive run} of stragglers in the realized failure pattern. This replacement leads to a somewhat surprising implication, which is highlighted later in Remark~\ref{rem:log}.

\begin{theorem}[Non-asymptotic upper bound on $\mathbb{E}_{\mathcal{F}\sim\mathsf{F}_{p,N}}\inf_{\dec} \ldec(\mathcal{F})$ in the probabilistic setting]
\label{th:ldec_prob_nonasymp}

Consider the \gcc{} framework with $N$ worker nodes under the probabilistic straggler setting and let $\mathcal{F}\sim\mathsf{F}_{p,N}$ denote the realized set of non-straggling workers. . Let $\bm{\beta}:=\{\beta_1,\dots,\beta_N\}$ denote the ordered decoder design points. Assume that the target function $f=[f_1,\dots,f_m]^T:\mathbb{R}^d\to\mathbb{R}^m$ is twice differentiable and satisfies
\(
    \|\nabla f_j(\mathbf{x})\|_2\le q_j
\)
and
\(
    \|D^2 f_j(\mathbf{x})\|_{\mathrm{op}}\le \mu_j
\)
for all $\mathbf{x}\in\mathbb{R}^d$ and $j\in[m]$. Let
\(
    q:=(\sum_{j=1}^m q_j^2)^{1/2}
\)
and
\(
    \mu:=(\sum_{j=1}^m \mu_j^2)^{1/2}.
\)
Then, 
\begin{align}\label{eq:prob_ldec_nonasymp}
    \mathbb{E}_{\mathcal{F}\sim\mathsf{F}_{p,N}}[\inf_{\dec} \ldec(\mathcal{F})]
    \le
    C_1\max\{\mu^2,q^2\}\cdot
    \mathbb{E}_{\mathcal{F}\sim\mathsf{F}_{p,N}}
    \left[
        A_{\mathcal{F},N}^{3/4}
        (
            1+\frac{A_{\mathcal{F},N}}{16}
        )^{1/4}
    \right]\cdot
    \psi\!\left(\|\enc\|_{\hild{2}{d}}^2\right),
\end{align}
where $C_1>0$ is independent of $N$ and $\mathcal{F}$,
$\psi(t):=t+4t^2$, and
\begin{align}\label{eq:A_prob_F_N}
    A_{\mathcal{F},N}
    :=
    (R_{\mathcal{F}}+1)^4
    \delmax{\bm{\beta}}^4
    +
    N\declamb\,
    (R_{\mathcal{F}}+1)^3
    \frac{\delmax{\bm{\beta}}^3}{\delmin{\bm{\beta}}^2}.
\end{align}
\end{theorem}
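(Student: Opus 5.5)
The plan is to reduce Theorem~\ref{th:ldec_prob_nonasymp} to a \emph{pointwise}, realization-by-realization version of Theorem~\ref{th:ldec_worstcase}, and then take expectation. We fix a realized non-straggler set $\mathcal{F}$ and prove
\begin{align}\label{eq:prop_pointwise}
\inf_{\dec}\ldec(\mathcal{F}) \le C_1\max\{\mu^2,q^2\}\,A_{\mathcal{F},N}^{3/4}\Big(1+\frac{A_{\mathcal{F},N}}{16}\Big)^{1/4}\psi\!\left(\|\enc\|_{\hild{2}{d}}^2\right),
\end{align}
with the same constant $C_1$ as in Theorem~\ref{th:ldec_worstcase}. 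Since $\enc$ is fixed and the right-hand side depends on $\mathcal{F}$ only through $R_{\mathcal{F}}$, taking $\mathbb{E}_{\mathcal{F}\sim\mathsf{F}_{p,N}}$ of both sides gives \eqref{eq:prob_ldec_nonasymp} by monotonicity and linearity of expectation. Measurability is trivial because $\mathcal{F}$ ranges over a finite set.

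To establish the pointwise bound, we revisit the proof of Theorem~\ref{th:ldec_worstcase} and locate where $S$ enters. We expect $S$ to appear only through the spacing of the \emph{surviving} design points $\{\beta_v\}_{v\in\mathcal{F}}$. Concretely, the smoothing-spline error bound (Appendix~\ref{app:smoothspline}) is applied to $f\circ\enc$ sampled at these points. That bound involves the maximal gap $\delmax{\bm{\beta}_{\mathcal{F}}}$, the minimal gap $\delmin{\bm{\beta}_{\mathcal{F}}}$, and the sample size $|\mathcal{F}|$. The worst-case proof then uses $\delmax{\bm{\beta}_{\mathcal{F}}}\le (S+1)\delmax{\bm{\beta}}$ and $\delmin{\bm{\beta}_{\mathcal{F}}}\ge\delmin{\bm{\beta}}$. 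The term $N\declamb$ presumably comes from $|\mathcal{F}|\le N$ together with the $1/|\mathcal{F}|$ normalization.

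The key observation is that a gap between consecutive surviving points spans at most $R_{\mathcal{F}}+1$ original gaps, because consecutive stragglers form runs of length at most $R_{\mathcal{F}}$. This also covers the boundary intervals next to $a=-1$ and $b=1$, where a run of stragglers at the start or end of the index set produces a gap of at most $(R_{\mathcal{F}}+1)\delmax{\bm{\beta}}$. Hence $\delmax{\bm{\beta}_{\mathcal{F}}}\le (R_{\mathcal{F}}+1)\delmax{\bm{\beta}}$. The minimal gap can only grow when points are removed, so the remaining ingredients are unchanged. Substituting $R_{\mathcal{F}}$ for $S$ everywhere turns $A_{S,N}$ into $A_{\mathcal{F},N}$ verbatim, which gives \eqref{eq:prop_pointwise}.

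The remaining ingredients of the worst-case argument are deterministic and independent of $\mathcal{F}$: the Sobolev bound $\|(f_j\circ\enc)''\|_{L^2}^2\lesssim \max\{\mu_j^2,q_j^2\}\psi(\|\enc\|^2)$, which gives the $\psi$ factor, and the interpolation step passing from errors at $\beta_v$ to errors at $\alpha_k$. These carry over unchanged.

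The main obstacle is the degenerate cases not covered by Definition~\ref{def:max_dist_st}. If $|\mathcal{F}|\le 1$ (or is too small for the spline bound), $\delmin{\bm{\beta}_{\mathcal{F}}}$ is undefined. In this regime, however, $R_{\mathcal{F}}+1\gtrsim N$, so $(R_{\mathcal{F}}+1)\delmax{\bm{\beta}}\ge \operatorname{Leb}(\Omega)=2$ and $A_{\mathcal{F},N}$ is bounded below by a constant. We would therefore handle this case separately. The decoder can take the best affine fit, or for $\mathcal{F}=\emptyset$ the zero function. Its error is bounded by $\sup_\Omega\|f\circ\enc - \text{affine}\|^2$, which is controlled by $\max\{\mu^2,q^2\}\psi(\|\enc\|^2)$ via Sobolev embedding. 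Enlarging $C_1$ if needed then makes \eqref{eq:prop_pointwise} hold for every realization, and the expectation step finishes the proof.
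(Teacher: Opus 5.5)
Your main argument is the paper's proof. You fix $\mathcal{F}$ and rerun the proof of Theorem~\ref{th:ldec_worstcase} using $\delmax{\bm{\beta}_{\mathcal F}}\le (R_{\mathcal F}+1)\delmax{\bm{\beta}}$, $\delmin{\bm{\beta}_{\mathcal F}}\ge\delmin{\bm{\beta}}$ and $|\mathcal F|\le N$, so that $L_{\mathcal F}\le C_P A_{\mathcal F,N}$; this gives the pointwise bound, and you then take expectations.

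The paper carries this out only for $|\mathcal F|\ge 2$ and never addresses $|\mathcal F|\le 1$. Your degenerate-case patch therefore goes beyond the paper, and part of it is wrong as stated.

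\textbf{The case $|\mathcal F|=1$ is fine.} Use the constant decoder $f(\enc(\beta_i))$. Its error is at most $4q^2\|\enc'\|_{\lpd{2}{d}}^2$. Also, $(R_{\mathcal F}+1)\delmax{\bm{\beta}}\ge 1$ there (not $\ge 2$ as you claim), so $A_{\mathcal F,N}\ge 1$.

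\textbf{The case $\mathcal F=\emptyset$ does not work with the zero decoder.} That decoder incurs loss $\frac{2}{K}\sum_k\|f(\enc(\alpha_k))\|_2^2$, which depends on $f(\mathbf 0)$. This loss cannot be bounded by $\max\{\mu^2,q^2\}\psi(\|\enc\|_{\hild{2}{d}}^2)$, because adding a constant vector to $f$ changes neither $q$ nor $\mu$. For a concrete failure, take $\enc\equiv\mathbf 0$: the right-hand side vanishes while this loss does not.

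\textbf{What actually covers $\mathcal F=\emptyset$.} Only a literal reading of $\inf_{\dec}$ as an infimum over all of $\hild{2}{m}$ does. Under that reading $\dec=f\circ\enc$ is admissible by Lemma~\ref{lem:composition_sobolev} and gives zero loss, but the same reading makes every case trivial. Under the intended reading, where the decoder may use only returned outputs, no decoder can fix this case. The event $\{\mathcal F=\emptyset\}$ has probability $p^N>0$, so the expectation bound itself can fail there.
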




\subsubsection{\bf Encoder Design} For the encoder design, we first bound the encoding loss $\lenc$. As in the worst-case setting, the gradient bounds on $f$ imply
\begin{align}\label{eq:prob_lenc_bound}
    \lenc
    \le
    \frac{2q^2}{K}
    \sum_{k=1}^K
    \left\|
        \enc(\alpha_k)-\mathbf{x}_k
    \right\|_2^2 .
\end{align}
Combining \eqref{eq:prob_loss_decomposition}, \eqref{eq:prob_ldec_nonasymp}, and \eqref{eq:prob_lenc_bound}, the expected loss is upper-bounded as
\begin{align}\label{eq:prob_unified_bound}
    \mathbb{E}_{\mathcal{F}\sim\mathsf{F}_{p,N}}
    \left[
        \inf_{\dec}
        \ldec(\mathcal{F})
    \right]
    \le 2q^2\left[
    \frac{1}{K}
    \sum_{k=1}^K
    \left\|
        \enc(\alpha_k)-\mathbf{x}_k
    \right\|_2^2 + 
    \Gamma_{p,N}\cdot
    \psi\!\left(\|\enc\|_{\hild{2}{d}}^2\right)
    \right],
\end{align}
where $\Gamma_{p,N}:=\frac{C_1}{2q^2}\max\{\mu^2,q^2\}\cdot
    \mathbb{E}_{\mathcal{F}\sim\mathsf{F}_{p,N}}
    [
        A_{\mathcal{F},N}^{3/4}
        (
            1+\frac{A_{\mathcal{F},N}}{16}
        )^{1/4}
    ]$.
We therefore choose the encoder as a minimizer of the surrogate upper bound
\begin{align}\label{eq:prob_enc_opt}
    \encstar
    =
    \operatorname*{arg\,min}_{u\in\hild{2}{d}}
    \left\{
        \frac{1}{K}
        \sum_{k=1}^K
        \left\|
            u(\alpha_k)-\mathbf{x}_k
        \right\|_2^2
        +
        \Gamma_{p,N}\cdot
        \psi\!\left(\|u\|_{\hild{2}{d}}^2\right)
    \right\}.
\end{align}
By the same representer-theorem argument used in Section~\ref{sec:worst_encoder_design}, any minimizer of \eqref{eq:prob_enc_opt}, admits the finite-dimensional representation in \eqref{eq:final_encoder_form}.

\paragraph{\bf Efficient encoder design}
As in the worst-case setting, the nonlinear regularizer in \eqref{eq:prob_enc_opt} can be replaced by a quadratic smoothing-spline surrogate. Specifically, define
\(
    \enclamb
    :=
    \Gamma_{p,N}(m_1+m_2R_{\mathrm e}),
\)
where $R_{\mathrm e}$, $m_1$, and $m_2$ are from Theorem~\ref{th:efficient_encoder}. As a result, the efficient encoder in the probabilistic setting is obtained by solving
\begin{align}\label{eq:prob_efficient_encoder_opt}
    \encdagger
    =
    \operatorname*{arg\,min}_{u\in\hild{2}{d}}
    \left\{
        \frac{1}{K}
        \sum_{k=1}^K
        \left\|
            u(\alpha_k)-\mathbf{x}_k
        \right\|_2^2
        +
        \enclamb
        \|u''\|_{\lpd{2}{d}}^2
    \right\}.
\end{align}
Thus, $\encdagger$ remains a vector-valued cubic smoothing spline and is computed using the same linear-system method as in the worst-case setting. Consequently, Algorithm~\ref{alg:gcc} applies without modification in the probabilistic straggler setting. 

\subsection{Computational Analysis}\label{sec:prob_comp_analysis}

The probabilistic straggler setting does not change the master-side computations of \gcc{} for a realized non-straggler set $\mathcal{F}$. Therefore, conditioned on $\mathcal{F}$, the master-side computational overhead is
\begin{align}\label{eq:prob_gcc_overhead}
    \mathcal{O}((K+N)d)
    +
    \mathcal{O}((|\mathcal{F}|+K)m).
\end{align}
Since each worker is non-straggling with probability $1-p$, we have
\(
    |\mathcal{F}|\sim \mathrm{Binomial}(N,1-p)
\),
and hence
\(
    \mathbb{E}[|\mathcal{F}|]=(1-p)N=\mathcal{O}(N).
\)
Because the decoding overhead is linear in $|\mathcal{F}|$, the expected master-side overhead is
\begin{align}\label{eq:prob_expected_gcc_overhead}
    \mathcal{O}((K+N)d)
    +
    \mathcal{O}((N+K)m).
\end{align}
Thus, \gcc{} preserves the same linear computational scaling in the probabilistic straggler setting as in the worst-case setting.
\subsection{Asymptotic Analysis}\label{sec:prob_asymp}

In this subsection, we derive an asymptotic rate for the expected loss in the probabilistic straggler setting. A direct use of the worst-case result in Corollary~\ref{cor:worst_convergence} gives a pessimistic prediction for the convergence of the end-to-end loss. Indeed, under the probabilistic model, the expected number of stragglers is $pN$. However, when $\gamma_2=1$, Remark~\ref{rem:worst_convergence} guarantees convergence in the worst-case setting only if $S/N\to 0$ as $N\to\infty$. Substituting the average value $S\approx pN$ into the worst-case rate would therefore suggest that convergence may not occur. Perhaps surprisingly, we show that the expected loss still converges to zero, with only a logarithmic degradation relative to the worst-case rate. 

\begin{theorem}[Asymptotic bound in the probabilistic straggler setting]
\label{th:prob_dec_bound}
Suppose the assumptions of Theorem~\ref{th:ldec_prob_nonasymp} and Assumption~\ref{ass:decoder_spacing} hold. Assume that each worker independently straggles with probability $p\in(0,1)$, and let $\mathcal{F}\sim\mathsf{F}_{p,N}$ be the resulting non-straggler set. For each realization $\mathcal{F}$, let $\decstar$ be the decoder constructed by solving the smoothing-spline problem in \eqref{eq:prob_decoder_opt}. If the decoder smoothing parameter satisfies
\(
    \declamb
    \le
    N^{-(2\gamma_1+3\gamma_2+1)},
\)
then there exist constants $C_3>0$ and $n_0\in\mathbb{N}$, independent of $N$, such that for all $N>n_0$,
\begin{align}\label{eq:th_l_dec_prob}
    \mathbb{E}_{\mathcal{F}\sim\mathsf{F}_{p,N}}
    \left[
        \inf_{\dec}
        \ldec(\mathcal{F})
    \right]
    \le
    C_3
    \max\{\mu^2,q^2\}
    \frac{
        \left(\log_{1/p}\!\big((1-p)N\big)\right)^3
    }{
        N^{3\gamma_2}
    }
    \psi\!\left(\|\enc\|_{\hild{2}{d}}^2\right),
\end{align}
where $\psi(t)=t+4t^2$.
\end{theorem}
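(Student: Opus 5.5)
We start from Theorem~\ref{th:ldec_prob_nonasymp}, which already reduces the task to bounding
\[
\mathbb{E}_{\mathcal{F}\sim\mathsf{F}_{p,N}}\bigl[A_{\mathcal{F},N}^{3/4}(1+A_{\mathcal{F},N}/16)^{1/4}\bigr].
\]
The plan has two deterministic steps and one probabilistic step.

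\emph{Step 1: substitute Assumption~\ref{ass:decoder_spacing} and the condition on $\declamb$.} Writing $r:=R_{\mathcal{F}}+1$, we have
\[
N\declamb\,\delmax{\bm\beta}^3/\delmin{\bm\beta}^2 \le N\cdot N^{-(2\gamma_1+3\gamma_2+1)}\,B_1^2N^{2\gamma_1}\,B_2N^{-\gamma_2}\cdot\delmax{\bm\beta}^0.
\]
More carefully, $\delmax{}^3/\delmin{}^2=(\delmax{}/\delmin{})^2\delmax{}\le B_1^2B_2N^{2\gamma_1-\gamma_2}$, so the second term of $A_{\mathcal{F},N}$ is at most $B_1^2B_2\,r^3N^{-4\gamma_2}$. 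The first term is at most $B_2^4r^4N^{-4\gamma_2}$. Since $r\ge1$, we get
\[
A_{\mathcal{F},N}\le c\,r^4N^{-4\gamma_2}
\]
for a constant $c$. This is exactly the argument that, in the worst-case proof (Theorem~\ref{th:ldec_worst_asymp}), turns $A_{S,N}$ into $((S+1)/N^{\gamma_2})^4$.

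\emph{Step 2: split the expression into two terms.} Using $(1+x)^{1/4}\le 1+x^{1/4}$, we obtain
\[
A^{3/4}(1+A/16)^{1/4}\le c'\bigl(r^3N^{-3\gamma_2}+r^4N^{-4\gamma_2}\bigr).
\]
It therefore suffices to show
\[
\mathbb{E}[(R_{\mathcal{F}}+1)^3]=\mathcal{O}\bigl(\log_{1/p}^3((1-p)N)\bigr),\qquad \mathbb{E}[(R_{\mathcal{F}}+1)^4]N^{-\gamma_2}=\mathcal{O}\bigl(\log_{1/p}^3((1-p)N)\bigr).
\]
The second follows from the same argument with fourth moments, since $\log^4 N/N^{\gamma_2}\to0$. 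The threshold $n_0$ absorbs this comparison and the requirement that $\log_{1/p}((1-p)N)\ge1$.

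\emph{Step 3: bound the moments of the longest run. This is the main step.} Let $L:=\log_{1/p}((1-p)N)$.

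A union bound over starting positions gives $\Pr[R_{\mathcal{F}}\ge k]\le Np^k$. Sharpening this by counting runs that start after a non-straggler gives $\Pr[R_{\mathcal{F}}\ge k]\le p^k+(N-1)(1-p)p^k\le 2(1-p)Np^{k}$ (for $N$ large). Writing $k=L+t$, this becomes $\Pr[R_{\mathcal{F}}\ge L+t]\le 2p^t$.

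Then
\[
\mathbb{E}[(R+1)^3]=\sum_{k\ge0}\bigl((k+1)^3-k^3\bigr)\Pr[R\ge k].
\]
We split the sum at $k=\lceil L\rceil$. The part $k\le L$ contributes at most $(L+2)^3$. For the tail we use $(k+1)^3-k^3\le 3(k+1)^2$ together with $k\le L+t$, giving at most
\[
\sum_{t\ge0}6(L+t+1)^2p^t=\mathcal{O}\bigl(L^2/(1-p)^3\bigr).
\]
Hence $\mathbb{E}[(R+1)^3]\le C(p)L^3$ for $N>n_0$, and the fourth moment is handled identically.

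\emph{Conclusion.} Combining Steps 1--3 with \eqref{eq:prob_ldec_nonasymp} yields \eqref{eq:th_l_dec_prob}, with $C_3$ depending on $p$, $B_1$, $B_2$ and $C_1$ but not on $N$. The only delicate point is keeping the constant in the tail bound independent of $N$ while keeping the argument of the logarithm equal to $(1-p)N$ rather than $N$. The refined union bound in Step 3 handles this; if it proves awkward, I would fall back to $\log_{1/p}N$ and note that this is asymptotically equivalent.
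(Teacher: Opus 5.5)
Your proof is correct. Steps~1--2 follow the paper's argument exactly: the same reduction $A_{\mathcal{F},N}\le c\,(R_{\mathcal F}+1)^4N^{-4\gamma_2}$ and the same split via $(1+x)^{1/4}\le 1+x^{1/4}$, which leaves third- and fourth-moment bounds on the longest straggler run. You depart from the paper in Step~3. The paper imports the extreme-value theorem of Gordon, Schilling and Waterman. It uses the asymptotic mean $\mathbb{E}[R_{\mathcal F}]=\log_{1/p}((1-p)N)+O(1)+o(1)$, with a separate remark to control the $o(1)$ remainder, together with the upper-tail bound $\Pr(R_{\mathcal F}-\mathbb{E}R_{\mathcal F}>s)\le Cp^s$. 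It then bounds the fourth moment by a survival-function sum split at $\lfloor \mathbb{E}R_{\mathcal F}\rfloor+2$ and obtains the third moment by Jensen. You instead prove from scratch the only tail estimate that is actually needed. A union bound over start positions of maximal runs gives $\Pr[R_{\mathcal F}\ge k]\le p^k+(N-1)(1-p)p^k\le 2(1-p)Np^k$, i.e.\ $\Pr[R_{\mathcal F}\ge L+t]\le 2p^t$. The factor $(1-p)$, gained by requiring a non-straggler before each run, is precisely what yields the $\log_{1/p}((1-p)N)$ normalization. Your route is elementary and self-contained, and it gives explicit constants and an explicit threshold: take $n_0$ so that $L\ge 1$, i.e.\ roughly $N\ge 1/(p(1-p))$. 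Bounding the third moment directly works just as well as Jensen. The paper's heavier tool additionally gives two-sided information (the mean is $L+O(1)$ and $R_{\mathcal F}$ concentrates there). That would matter for a matching lower bound, but not for the upper bound stated here. The remaining imprecisions are cosmetic. There is a stray zeroth-power factor in the first display of Step~1. In the tail sum, $t=k-L$ ranges over a shifted lattice rather than the nonnegative integers; this replaces $(L+t+1)^2p^t$ by at most $(L+j+2)^2p^j$ and changes only constants.
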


\begin{corollary}[Probabilistic convergence rate]
\label{cor:prob_convergence}
Under the assumptions of Theorem~\ref{th:prob_dec_bound}, let the decoder be constructed according to \eqref{eq:prob_decoder_opt}, and let the encoder be constructed using the efficient smoothing-spline design in \eqref{eq:prob_efficient_encoder_opt}. Then, for sufficiently large $N$, the expected end-to-end loss satisfies
\begin{align}\label{eq:prob_total_convergence}
    \objp
    =
    \mathcal{O}\!\left(
        \frac{\log_{1/p}^3(N)}{N^{3\gamma_2}}
    \right).
\end{align}
\end{corollary}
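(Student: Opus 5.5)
The statement to prove is Corollary~\ref{cor:prob_convergence}. The plan is to combine the decomposition \eqref{eq:prob_loss_decomposition} with Theorem~\ref{th:prob_dec_bound}, and to control the encoder terms by comparing $\encdagger$ with a fixed, $N$-independent competitor.

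\textbf{Step 1 (evaluate the objective at the efficient encoder).} Since $\objp$ is an infimum over encoders, it is bounded by the value at $\encdagger$:
\[
\objp\le \frac{2q^2}{K}\sum_{k=1}^K\|\encdagger(\alpha_k)-\mathbf{x}_k\|_2^2+C_3\max\{\mu^2,q^2\}\frac{\log_{1/p}^3((1-p)N)}{N^{3\gamma_2}}\,\psi\!\left(\|\encdagger\|_{\hild{2}{d}}^2\right).
\]
This follows from \eqref{eq:prob_loss_decomposition}, the Lipschitz bound \eqref{eq:prob_lenc_bound}, and Theorem~\ref{th:prob_dec_bound}, which is valid for $N>n_0$ under the stated choice of $\declamb$. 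As a consistency check, $\Gamma_{p,N}$, and hence $\enclamb$, is itself of order $\log_{1/p}^3(N)N^{-3\gamma_2}$; this holds by the same estimate on $\mathbb{E}[A_{\mathcal F,N}^{3/4}(1+A_{\mathcal F,N}/16)^{1/4}]$ that underlies Theorem~\ref{th:prob_dec_bound}.

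\textbf{Step 2 (control the encoder terms).} Let $u_0$ be the natural cubic spline interpolating $\{(\alpha_k,\mathbf{x}_k)\}$. It does not depend on $N$ and has $u_0(\alpha_k)=\mathbf{x}_k$. By optimality of $\encdagger$ in \eqref{eq:prob_efficient_encoder_opt},
\[
\frac1K\sum_k\|\encdagger(\alpha_k)-\mathbf{x}_k\|^2+\enclamb\|(\encdagger)''\|^2\le \enclamb\|u_0''\|^2 .
\]
Two consequences follow. The fidelity term is $\mathcal O(\enclamb)=\mathcal O(\log_{1/p}^3(N)N^{-3\gamma_2})$. Also $\|(\encdagger)''\|^2\le\|u_0''\|^2$ is bounded uniformly in $N$. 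To bound the full Sobolev norm $\|\encdagger\|_{\hild{2}{d}}$, write $\encdagger=\ell+r$ with $\ell$ affine. The values $\encdagger(\alpha_k)$ remain bounded, because the fidelity term is bounded by $K\enclamb\|u_0''\|^2$. Provided $K\ge2$, these bounded values pin down the affine part, and bounded $\|(\encdagger)''\|$ controls the rest, via Poincaré-type inequalities on $\Omega=(-1,1)$. Hence $\|\encdagger\|_{\hild{2}{d}}^2\le R$ for a constant $R$ depending only on $\{\alpha_k\},\{\mathbf{x}_k\}$. This is essentially the role played by $R_{\mathrm e}$ in Theorem~\ref{th:efficient_encoder}, and one could alternatively cite that argument. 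Then $\psi(\|\encdagger\|^2)\le\psi(R)$.

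\textbf{Step 3 (conclude).} Substituting the two bounds from Step 2 into Step 1 gives $\objp\le c\,\log_{1/p}^3((1-p)N)/N^{3\gamma_2}$. Since $\log_{1/p}((1-p)N)\le\log_{1/p}N$ for $N$ large, the bound \eqref{eq:prob_total_convergence} follows.

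The main obstacle is the uniform-in-$N$ bound on $\|\encdagger\|_{\hild{2}{d}}$ in Step 2. The smoothing parameter $\enclamb\to0$, so the encoder tends to an interpolant, and one must confirm that the $L^2$ norms of $\encdagger$ and $(\encdagger)'$ do not blow up. The plan is to handle this with the affine-plus-remainder decomposition above, using boundedness of $\encdagger$ at two distinct design points together with the second-derivative bound. If $K=1$, the affine part is not pinned down by a single value; in that case the minimizer is the constant $\mathbf{x}_1$ with zero loss, so the claim is trivial.
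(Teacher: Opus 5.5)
Your proof is correct and follows essentially the paper's route. Like the paper, you compare $\encdagger$ with the $N$-independent natural cubic spline interpolant, use a uniform-in-$N$ bound on $\|\encdagger\|_{\hild{2}{d}}$ (the role of $R_{\mathrm e}$ from Lemma~\ref{lem:efficient_encoder_uniform_bound}), and use $\enclamb=\mathcal{O}(\Gamma_{p,N})=\mathcal{O}(\log_{1/p}^3(N)N^{-3\gamma_2})$. The differences are minor: you plug $\psi(\|\encdagger\|^2_{\hild{2}{d}})\le\psi(R)$ directly into Theorem~\ref{th:prob_dec_bound} rather than going through the linearized surrogate $\enclamb(R_{\mathrm e}+\|(\encdagger)''\|^2)$, and you get the norm bound from an affine-plus-remainder Poincar\'e argument instead of the paper's cardinal-spline eigenvalue argument.
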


\begin{remark}
\label{rem:log}
Corollary~\ref{cor:prob_convergence} shows that the expected end-to-end loss converges to zero for every fixed worker failure probability $p\in(0,1)$. For the standard equidistant and Chebyshev design points discussed in Example~\ref{ex:spacing_points}, we have $\gamma_2=1$, and the convergence rate becomes
\[
    \mathcal{O}\!\left(
        \frac{\log_{1/p}^3(N)}{N^3}
    \right).
\]
This result explains why the expected loss in the probabilistic setting still converges, even though the number of stragglers may be of order $N$. The key idea is that, although the expected number of stragglers is $pN$, the dominant factor is not the total number of stragglers, but the \emph{longest consecutive run of stragglers}, which grows only logarithmically with $N$. Thus, the straggler patterns that dominate the worst-case bound occur rarely under independent worker failures.
\end{remark}

\section{Proofs of the Main Results}\label{sec:proofs}

In this section, we provide the proofs of the theoretical results stated in Sections~\ref{sec:worst_case} and~\ref{sec:prob_straggler}. In each proof, we may introduce auxiliary lemmas. To keep the main arguments focused, the proofs of these auxiliary lemmas are provided in Appendix~\ref{app:proof_lemmas}.

\subsection{Proof of Theorem~\ref{th:ldec_worstcase}}

The decoder-error bound in Theorem~\ref{th:ldec_worstcase}  involves the second derivative of the composite function $f\circ\enc$. The following lemma ensures that this composite function is belong to the second-order Sobolev space.

\begin{lemma}\label{lem:composition_sobolev}
Let $\enc\in\hild{2}{d}$. Let
$f=[f_1,\ldots,f_m]^T:\mathbb{R}^d\to\mathbb{R}^m$ be twice continuously differentiable, and assume that, for every $\mathbf{x}\in\mathbb{R}^d$ and every $j\in[m]$,
\[
    \|\nabla f_j(\mathbf{x})\|_2\le q_j,
    \qquad
    \|D^2 f_j(\mathbf{x})\|_{\mathrm{op}}\le \mu_j .
\]
Then $f\circ\enc\in\hild{2}{m}$.
\end{lemma}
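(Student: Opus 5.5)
The plan is to use the one-dimensional structure of $\Omega=(-1,1)$. Since $\enc\in\hild{2}{d}$ and $\Omega$ is a bounded interval, the Sobolev embedding in one dimension (Appendix~\ref{app:sobolev_props}) lets us take $\enc$ to have a representative in $C^1(\overline{\Omega})$. Its derivative $\enc'$ is then absolutely continuous, with weak derivative $\enc''\in\lpd{2}{d}$. In particular $\enc$ and $\enc'$ are bounded on $\overline{\Omega}$, and $\|\enc'\|_{\lpd{\infty}{d}}\le C\|\enc\|_{\hild{2}{d}}$.

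We work component-wise: it suffices to show $g_j:=f_j\circ\enc\in\hil{2}$ for each $j\in[m]$. The argument has three steps.

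\emph{Step 1: $g_j\in\lp{2}$.} The map $g_j$ is continuous on $\overline{\Omega}$ as a composition of continuous maps, hence bounded, and $\Omega$ has finite measure.

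\emph{Step 2: the first derivative.} Since $f_j\in C^1$ with bounded gradient and $\enc$ is absolutely continuous, the classical chain rule gives that $g_j$ is absolutely continuous with
\[
g_j'(t)=\nabla f_j(\enc(t))^T\enc'(t) \quad\text{a.e.}
\]
This gives $|g_j'|\le q_j\|\enc'(t)\|_2$, so $g_j'\in\lp{2}$ with $\|g_j'\|_{\lp{2}}\le q_j\|\enc'\|_{\lpd{2}{d}}$.

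\emph{Step 3: the second derivative.} Write $g_j'=\langle h(t),\enc'(t)\rangle$ with $h:=\nabla f_j\circ\enc$. Because $f_j\in C^2$, the map $\nabla f_j$ is $C^1$ with Jacobian $D^2f_j$, so by the same chain rule $h$ is absolutely continuous with $h'=D^2f_j(\enc)\enc'$. The product of two absolutely continuous functions on a compact interval is absolutely continuous, so the product rule gives
\[
g_j''=\enc'^T D^2f_j(\enc)\,\enc'+\nabla f_j(\enc)^T\enc''\quad\text{a.e.}
\]
The second term is bounded pointwise by $q_j\|\enc''\|_2$, which lies in $L^2$. The first term is bounded by $\mu_j\|\enc'\|_{\lpd{\infty}{d}}\|\enc'(t)\|_2$, which also lies in $L^2$. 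Hence $g_j''\in\lp{2}$, with the explicit bound
\[
\|g_j''\|_{\lp{2}}\le \mu_j\|\enc'\|_{\infty}\|\enc'\|_{2}+q_j\|\enc''\|_{2}.
\]
This bound is presumably the origin of the $t+4t^2$ form of $\psi$ used later. Summing over $j$ gives $f\circ\enc\in\hilm{2}$ with $m$ in place of $M$.

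\textbf{Main obstacle.} The expected difficulty is justifying that classical a.e.\ derivatives coincide with weak derivatives. I will handle this by working with absolutely continuous representatives: on an interval, a function lies in $W^{1,1}$ if and only if it is absolutely continuous, with its a.e.\ derivative as the weak derivative. The chain and product rules for absolutely continuous functions then apply directly, as long as the composition stays absolutely continuous. This holds because $\nabla f_j$ is locally Lipschitz on the compact range $\enc(\overline{\Omega})$.
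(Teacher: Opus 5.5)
Your proposal is correct and takes essentially the same route as the paper: bound $f\circ\enc$ and its first derivative via the gradient bound, then bound the second derivative via the chain-rule formula $(\enc')^{T} D^2 f_j(\enc)\,\enc' + \nabla f_j(\enc)^{T}\enc''$ together with the one-dimensional embedding $\enc'\in L^\infty$. The differences are cosmetic. You get $f\circ\enc\in L^2$ from continuity on the compact interval, whereas the paper uses the shift $f_0=f-f(\mathbf{0})$ plus a Lipschitz estimate. You also explicitly justify the weak chain and product rules through absolutely continuous representatives, which the paper simply asserts.
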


The detailed proof is provided in Appendix~\ref{app:proof_lemma_omposition_sobolev}. Fix an encoder $\enc\in\hild{2}{d}$ and define
\[
    g(t):=f(\enc(t)).
\]
By Lemma~\ref{lem:composition_sobolev}, we have $g\in\hild{2}{m}$. Fix a non-straggler set $\mathcal{F}\subseteq[N]$ satisfying $|\mathcal{F}|\ge N-S$, and let
\[
    \bm{\beta}_{\mathcal F}:=\{\beta_i\}_{i\in\mathcal F}
\]
denote the corresponding ordered set of returned decoder design points.
Let $\decstar$ be the decoder constructed by solving \eqref{eq:decoder_opt}, and let $\ldec^*$ denote the corresponding decoder loss. Define the decoder error function
\[
    h(t):=\decstar(t)-g(t).
\]
Since $\decstar\in\hild{2}{m}$ and $g\in\hild{2}{m}$, it follows that $h\in\hild{2}{m}$.

We first bound the decoding loss by the supremum norm of $h$:
\begin{align}
    \ldec^*
    &=
    \frac{2}{K}
    \sum_{k=1}^K
    \left\|
        \decstar(\alpha_k)-f(\enc(\alpha_k))
    \right\|_2^2
    \nonumber\\
    &=
    \frac{2}{K}
    \sum_{k=1}^K
    \|h(\alpha_k)\|_2^2
    \nonumber\\
    &\le
    2\|h\|_{\lpd{\infty}{m}}^2 .
    \label{eq:ldec_linf_step}
\end{align}

In the next lemma, we derive an upper bound for $\|h\|_{\lpd{\infty}{m}}$ based on $\|h\|_{\lpd{2}{m}}$ and $\|h'\|_{\lpd{2}{m}}$ leveraging Sobolev interpolation inequalities~\cite{leoni2024first}.

\begin{lemma}\label{lem:decoder_error_supnorm}
For the decoder error function $h$ on $\Omega=(-1,1)$, the following bounds hold:
\begin{align}\label{eq:decoder_error_supnorm}
    \|h\|_{\lpd{\infty}{m}}
    \le
    2
    \|h\|_{\lpd{2}{m}}^{1/2}
    \|h'\|_{\lpd{2}{m}}^{1/2}.
\end{align}
\end{lemma}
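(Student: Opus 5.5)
The inequality cannot hold for an arbitrary $h\in\hild{2}{m}$: a nonzero constant makes the right-hand side vanish. So the plan is to use the structure of the decoder error. I will show that every component $h_j$ of $h=\decstar-g$ has a zero in $\Omega$. With that in hand, a one-dimensional fundamental-theorem-of-calculus argument gives the bound with constant $\sqrt{2}\le 2$.

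\emph{Step 1 (each component of $h$ vanishes somewhere).} Since $h\in\hild{2}{m}$ and $\Omega$ is one-dimensional, Sobolev embedding makes each $h_j$ absolutely continuous, indeed $C^1$, on $\overline{\Omega}$. The objective in \eqref{eq:decoder_opt} decouples across the $m$ output coordinates. Fix $j$ and perturb the $j$-th component of $\decstar$ by a constant, $u_j\mapsto u_j+\varepsilon$, which is allowed because constants lie in the null space of the penalty $\|u''\|^2$. The first-order optimality condition at $\varepsilon=0$ gives
\[
\sum_{v\in\mathcal F}\big(u^*_{\mathrm{dec},j}(\beta_v)-g_j(\beta_v)\big)=\sum_{v\in\mathcal F}h_j(\beta_v)=0 .
\]
Hence either all residuals $h_j(\beta_v)$ vanish, or they take both signs. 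In the second case the intermediate value theorem supplies a zero $t_j\in\Omega$. In the interpolating case $\declamb=0$ of Remark~\ref{rem:natural_spline_decoder}, $h_j(\beta_v)=0$ holds directly. Either way there is $t_j\in\overline{\Omega}$ with $h_j(t_j)=0$.

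\emph{Step 2 (scalar estimate).} For any $t\in\Omega$, absolute continuity gives
\[
h_j(t)^2=h_j(t)^2-h_j(t_j)^2=2\int_{t_j}^{t}h_j(s)h_j'(s)\,ds .
\]
Applying Cauchy--Schwarz on $\Omega$ then yields
\[
h_j(t)^2\le 2\|h_j\|_{\lp{2}}\|h_j'\|_{\lp{2}} .
\]

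\emph{Step 3 (vector version).} Summing over $j$ and applying Cauchy--Schwarz in $\mathbb{R}^m$ to the sequences $(\|h_j\|)_j$ and $(\|h_j'\|)_j$ gives
\[
\|h(t)\|_2^2\le 2\sum_{j}\|h_j\|_{\lp{2}}\|h_j'\|_{\lp{2}}\le 2\|h\|_{\lpd{2}{m}}\|h'\|_{\lpd{2}{m}} .
\]
Taking the supremum over $t$ and a square root gives
\[
\|h\|_{\lpd{\infty}{m}}\le\sqrt{2}\,\|h\|_{\lpd{2}{m}}^{1/2}\|h'\|_{\lpd{2}{m}}^{1/2},
\]
which is stronger than \eqref{eq:decoder_error_supnorm}.

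The main obstacle is Step 1: the existence of a zero of each $h_j$. For the smoothing spline this has to be extracted from the optimality conditions via the residual-sum identity, since no interpolation property is available. The remaining steps are routine.
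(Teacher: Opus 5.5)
Your proof is correct, and it reaches the bound by a more direct route than the paper's.

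\textbf{The paper's route.} It proceeds in three steps.
\begin{enumerate}
\item It shows that each $h_j$ vanishes somewhere, by contradiction: if $h_j>0$ on $\Omega$, subtracting $c=\min_{v\in\mathcal F}h_j(\beta_v)$ from the $j$-th decoder component strictly lowers the data-fit term without changing the penalty.
\item From that zero it derives the Poincar\'e-type bound $\|h\|_{L^2}\le\sqrt{2}\,\|h'\|_{L^2}$.
\item It invokes the Gagliardo--Nirenberg interpolation inequality $\|h\|_{L^\infty}\le \ell^{-1/2}\|h\|_{L^2}+\ell^{1/2}\|h'\|_{L^2}$, which is valid only for $0<\ell<|\Omega|$. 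It optimizes at $\ell^*=\|h\|_{L^2}/\|h'\|_{L^2}$, and the Poincar\'e step is needed precisely to certify $\ell^*\le\sqrt{2}<2=|\Omega|$. This gives the constant $2$.
\end{enumerate}

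\textbf{Your route.} Your Step~1 uses the same underlying fact, that constants lie in the null space of the curvature penalty. You exploit it through stationarity rather than contradiction, and you get slightly more: the residuals at the returned nodes sum to zero. Your Step~2 then replaces both the Poincar\'e lemma and the external interpolation theorem with the elementary identity $h_j(t)^2=2\int_{t_j}^{t}h_j h_j'$. This yields the sharper constant $\sqrt{2}$ and removes the admissibility constraint on $\ell$ altogether.

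\textbf{Trade-off.} The paper's route fits the general $L^r$--$L^q$--$L^p$ interpolation framework, which it also uses in the proof of Lemma~\ref{lem:composition_second_derivative_bound}. For this lemma, though, your argument is shorter and self-contained.

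Your Step~3 bounds $\sup_t\|h(t)\|_2$, which dominates the paper's $L^\infty$ norm $\max_j\sup_t|h_j(t)|$. So your conclusion holds under either convention for the vector-valued sup norm.
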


Using Lemma~\ref{lem:decoder_error_supnorm} in \eqref{eq:ldec_linf_step}, we obtain
\begin{align}\label{eq:ldec_h_hp_step}
    \ldec^*
    \le
    2\|h\|_{\lpd{\infty}{m}}^2
    \le
    8
    \|h\|_{\lpd{2}{m}}
    \|h'\|_{\lpd{2}{m}} .
\end{align}

See Appendix~\ref{app:proof_lem:decoder_error_supnorm} for the proof. We next bound $\|h\|_{\lpd{2}{m}}$ and $\|h'\|_{\lpd{2}{m}}$ using smoothing-spline approximation theory~\cite{ragozin1983error,utreras1988convergence}. 



\begin{lemma}[Second-order smoothing-spline error bound]
\label{lem:lit_spline_noiseless}
Let $g\in H^2(\Omega;\mathbb{R})$ be a scalar-valued function, and let
$\mathcal{T}=\{t_1,\ldots,t_n\}\subset\Omega$ be an ordered set with $n\ge2$.
For $\lambda\ge0$, let $g_\lambda$ denote the second-order smoothing-spline estimator of $g$ over $\mathcal{T}$, defined by
\begin{align}
    g_\lambda
    =
    \operatorname*{arg\,min}_{u\in\hil{2}}
    \left\{
        \frac{1}{n}
        \sum_{i=1}^n
        \left|
            u(t_i)-g(t_i)
        \right|^2
        +
        \lambda
        \|u''\|_{\lp{2}}^2
    \right\}.
\end{align}
Then there exist constants $G_0,G_1>0$ such that
\begin{align}\label{eq:spline_error_j0}
    \|g-g_\lambda\|_{\lp{2}}^2
    \le
    G_0
    L_{\mathcal{T}}
    \|g''\|_{\lp{2}}^2,
\end{align}
and
\begin{align}\label{eq:spline_error_j1}
    \|(g-g_\lambda)'\|_{\lp{2}}^2
    \le
    G_1
    L_{\mathcal{T}}^{1/2}
    \left(
        1+\frac{L_{\mathcal{T}}}{16}
    \right)^{1/2}
    \|g''\|_{\lp{2}}^2,
\end{align}
where
\(
    L_{\mathcal{T}}
    :=
    p_2\!\left(
        \frac{\delmax{\mathcal{T}}}{\delmin{\mathcal{T}}}
    \right)
    \frac{n\delmax{\mathcal{T}}}{4}
    \lambda
    +
    D\,\delmax{\mathcal{T}}^4.
\)
Here, $p_2(\cdot)$ is a polynomial of degree two with positive coefficients, and $D>0$ is an absolute constant.
\end{lemma}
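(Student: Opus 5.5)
The approach is the classical Ragozin--Utreras route for noiseless smoothing-spline error. Set $e:=g-g_\lambda$ and combine two ingredients. The first is an energy bound on $\|e''\|$ and on the discrete residual. The second is a Sobolev-type inequality on a partition of mesh $\delmax{\mathcal{T}}$ that controls $\|e\|_{L^2}$ by the discrete values $e(t_i)$ plus $\delmax{\mathcal{T}}^2\|e''\|$.

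For the energy step, compare the objective at $g_\lambda$ with its value at $u=g$, whose data-fit term vanishes. This gives
\[
\frac1n\sum_i |e(t_i)|^2+\lambda\|g_\lambda''\|^2\le \lambda\|g''\|^2 .
\]
The Euler--Lagrange orthogonality $\langle g_\lambda'',(g-g_\lambda)''\rangle=\frac{1}{n\lambda}\sum_i e(t_i)g_\lambda(t_i)\ldots$ yields the sharper pair $\|g_\lambda''\|\le\|g''\|$ and $\|e''\|\le\|g''\|$. It also yields $\frac1n\sum_i|e(t_i)|^2\le\lambda\|g''\|^2$. For $\lambda=0$ this reduces to the natural-spline interpolant, for which $e(t_i)=0$ and $\|g_0''\|\le\|g''\|$ by the minimum-curvature property of Remark~\ref{rem:natural_spline_decoder}.

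Next, convert discrete control into $L^2$ control. On each cell $[t_j,t_{j+1}]$, and on the two boundary cells adjoining $\pm1$, use a local Peano/Taylor estimate for the linear interpolant of $e$. This gives
\[
\|e\|_{L^2}^2\lesssim \delmax{\mathcal{T}}\sum_i |e(t_i)|^2\cdot w_i + \delmax{\mathcal{T}}^4\|e''\|^2 .
\]
The weight accounts for how many points a cell of length $\delmax{}$ can contain relative to $\delmin{}$; this is the source of the polynomial $p_2(\delmax{\mathcal{T}}/\delmin{\mathcal{T}})$. The boundary cells require a short extrapolation argument using the two nearest points, and again the mesh ratio enters. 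Substituting $\sum_i|e(t_i)|^2\le n\lambda\|g''\|^2$ gives $\|e\|^2\le\big(p_2(\cdot)\tfrac{n\delmax{}}{4}\lambda+D\delmax{}^4\big)\|g''\|^2=G_0L_{\mathcal{T}}\|g''\|^2$, which is \eqref{eq:spline_error_j0}.

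For the derivative bound \eqref{eq:spline_error_j1}, use the interpolation (Gagliardo--Nirenberg/Landau--Kolmogorov) inequality on a bounded interval:
\[
\|e'\|^2\le C\big(\|e\|\,\|e''\|+\|e\|^2\big).
\]
Then $\|e\|\le \sqrt{G_0L_{\mathcal{T}}}\,\|g''\|$ and $\|e''\|\le\|g''\|$ give
\[
\|e'\|^2\le C\big(L^{1/2}+L\big)\|g''\|^2\le G_1L^{1/2}(1+L/16)^{1/2}\|g''\|^2,
\]
where the last step uses $L^{1/2}+L\le c\,L^{1/2}(1+L)^{1/2}$ with constants absorbed. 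For $\Omega=(-1,1)$, the lower-order term $\|e\|^2$ in the interpolation inequality is precisely what produces the $(1+L/16)$ factor. I would use the explicit form $\|e'\|^2\le 2\|e\|\|e''\|+\tfrac{1}{16}\cdot$(domain-dependent) term, tuning constants to match.

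The main obstacle is the discrete-to-continuous step, specifically obtaining the explicit $n\lambda\,\delmax{}$ dependence with only a quadratic polynomial in the mesh ratio while handling the boundary cells. If the direct cellwise argument becomes unwieldy, the fallback is to cite the Ragozin (1983) / Utreras (1988) bounds directly, which give exactly this form, and to verify that their constants specialize to $p_2$ and $D$ as stated.
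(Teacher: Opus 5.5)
Your argument is correct, but it takes a different route from the paper. The paper's proof is a one-line citation: the lemma is read off from Ragozin's Theorem~4.10, specialized to order $m=2$, derivative orders $j\in\{0,1\}$, and $\Omega=(-1,1)$. In that theorem the factor $(1+L_{\mathcal T}/16)^{1/2}$ is Ragozin's $(1+L/|\Omega|^{2m})^{j/m}$.

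You instead prove the lemma directly, in three steps.
\begin{enumerate}
\item Testing the Euler--Lagrange equation with $v=e$ gives $\langle g_\lambda'',e''\rangle=\frac{1}{n\lambda}\sum_i e(t_i)^2$; the right-hand side of your displayed identity should read this way. Hence $\|g''\|^2=\|g_\lambda''\|^2+\|e''\|^2+\frac{2}{n\lambda}\sum_i e(t_i)^2$, which yields both $\|e''\|\le\|g''\|$ and $\sum_i e(t_i)^2\le n\lambda\|g''\|^2$.
\item A cellwise Peano estimate converts these discrete residuals into an $L^2$ bound.
\item A bounded-interval Gagliardo--Nirenberg inequality, combined with $L^{1/2}+L\le 4\sqrt{2}\,L^{1/2}(1+L/16)^{1/2}$, gives the derivative bound.
\end{enumerate}

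One bookkeeping correction to step 2. Interior cells $[t_j,t_{j+1}]$ contain only their two endpoints, so they contribute $O(\bar\delta)\sum_i e(t_i)^2$ with no mesh-ratio weight. The quadratic $p_2(\bar\delta/\underline\delta)$ arises only from the boundary cells. There you extrapolate the linear interpolant through $t_1,t_2$ (resp.\ $t_{n-1},t_n$), so its slope is divided by $t_2-t_1\ge\underline\delta$ over a distance of up to $\bar\delta$. Rolle's theorem on $(t_1,t_2)$ still gives $|e-\ell|\lesssim\bar\delta^{3/2}\|e''\|$ on that cell, so the $D\bar\delta^4$ term survives.

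The citation buys brevity and generality, covering all orders $m$ and all $j<m$ at once. Your route buys a self-contained, elementary proof with explicit constants. It also shows where each term of $L_{\mathcal T}$ comes from, and that the particular constant $16$ is immaterial once absorbed into $G_1$. Your stated fallback of citing Ragozin is exactly the paper's proof.
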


The proof is provided in Appendix~\ref{app:proof_lem:lit_spline_noiseless}. Applying Lemma~\ref{lem:lit_spline_noiseless} to each output component and summing over $j\in[m]$ gives
\begin{align}
    \|h\|_{\lpd{2}{m}}^2
    &\le
    G_0
    L
    \|g''\|_{\lpd{2}{m}}^2,
    \label{eq:h_l2_spline_fixedF}\\
    \|h'\|_{\lpd{2}{m}}^2
    &\le
    G_1
    L^{1/2}
    \left(
        1+\frac{L}{16}
    \right)^{1/2}
    \|g''\|_{\lpd{2}{m}}^2,
    \label{eq:hp_l2_spline_fixedF}
\end{align}
where
\begin{align}\label{eq:L_F_definition}
    L
    :=
    p_2\!\left(
        \frac{\delmax{\bm{\beta}_{\mathcal F}}}
             {\delmin{\bm{\beta}_{\mathcal F}}}
    \right)
    \frac{
        |\mathcal F|\delmax{\bm{\beta}_{\mathcal F}}
    }{4}
    \declamb
    +
    D
    \delmax{\bm{\beta}_{\mathcal F}}^4 .
\end{align}

Since there are at most $S$ stragglers among the worker nodes, the largest node gap expands by at most $S$ missing nodes, meaning $\delmax{\bm{\beta}_{\mathcal F}}\leqslant (S+1) \cdot \delmax{\bm{\beta}}$. Additionally, since $\bm{\beta}_{\mathcal F} \subseteq \bm{\beta}$, we have $\delmin{\bm{\beta}_{\mathcal F}}\ge\delmin{\bm{\beta}}$.

Since $|\mathcal F|\le N$, and since $p_2(\cdot)$ is a degree-two polynomial with positive coefficients, there exists a constant $J>0$ such that $p_2(x)\le Jx^2$ for all $x\ge1$. Hence,
\begin{align}
    L
    &\le
    J
    \left(
        (S+1)
        \frac{\delmax{\bm{\beta}}}{\delmin{\bm{\beta}}}
    \right)^2
    \frac{
        N(S+1)\delmax{\bm{\beta}}
    }{4}
    \declamb
    +
    D(S+1)^4
    \delmax{\bm{\beta}}^4
    \nonumber\\
    &\le
    C_A
    \left[
        N\declamb
        (S+1)^3
        \frac{\delmax{\bm{\beta}}^3}{\delmin{\bm{\beta}}^2}
        +
        (S+1)^4
        \delmax{\bm{\beta}}^4
    \right]
    \nonumber\\
    &=
    C_A A_{S,N},
    \label{eq:LF_bound_by_ASN}
\end{align}
for a constant $C_A>0$ independent of $N$, $S$, and $\mathcal F$.

Combining \eqref{eq:ldec_h_hp_step}, \eqref{eq:h_l2_spline_fixedF}, and \eqref{eq:hp_l2_spline_fixedF}, we obtain
\begin{align}
    \max_{|\mathcal{F}| \geq N-S} \ldec^*
    &\le
    8
    G_0^{1/2}
    G_1^{1/2}
    \|g''\|_{\lpd{2}{m}}^2
    L^{3/4}
    \left(
        1+\frac{L}{16}
    \right)^{1/4}.
    \label{eq:ldec_LF_bound}
\end{align}
Using \eqref{eq:LF_bound_by_ASN}, and absorbing the resulting constants into the leading constant, gives
\begin{align}
    \max_{|\mathcal{F}| \geq N-S}\ldec^*
    \le
    C
    \|g''\|_{\lpd{2}{m}}^2
    A_{S,N}^{3/4}
    \left(
        1+\frac{A_{S,N}}{16}
    \right)^{1/4},
    \label{eq:ldec_ASN_intermediate}
\end{align}
where $C>0$ is independent of $N$, $S$, and $\mathcal F$.

It remains to bound $\|g''\|_{\lpd{2}{m}}^2$ in terms of the Sobolev norm of the encoder. This is done in the following lemma.

\begin{lemma}
\label{lem:composition_second_derivative_bound}
Under the assumptions of Lemma~\ref{lem:composition_sobolev}, let $g=f\circ\enc$. Then there exists a constant $C_{\mathrm{comp}}>0$ such that
\begin{align}\label{eq:composition_second_derivative_final_bound}
    \|g''\|_{\lpd{2}{m}}^2
    \le
    C_{\mathrm{comp}}
    \max\{\mu^2,q^2\}
    \psi\!\left(
        \|\enc\|_{\hild{2}{d}}^2
    \right),
\end{align}
where $\psi(t)=t+4t^2$.
\end{lemma}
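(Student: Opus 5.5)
We compute $g''$ componentwise via the chain rule for Sobolev functions. For each $j\in[m]$, since $f_j\in C^2$ with bounded gradient and Hessian, and $\enc\in\hild{2}{d}$ (which on the bounded interval $\Omega$ embeds into $C^1(\bar\Omega)$, so $\enc$ and $\enc'$ are continuous and bounded), the weak second derivative of $g_j=f_j\circ\enc$ is
\begin{align*}
    g_j''(t)
    =
    \enc'(t)^T D^2 f_j(\enc(t))\,\enc'(t)
    +
    \nabla f_j(\enc(t))^T \enc''(t).
\end{align*}
This formula is the same one that underlies Lemma~\ref{lem:composition_sobolev}, which we take as established. Using $(a+b)^2\le 2a^2+2b^2$, the operator-norm bound $\mu_j$ and the gradient bound $q_j$, we obtain pointwise
\begin{align*}
    |g_j''(t)|^2
    \le
    2\mu_j^2\|\enc'(t)\|_2^4
    +
    2q_j^2\|\enc''(t)\|_2^2 .
\end{align*}
Summing over $j$ and integrating over $\Omega$ gives
\begin{align*}
    \|g''\|_{\lpd{2}{m}}^2
    \le
    2\mu^2\int_\Omega\|\enc'(t)\|_2^4\,dt
    +
    2q^2\|\enc''\|_{\lpd{2}{d}}^2 .
\end{align*}
The second term is at most $2q^2\|\enc\|_{\hild{2}{d}}^2$, which matches the linear part of $\psi$.

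The main step is to control the quartic term $\int_\Omega\|\enc'\|_2^4$ by $\|\enc\|^4_{\hild{2}{d}}$. We write
\begin{align*}
    \int_\Omega\|\enc'\|_2^4
    \le
    \|\enc'\|_{\lpd{\infty}{d}}^2\,\|\enc'\|_{\lpd{2}{d}}^2 .
\end{align*}
We then bound $\|\enc'\|_{L^\infty}$ by a one-dimensional Sobolev/Agmon inequality applied to $\enc'$. Lemma~\ref{lem:decoder_error_supnorm} gives exactly such an inequality on $(-1,1)$, namely $\|w\|_\infty\le 2\|w\|_2^{1/2}\|w'\|_2^{1/2}$.

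That inequality as stated holds for the error function $h$, which vanishes at interior nodes. For a general $\enc'$, which need not vanish anywhere, it fails (take $\enc'$ constant), so we cannot invoke it directly. Instead we use the elementary bound
\begin{align*}
    \|w\|_\infty^2 \le \frac{1}{|\Omega|}\|w\|_2^2 + 2\|w\|_2\|w'\|_2 .
\end{align*}
This follows from $w(t)^2=w(s)^2+2\int_s^t ww'$ by averaging over $s\in\Omega$, with $|\Omega|=2$. Applying it to $w=\enc'$ gives $\|\enc'\|_\infty^2\le c\,\|\enc\|_{\hild{2}{d}}^2$ with an absolute constant $c$ (one can take $c=5/2$, and this is where a factor like $4$ in $\psi$ can arise). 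Hence
\begin{align*}
    \int_\Omega\|\enc'\|_2^4
    \le
    c\,\|\enc\|_{\hild{2}{d}}^4 .
\end{align*}

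Combining the two bounds,
\begin{align*}
    \|g''\|_{\lpd{2}{m}}^2
    \le
    2c\mu^2\|\enc\|^4_{\hild{2}{d}}
    +
    2q^2\|\enc\|^2_{\hild{2}{d}} .
\end{align*}
Bounding $\mu^2$ and $q^2$ by $\max\{\mu^2,q^2\}$, and observing that $\max(2c/4,\,2)\cdot(t+4t^2)$ dominates $2c\,t^2+2t$, we obtain \eqref{eq:composition_second_derivative_final_bound} with $C_{\mathrm{comp}}=\max\{2,c/2\}$.

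The only delicate point is justifying the chain rule for weak derivatives. We handle it by density: approximate $\enc$ in $\hild{2}{d}$ by smooth functions, for which the formula is classical. The right-hand side then converges in $L^2$, because $\enc'$ converges uniformly by the one-dimensional embedding, $\enc''$ converges in $L^2$, and $D^2f_j$, $\nabla f_j$ are bounded and continuous.
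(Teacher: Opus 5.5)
Your proof is correct and follows the same route as the paper: the chain rule for $g''$, the split $(a+b)^2\le 2a^2+2b^2$ with the bounds $\mu_j,q_j$, and then a one-dimensional Sobolev interpolation estimate for $\enc'$ that controls the quartic term by $\|\enc\|_{\hild{2}{d}}^4$. The only difference is in that last estimate. The paper cites the Gagliardo--Nirenberg inequality of Theorem~\ref{th:interp_ineq} (with $\ell=1$) to bound $\|\enc'\|_{\lpd{4}{d}}$, whereas you derive a self-contained $L^\infty$ bound by averaging $\|w(t)\|_2^2=\|w(s)\|_2^2+2\int_s^t\langle w,w'\rangle$, which also gives explicit, dimension-free constants; you are also right that Lemma~\ref{lem:decoder_error_supnorm} cannot be used here.
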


See Appendix~\ref{app:proof_lem:composition_second_derivative_bound} for the proof. Substituting Lemma~\ref{lem:composition_second_derivative_bound} into \eqref{eq:ldec_ASN_intermediate} and absorbing constants into $C_1$ yields
\begin{align}\label{eq:proof_th_2_ldec}
    \max_{|\mathcal{F}| \geq N-S}\ldec^*
    \le
    C_1
    \max\{\mu^2,q^2\}
    A_{S,N}^{3/4}
    \left(
        1+\frac{A_{S,N}}{16}
    \right)^{1/4}
    \psi\!\left(
        \|\enc\|_{\hild{2}{d}}^2
    \right).
\end{align}
Since \eqref{eq:proof_th_2_ldec} is an upper bound for $\dec = \decstar$, we have:
\begin{align}
    \max_{|\mathcal{F}| \geq N-S} \inf_{\dec} \ldec \leq C_1
    \max\{\mu^2,q^2\}
    A_{S,N}^{3/4}
    \left(
        1+\frac{A_{S,N}}{16}
    \right)^{1/4}
    \psi\!\left(
        \|\enc\|_{\hild{2}{d}}^2
    \right).
\end{align}

which completes the proof

\subsection{Proof of Theorem~\ref{th:efficient_encoder}}

We first state an auxiliary lemma showing that the minimizer of the smoothing-spline encoder objective remains in a fixed Sobolev ball, uniformly over the smoothing parameter.

\begin{lemma}
\label{lem:efficient_encoder_uniform_bound}
Let $\{\alpha_k\}_{k=1}^K$ be fixed distinct encoder design points and let $\{\mathbf{x}_k\}_{k=1}^K$ be fixed input data points. For any $\lambda>0$, let
\[
    u_\lambda
    =
    \operatorname*{arg\,min}_{u\in\hild{2}{d}}
    \left\{
        \frac{1}{K}
        \sum_{k=1}^K
        \|u(\alpha_k)-\mathbf{x}_k\|_2^2
        +
        \lambda
        \|u''\|_{\lpd{2}{d}}^2
    \right\}.
\]
Then there exists a constant $R_{\mathrm e}>0$, depending only on
$\{\alpha_k\}_{k=1}^K$ and $\{\mathbf{x}_k\}_{k=1}^K$, such that
\[
    \|u_\lambda\|_{\hild{2}{d}}^2
    \le
    R_{\mathrm e}
\]
for every $\lambda>0$.
\end{lemma}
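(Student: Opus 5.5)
We need a bound on $\|u_\lambda\|_{\hild{2}{d}}^2$ that holds uniformly in $\lambda>0$. The plan is a comparison argument: test the objective against one fixed, $\lambda$-independent competitor that interpolates the data, then use a Poincaré-type argument to control the lower-order parts of the Sobolev norm. Since the objective decouples across the $d$ output coordinates, it suffices to treat a scalar component and sum at the end.

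\emph{Step 1 (energy bound).} Let $s\in\hild{2}{d}$ be the natural cubic spline interpolant of $\{(\alpha_k,\mathbf{x}_k)\}_{k=1}^K$. If $K=1$, take the constant $s\equiv\mathbf{x}_1$ instead. This $s$ depends only on the data. Comparing the objective at $u_\lambda$ and at $s$, and using that $s$ has zero data-fit term, gives
\[
\frac{1}{K}\sum_k\|u_\lambda(\alpha_k)-\mathbf{x}_k\|_2^2+\lambda\|u_\lambda''\|^2\le \lambda\|s''\|^2 .
\]
Dividing by $\lambda$ yields $\|u_\lambda''\|_{\lpd{2}{d}}^2\le\|s''\|_{\lpd{2}{d}}^2=:E_0$, uniformly in $\lambda$.

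\emph{Step 2 (control of the null space).} The same inequality also shows the fit term is at most $\lambda E_0$, which is useless for large $\lambda$. So we use a different route for the low-order part. Write $u_\lambda=\ell_\lambda+r_\lambda$, where $\ell_\lambda$ is affine and $r_\lambda$ satisfies $r_\lambda(\alpha_1)=0$ and $r_\lambda'(\alpha_1)=0$ when $K\ge2$. Taylor's formula with integral remainder gives $\|r_\lambda\|_{\mathbb{W}^{1,\infty}}\le C\|u_\lambda''\|_{L^2}\le C\sqrt{E_0}$ on $\Omega=(-1,1)$.

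For the affine part, note that for fixed $r_\lambda$ the optimal $\ell_\lambda$ minimizes the data-fit term alone, since $\ell''=0$. So $\ell_\lambda$ is the least-squares affine fit to the residuals $\mathbf{x}_k-r_\lambda(\alpha_k)$. For $K\ge2$ distinct points, this least-squares map is a fixed bounded linear operator depending only on $\{\alpha_k\}$. Hence the coefficients of $\ell_\lambda$ are bounded by $C(\max_k\|\mathbf{x}_k\|+\sqrt{E_0})$.

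For $K=1$, use $\ell_\lambda=$ constant, with the linear part absorbed as follows. The null space is not identifiable, but the minimizer is unique only up to it. The minimizer stated in the lemma is the smoothing spline, which for $K=1$ is the constant $\mathbf{x}_1$. We handle this degenerate case directly.

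\emph{Step 3 (assemble).} Summing the bounds on $\|\ell_\lambda\|$, $\|\ell_\lambda'\|$, $\|r_\lambda\|$, $\|r_\lambda'\|$ in $L^2(\Omega)$, using $\operatorname{Leb}(\Omega)=2$, together with Step 1, gives $\|u_\lambda\|_{\hild{2}{d}}^2\le R_{\mathrm e}$. Here $R_{\mathrm e}$ depends only on the $\alpha_k$, the $\mathbf{x}_k$, and $E_0$, and the last of these is itself a function of the data.

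The main obstacle is Step 2, namely ensuring the affine component cannot blow up as $\lambda\to\infty$. The key observation is the orthogonality/optimality of $\ell_\lambda$ given $r_\lambda$, which reduces the question to the boundedness of a finite-dimensional least-squares projection. If one prefers, the same conclusion follows from the smoothing-spline limit $\lambda\to\infty$ being the least-squares line. Continuity in $\lambda$ is not needed, since the bound above is explicit for every $\lambda$.
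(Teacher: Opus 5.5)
Your argument is correct for $K\ge2$, and it takes a genuinely different route from the paper's.

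\textbf{The paper's route.} The paper first restricts to natural cubic splines with knots at the $\alpha_k$. It parametrizes each coordinate by its values $\mathbf{y}^j$ at the design points through the cardinal natural-spline basis $\{\ell_i\}$. The normal equations $(I_K+K\lambda\Omega)\mathbf{y}^j_\lambda=\mathbf{x}^j$, with $\Omega$ positive semidefinite, then show that the fitted values are a contraction of the data: $\|\mathbf{y}^j_\lambda\|_2\le\|\mathbf{x}^j\|_2$. The $H^2$ bound follows from the fixed linear map sending values to splines, which gives the explicit constant $R_{\mathrm e}=(\sum_i\|\ell_i\|_{H^2}^2)(\sum_k\|\mathbf{x}_k\|_2^2)$.

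\textbf{Your route.} You never identify the form of the minimizer. Comparing with a fixed interpolant caps the curvature uniformly in $\lambda$. You then rule out blow-up of the null-space component as $\lambda\to\infty$ with a clean observation. Changing the affine part leaves the penalty untouched. Hence the Taylor polynomial of $u_\lambda$ at $\alpha_1$ must be the least-squares affine fit to the residuals $\mathbf{x}_k-r_\lambda(\alpha_k)$. Once there are at least two distinct design points, that fit is given by a fixed bounded operator.

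\textbf{What each buys.} The paper's argument gives a very explicit constant, plus the extra fact that the smoother is a contraction at the design points. It relies, however, on the natural-spline representation and on the first-order condition. Your argument is purely variational and applies to any minimizer. It carries over verbatim to higher-order penalties whose polynomial null space is unisolvent on $\{\alpha_k\}$. It also shows exactly where $K\ge2$ enters.

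\textbf{The case $K=1$.} Your paragraph on this case is muddled and should be replaced. With a single design point, every affine function through $(\alpha_1,\mathbf{x}_1)$ attains objective value $0$. So the argmin is an unbounded set, and no uniform bound can hold without a selection convention. It is cleaner simply to assume $K\ge2$. The paper's proof also assumes this implicitly, since for a single knot the cardinal natural spline $\ell_1$ is not uniquely determined.
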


See Appendix~\ref{app:proof_uniform_encoder_bound} for the proof. We now prove Theorem~\ref{th:efficient_encoder}. Define
\[
    \mathcal{E}(u)
    :=
    \frac{1}{K}
    \sum_{k=1}^K
    \|u(\alpha_k)-\mathbf{x}_k\|_2^2 .
\]
From the loss decomposition and the bounds derived in Theorem~\ref{th:ldec_worstcase} and \eqref{eq:lenc_bound}, for any encoder $u\in\hild{2}{d}$ we have
\begin{align}\label{eq:efficient_encoder_start}
\max_{|\mathcal{F}| \geq N-S}  \inf_{\dec}\mathcal{L}(u,\dec) \leq 
2q^2\mathcal{E}(u) + \max_{|\mathcal{F}| \geq N-S}  \inf_{\dec}\ldec  
    \le
    2q^2\left[\mathcal{E}(u)
    +
    \Gamma_{S,N}
    \psi\!\left(\|u\|_{\hild{2}{d}}^2\right)\right],
\end{align}
where $\Gamma_{S,N}$ is defined in \eqref{eq:lamb_enc} and
\(
    \psi(t)=t+4t^2.
\)

By Lemma~\ref{lem:efficient_encoder_uniform_bound}, there exists a constant $R_{\mathrm e}>0$, depending only on the encoder design points and the input data, such that the minimizer $\encdagger$ of \eqref{eq:efficient_encoder_opt} satisfies
\(
    \|\encdagger\|_{\hild{2}{d}}^2
    \le
    R_{\mathrm e}.
\)
Therefore, for
\(
    t=\|\encdagger\|_{\hild{2}{d}}^2,
\)
we have $0\le t\le R_{\mathrm e}$, and hence
\begin{align}\label{eq:psi_efficient_linear}
    \psi(t)
    =
    t+4t^2
    \le
    t(1+4R_{\mathrm e})
    \le
    (1+4R_{\mathrm e})
    \left(
        R_{\mathrm e}
        +
        \|(\encdagger)''\|_{\lpd{2}{d}}^2
    \right).
\end{align}
Thus, by setting
\(
    m_1:=1
\)
and
\(
    m_2:=4,
\)
we obtain
\[
    \psi\!\left(\|\encdagger\|_{\hild{2}{d}}^2\right)
    \le
    (m_1+m_2R_{\mathrm e})
    \left(
        R_{\mathrm e}
        +
        \|(\encdagger)''\|_{\lpd{2}{d}}^2
    \right).
\]
Multiplying both sides by $\Gamma_{S,N}$ and using the definition
\[
    \lambda_{\mathrm e}
    :=
    \Gamma_{S,N}(m_1+m_2R_{\mathrm e}),
\]
we get
\begin{align}\label{eq:efficient_encoder_regularizer_bound}
    \Gamma_{S,N}
    \psi\!\left(\|\encdagger\|_{\hild{2}{d}}^2\right)
    \le
    \lambda_{\mathrm e}\cdot
    \left(
        R_{\mathrm e}
        +
        \|(\encdagger)''\|_{\lpd{2}{d}}^2
    \right).
\end{align}

Finally, substituting $u=\encdagger$ into \eqref{eq:efficient_encoder_start} and applying \eqref{eq:efficient_encoder_regularizer_bound} yields
\begin{align}
    \max_{|\mathcal{F}| \geq N-S}  \inf_{\dec}\mathcal{L}(\encdagger,\dec) \leq 
    2q^2\left[
    \frac{1}{K}
    \sum_{k=1}^K
    \left\|
        \encdagger(\alpha_k)-\mathbf{x}_k
    \right\|_2^2
    + 
    \lambda_{\mathrm e}
    \left(
        R_{\mathrm e}
        +
        \|(\encdagger)''\|_{\lpd{2}{d}}^2
    \right)\right].
\end{align}
Using the fact that
\[
    \inf_{\substack{
        \enc
    }}
    \ 
    \max_{|\mathcal{F}| \geq N-S}  \inf_{\substack{
        \dec 
    }}
    \mathcal{L}( \enc, \dec) \leq \max_{|\mathcal{F}| \geq N-S}  \inf_{\substack{
        \dec 
    }}
    \mathcal{L}( \encdagger, \dec)
\]
completes the proof.

\subsection{Proof of Theorem~\ref{th:ldec_worst_asymp}}

Recall from \eqref{eq:const_th2} that
\[
    A_{S,N}
    =
    (S+1)^4\delmax{\bm{\beta}}^4
    +
    N\declamb
    (S+1)^3
    \frac{\delmax{\bm{\beta}}^3}{\delmin{\bm{\beta}}^2}.
\]
By Assumption~\ref{ass:decoder_spacing} we have
\(
    \delmax{\bm{\beta}}
    \le
    B_2N^{-\gamma_2}
    \) and \(
    \frac{\delmax{\bm{\beta}}}{\delmin{\bm{\beta}}}
    \le
    B_1N^{\gamma_1}.
\)
Thus,
\begin{align}
    (S+1)^4\delmax{\bm{\beta}}^4
    &\le
    B_2^4
    (S+1)^4
    N^{-4\gamma_2},
    \label{eq:A_asymp_first_term}
\end{align}
and 
\begin{align}
    N\declamb
    (S+1)^3
    \frac{\delmax{\bm{\beta}}^3}{\delmin{\bm{\beta}}^2}
    &=
    N\declamb
    (S+1)^3
    \delmax{\bm{\beta}}
    \left(
        \frac{\delmax{\bm{\beta}}}{\delmin{\bm{\beta}}}
    \right)^2
    \nonumber\\
    &\le
    B_1^2B_2
    (S+1)^3
    N^{1-\gamma_2+2\gamma_1}
    \declamb \nonumber \\
    &\lec{(a)}{\le}
    B_1^2B_2
    (S+1)^3
    N^{-4\gamma_2},
    \nonumber \\
    &\lec{(b)}{\le}
    B_1^2B_2 \frac{(S+1)^4}{N^{4\gamma_2}}.
    \label{eq:A_asymp_final_bound}
\end{align}
where (a) follows from the assumption $\declamb \le N^{-(2\gamma_1+3\gamma_2+1)}$, and (b) follows by the fact that $(S+1)^3\le (S+1)^4$.

Combining \eqref{eq:A_asymp_first_term} and \eqref{eq:A_asymp_final_bound}  and defining $C_A:=\max\{B_1^2B_2, B_2^4\}$, we obtain:
\begin{align}
    A_{S,N} \le C_A\frac{(S+1)^4}{N^{4\gamma_2}}.
\end{align}

Substituting \eqref{eq:A_asymp_final_bound} into Theorem~\ref{th:ldec_worstcase} gives
\begin{align}
    \max_{|\mathcal{F}| \geq N-S}  \inf_{\dec}\ldec
    &\le
    C_1
    \max\{\mu^2,q^2\}
    \left(
        C_A\frac{(S+1)^4}{N^{4\gamma_2}}
    \right)^{3/4}
    \left(
        1+
        C_A
        \frac{(S+1)^4}{N^{4\gamma_2}}
    \right)^{1/4}
    \psi\!\left(\|\enc\|_{\hild{2}{d}}^2\right)
    \nonumber\\
    &\le
    C_2
    \max\{\mu^2,q^2\}
    \left(
        \frac{S+1}{N^{\gamma_2}}
    \right)^3
    \left(
        1+
        \left(
            \frac{S+1}{N^{\gamma_2}}
        \right)^4
    \right)^{1/4}
    \psi\!\left(\|\enc\|_{\hild{2}{d}}^2\right),
    \label{eq:ldec_asymp_with_extra_factor}
\end{align}
where $C_2:=C_1C_A^{3/4}\cdot \max\{1,C_A\}^{1/4}$.

\subsection{Proof of Corollary~\ref{cor:worst_convergence}}

Let $\widetilde{\enc}$ denote the natural cubic spline interpolant satisfying
\begin{align}\label{eq:nat_enc}
    \widetilde{\enc}(\alpha_k)=\mathbf{x}_k,
    \qquad k\in[K].
\end{align}
Since the encoder design points and input data are fixed, both
$\|\widetilde{\enc}\|_{\hild{2}{d}}$ and
$\|\widetilde{\enc}''\|_{\lpd{2}{d}}$ are independent of $N$. Now consider the encoder $\encstar$ designed according to \eqref{eq:enc_opt}. From the loss decomposition and the encoder-design upper bound in \eqref{eq:l_unified_bound}, we have
\begin{align}
    \max_{|\mathcal{F}| \geq N-S}  \inf_{\substack{
        \dec 
    }}  \mathcal{L}( \encstar, \dec)
    \le
    2q^2\left[
    \frac{1}{K}
    \sum_{k=1}^K
    \left\|
        \encstar(\alpha_k)-\mathbf{x}_k
    \right\|_2^2
    +
    \Gamma_{S,N}
    \psi\!\left(
        \|\encstar\|_{\hild{2}{d}}^2
    \right)\right].
\end{align}
 By the optimality of $\encstar$ in \eqref{eq:enc_opt}, comparing with $\widetilde{\enc}$ gives
\begin{align}
    \frac{1}{K}
    \sum_{k=1}^K
    \left\|
        \encstar(\alpha_k)-\mathbf{x}_k
    \right\|_2^2
    +
    \Gamma_{S,N}
    \psi\!\left(
        \|\encstar\|_{\hild{2}{d}}^2
    \right)
    &\le
    \frac{1}{K}
    \sum_{k=1}^K
    \left\|
        \widetilde{\enc}(\alpha_k)-\mathbf{x}_k
    \right\|_2^2
    +
    \Gamma_{S,N}
    \psi\!\left(
        \|\widetilde{\enc}\|_{\hild{2}{d}}^2
    \right)
    \nonumber\\
    &=
    \Gamma_{S,N}
    \psi\!\left(
        \|\widetilde{\enc}\|_{\hild{2}{d}}^2
    \right).
\end{align}
For $S=o(N^{\gamma_2})$, Theorem~\ref{th:ldec_worst_asymp} implies
\[
    \Gamma_{S,N}\cdot \psi\!\left(
        \|\widetilde{\enc}\|_{\hild{2}{d}}^2
    \right)
    =
    \mathcal{O}\!\left(
        \frac{(S+1)^3}{N^{3\gamma_2}}
    \right),
\]
because
\(
    (
        1+
        \left(
            \frac{S+1}{N^{\gamma_2}}
        \right)^4
    )^{1/4} \le 2
\)
for sufficiently large $N$. Thus, we obtain
\begin{align}\label{eq:wc_con_proof_1}
    \max_{|\mathcal{F}| \geq N-S}  \inf_{\substack{
        \dec 
    }}  \mathcal{L}( \encstar, \dec)
    =
    \mathcal{O}\!\left(
        \frac{(S+1)^3}{N^{3\gamma_2}}
    \right).
\end{align}

Now consider the efficient encoder design in Theorem~\ref{th:efficient_encoder}. From \eqref{eq:efficient_encoder_loss_bound},
\begin{align}
    \max_{|\mathcal{F}| \geq N-S}  \inf_{\substack{
        \dec 
    }}  \mathcal{L}( \encdagger, \dec)
    \le
    2q^2\left[
    \frac{1}{K}
    \sum_{k=1}^K
    \left\|
        \encdagger(\alpha_k)-\mathbf{x}_k
    \right\|_2^2
    +
    \lambda_{\mathrm e}
    \left(
        R_{\mathrm e}
        +
        \|(\encdagger)''\|_{\lpd{2}{d}}^2
    \right)\right].
\end{align}
By the optimality of $\encdagger$ in \eqref{eq:efficient_encoder_opt} and the interpolation property  $\widetilde{\enc}(\alpha_k)=\mathbf{x}_k$ in \eqref{eq:nat_enc}, we obtain
\begin{align}
    \frac{1}{K}
    \sum_{k=1}^K
    \left\|
        \encdagger(\alpha_k)-\mathbf{x}_k
    \right\|_2^2
    +
    \lambda_{\mathrm e}
    \|(\encdagger)''\|_{\lpd{2}{d}}^2
    &\le
    \frac{1}{K}
    \sum_{k=1}^K
    \left\|
        \widetilde{\enc}(\alpha_k)-\mathbf{x}_k
    \right\|_2^2
    +
    \lambda_{\mathrm e}
    \|\widetilde{\enc}''\|_{\lpd{2}{d}}^2 \nonumber \\
    &\lec{}{=}
    \lambda_{\mathrm e}
    \|\widetilde{\enc}''\|_{\lpd{2}{d}}^2.
\end{align}
Therefore,
\begin{align}
    \max_{|\mathcal{F}| \geq N-S}  \inf_{\substack{
        \dec 
    }}  \mathcal{L}( \encdagger, \dec)
    \le
    \lambda_{\mathrm e}
    \left(
        R_{\mathrm e}
        +
        \|\widetilde{\enc}''\|_{\lpd{2}{d}}^2
    \right).
\end{align}
Since $R_{\mathrm e}$ and $\|\widetilde{\enc}''\|_{\lpd{2}{d}}^2$ are independent of $N$, and since
\[
    \lambda_{\mathrm e}
    =
    \Gamma_{S,N}(m_1+m_2R_{\mathrm e})
    =
    \mathcal{O}\!\left(
        \frac{(S+1)^3}{N^{3\gamma_2}}
    \right)
\]
for $S=o(N^{\gamma_2})$, the same rate follows:
\begin{align}\label{eq:wc_con_proof_2}
    \max_{|\mathcal{F}| \geq N-S}  \inf_{\substack{
        \dec 
    }}  \mathcal{L}( \encdagger, \dec)
    =
    \mathcal{O}\!\left(
        \frac{(S+1)^3}{N^{3\gamma_2}}
    \right).
\end{align}

Combining \eqref{eq:wc_con_proof_1} and \eqref{eq:wc_con_proof_2}, we obtain the desired convergence rate. Indeed, since $\objwc$ is the infimum over all admissible encoders, it is upper-bounded by the performance of either $\encstar$ or $\encdagger$. Hence,
\begin{align}
    \objwc
    &\le
    \min\left\{
        \max_{|\mathcal{F}|\ge N-S}
        \inf_{\dec}
        \mathcal{L}(\encstar,\dec),
        \,
        \max_{|\mathcal{F}|\ge N-S}
        \inf_{\dec}
        \mathcal{L}(\encdagger,\dec)
    \right\} =
    \mathcal{O}\!\left(
        \frac{(S+1)^3}{N^{3\gamma_2}}
    \right).
\end{align}

\subsection{Proof of Theorem~\ref{th:ldec_prob_nonasymp}}

Fix a realization of the non-straggler set $\mathcal{F}\subseteq[N]$ with $|\mathcal{F}|\ge2$, and let
\(
    \mathcal{S}_{\mathcal F}:=[N]\setminus\mathcal{F}
\)
be the corresponding straggler set. Let $R_{\mathcal F}$ denote the maximum number of consecutive indices in $\mathcal{S}_{\mathcal F}$. Also define the available decoder design points
\(
    \bm{\beta}_{\mathcal F}:=\{\beta_i\}_{i\in\mathcal F},
\)
ordered increasingly.

The proof follows the same steps as the proof of Theorem~\ref{th:ldec_worstcase}. The only difference is the way we control the geometry of the available at the decoder design points. In the worst-case setting, if at most $S$ workers straggle, then the maximum number of consecutive straggler worker nodes is at most $S+1$. Here, for the realized straggler pattern $\mathcal{S}_{\mathcal F}$, the largest number of consecutive straggler worker nodes is $R_{\mathcal F}$. Therefore,
\begin{align}\label{eq:prob_active_spacing_max}
    \delmax{\bm{\beta}_{\mathcal F}}
    \le
    (R_{\mathcal F}+1)\delmax{\bm{\beta}},
    \qquad 
    \delmin{\bm{\beta}_{\mathcal F}}
    \ge
    \delmin{\bm{\beta}}.
\end{align}

Now define
\(
    g(t):=f(\enc(t))
\)
and
\(
    h(t):=\decstar(t)-g(t).
\)
By Lemma~\ref{lem:composition_sobolev}, $g\in\hild{2}{m}$, and since $\decstar\in\hild{2}{m}$, we have $h\in\hild{2}{m}$. Repeating the same supremum-norm and smoothing-spline error arguments used in \eqref{eq:ldec_linf_step} to \eqref{eq:ldec_ASN_intermediate} in the proof of Theorem~\ref{th:ldec_worstcase}, but with the  set $\bm{\beta}_{\mathcal F}$, yields
\begin{align}\label{eq:prob_ldec_intermediate}
    \ldec(\mathcal{F})
    \le
    C
    \|g''\|_{\lpd{2}{m}}^2
    L_{\mathcal F}^{3/4}
    \left(
        1+\frac{L_{\mathcal F}}{16}
    \right)^{1/4},
\end{align}
where
\begin{align}\label{eq:prob_LF_def}
    L_{\mathcal F}
    :=
    p_2\!\left(
        \frac{\delmax{\bm{\beta}_{\mathcal F}}}
             {\delmin{\bm{\beta}_{\mathcal F}}}
    \right)
    \frac{
        |\mathcal F|\delmax{\bm{\beta}_{\mathcal F}}
    }{4}
    \declamb
    +
    D
    \delmax{\bm{\beta}_{\mathcal F}}^4 .
\end{align}
Here, $p_2(\cdot)$ is a degree-two polynomial with positive coefficients, and $C,D>0$ are constants independent of $N$ and $\mathcal F$.

Using $|\mathcal F|\le N$, \eqref{eq:prob_active_spacing_max}, and using the fact that $p_2(x)\le Jx^2$ for all $x\ge1$ for some constant $J>0$, we obtain
\begin{align}
    L_{\mathcal F}
    &\le
    J
    \left(
        (R_{\mathcal F}+1)
        \frac{\delmax{\bm{\beta}}}{\delmin{\bm{\beta}}}
    \right)^2
    \frac{
        N(R_{\mathcal F}+1)\delmax{\bm{\beta}}
    }{4}
    \declamb
    +
    D(R_{\mathcal F}+1)^4
    \delmax{\bm{\beta}}^4
    \nonumber\\
    &\le
    C_P
    \left[
        N\declamb
        (R_{\mathcal F}+1)^3
        \frac{\delmax{\bm{\beta}}^3}{\delmin{\bm{\beta}}^2}
        +
        (R_{\mathcal F}+1)^4
        \delmax{\bm{\beta}}^4
    \right]
    \nonumber\\
    &=
    C_P A_{\mathcal F,N},
    \label{eq:prob_LF_bound_by_AFN}
\end{align}
for a constant $C_P>0$ independent of $N$ and $\mathcal F$.

Substituting \eqref{eq:prob_LF_bound_by_AFN} into \eqref{eq:prob_ldec_intermediate}, using Lemma~\ref{lem:composition_second_derivative_bound} and absorbing constants into $C_1$ yields
\[
    \ldec(\mathcal{F})
    \le
    C_1
    \max\{\mu^2,q^2\}
    A_{\mathcal F,N}^{3/4}
    \left(
        1+\frac{A_{\mathcal F,N}}{16}
    \right)^{1/4}
    \psi\!\left(
        \|\enc\|_{\hild{2}{d}}^2
    \right).
\]
Taking expectation over $\mathcal{F}\sim F_{p, N}$ from both sides proves Theorem~\ref{th:ldec_prob_nonasymp}.

\subsection{Proof of Theorem~\ref{th:prob_dec_bound}}
\label{sec:proof_prob_dec_bound}

We begin from the non-asymptotic bound in Theorem~\ref{th:ldec_prob_nonasymp}. For every realized non-straggler set $\mathcal{F}$, we have
\begin{align}\label{eq:proof_prob_start}
    \ldec(\mathcal{F})
    \le
    C_1
    \max\{\mu^2,q^2\}
    A_{\mathcal{F},N}^{3/4}
    \left(
        1+\frac{A_{\mathcal{F},N}}{16}
    \right)^{1/4}
    \psi\!\left(\|\enc\|_{\hild{2}{d}}^2\right),
\end{align}
where
\begin{align}\label{eq:proof_prob_A_def}
    A_{\mathcal{F},N}
    =
    (R_{\mathcal{F}}+1)^4
    \delmax{\bm{\beta}}^4
    +
    N\declamb
    (R_{\mathcal{F}}+1)^3
    \frac{\delmax{\bm{\beta}}^3}{\delmin{\bm{\beta}}^2}.
\end{align}
Here $R_{\mathcal{F}}$ is the longest run of consecutive stragglers in the realized straggler pattern.

We first upper-bound $A_{\mathcal{F},N}$ using Assumption~\ref{ass:decoder_spacing}. Since
\(
    \delmax{\bm{\beta}}
    \le
    B_2N^{-\gamma_2}
\) and 
\(
    \frac{\delmax{\bm{\beta}}}{\delmin{\bm{\beta}}}
    \le
    B_1N^{\gamma_1},
\)
we have:
\begin{align}
\label{eq:asymp_dec_bound_eq}
    (R_{\mathcal{F}}+1)^4
    \delmax{\bm{\beta}}^4
    +
    N\declamb
    (R_{\mathcal{F}}+1)^3
    \frac{\delmax{\bm{\beta}}^3}{\delmin{\bm{\beta}}^2} &\lec{}{\le} B_2^4
    (R_{\mathcal{F}}+1)^4
    N^{-4\gamma_2} + B_1^2B_2
    (R_{\mathcal{F}}+1)^3
    N^{1-\gamma_2+2\gamma_1}
    \declamb
    \nonumber \\ 
    &\lec{(a)}{\le }
    B_2^4
    (R_{\mathcal{F}}+1)^4
    N^{-4\gamma_2} + B_1^2B_2
    (R_{\mathcal{F}}+1)^3
    N^{-4\gamma_2} \nonumber \\
    &\lec{(b)}{\le}
    C_A\frac{(R_{\mathcal{F}}+1)^4}{N^{4\gamma_2}},
\end{align}
where $C_A:=\max\{B_2^4, B_2^2B_2\}$, (a) comes from $\declamb \le N^{-(2\gamma_1+3\gamma_2+1)}$, and (b) follows by $(R_{\mathcal{F}}+1)^3 \le (R_{\mathcal{F}}+1)^4$.

Substituting \eqref{eq:asymp_dec_bound_eq} into \eqref{eq:proof_prob_start}, and absorbing constants into $C_5>0$, gives
\begin{align}\label{eq:proof_prob_ldec_R}
    \ldec(\mathcal{F})
    \le
    C_5
    \max\{\mu^2,q^2\}
    \frac{(R_{\mathcal{F}}+1)^3}{N^{3\gamma_2}}
    \left[
        1+
        \frac{(R_{\mathcal{F}}+1)^4}{N^{4\gamma_2}}
    \right]^{1/4}
    \psi\!\left(\|\enc\|_{\hild{2}{d}}^2\right).
\end{align}

We now take expectation over $\mathcal{F}\sim\mathsf{F}_{p,N}$. To control the moments of $R_{\mathcal{F}}$, we use the following standard longest-run estimate.

\begin{lemma}\label{lem:longest_run_moment}
Let $R_{\mathcal{F}}$ be the longest run of consecutive stragglers in a length-$N$ Bernoulli sequence, where each worker straggles independently with probability $p\in(0,1)$. Then there exist constants $C_R>0$ and $n_0\in\mathbb{N}$, independent of $N$, such that for all $N>n_0$,
\begin{align}\label{eq:longest_run_fourth_moment}
    \mathbb{E}_{\mathcal{F}\sim\mathsf{F}_{p,N}}
    \!\left[
        (R_{\mathcal{F}}+1)^4
    \right]
    \le
    C_R
    \left(
        \log_{1/p}\!\big((1-p)N\big)
    \right)^4 .
\end{align}
\end{lemma}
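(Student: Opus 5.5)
We bound the moment through the tail of $R_{\mathcal F}$, using a union bound over the possible starting positions of a long run, and then integrate the tail. Set $L_N:=\log_{1/p}((1-p)N)$.

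\textbf{Tail bound.} For any integer $r\ge1$, the event $\{R_{\mathcal F}\ge r\}$ requires some block of $r$ consecutive workers $\{i,\dots,i+r-1\}$, $i\in[N-r+1]$, to all straggle. Each such block has probability $p^r$, so the crude union bound gives
\[
\Pr(R_{\mathcal F}\ge r)\le N p^r .
\]
To get exactly the argument $(1-p)N$ inside the logarithm, we instead count run starts. A run of length at least $r$ must begin either at position $1$ or at a position $i\ge 2$ where worker $i-1$ is non-straggling. Hence
\[
\Pr(R_{\mathcal F}\ge r)\le p^r+(N-1)(1-p)p^r\le 2(1-p)N p^r
\]
for $N\ge 1/(1-p)$. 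Either way, $\Pr(R_{\mathcal F}\ge L_N+s)\le c\,p^{s}$ for $s\ge0$, with a constant $c$ depending only on $p$. The normalization inside the logarithm only shifts $L_N$ by an additive constant, which the constants absorb.

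\textbf{Moment from the tail.} Split the expectation at the level $r_0:=\lceil L_N\rceil$:
\[
\mathbb{E}[(R_{\mathcal F}+1)^4]\le (r_0+1)^4+\sum_{r> r_0}\big((r+1)^4-r^4\big)\Pr(R_{\mathcal F}\ge r).
\]
For the first term, $(r_0+1)^4\le (L_N+2)^4$. For the second term, $(r+1)^4-r^4\le 15(r+1)^3$. Writing $r=r_0+s$ and using the tail bound,
\[
\sum_{s\ge1}15\,(r_0+s+1)^3\, c\, p^{s}
\le C\sum_{s\ge1}\big(r_0^3+(s+1)^3\big)p^s
=\mathcal O(L_N^3+1),
\]
because $\sum_s s^3p^s<\infty$ for fixed $p<1$. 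Therefore
\[
\mathbb{E}[(R_{\mathcal F}+1)^4]\le (L_N+2)^4+C'(L_N^3+1).
\]

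\textbf{Conclusion.} Choose $n_0$ so that $L_N\ge1$ for all $N>n_0$, which holds once $(1-p)N\ge 1/p$. Then every term above is at most a constant multiple of $L_N^4$, and the claim holds with a suitable $C_R$ depending only on $p$.

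The step that needs the most care is the tail bound: the union-bound constant must not introduce extra factors of $N$ beyond the one that $L_N$ absorbs, and $n_0$ must be chosen so that $L_N$ is bounded away from zero, which lets the additive constants be absorbed. The remaining steps are routine geometric-series estimates.
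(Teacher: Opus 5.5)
Your proof is correct, and it takes a different route from the paper's.

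\textbf{The paper's route.} The paper does not derive a tail bound itself. It imports the extreme-value theorem of Gordon, Schilling and Waterman for the longest run. That theorem supplies the mean, $\mu=\mathbb{E}[R_{\mathcal F}]=\log_{1/p}((1-p)N)+\mathcal{O}(1)+o(1)$, and an upper-tail concentration bound $\Pr(R_{\mathcal F}-\mu>t)\le C_3p^t$ centred at that mean. The paper then writes the fourth moment through the survival function and splits at $\lfloor\mu\rfloor+2$. It bounds the core by $(\mu+3)^4$ and the tail by arithmetico-geometric series $\sum_t t^k p^t$. Only at the end does it convert $\mu^4$ into $L_N^4$ using the mean asymptotics. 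This step needs a separate remark (via Guibas--Odlyzko) to make the $o(1)$ remainder uniformly small for $N>n_0$.

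\textbf{Your route.} You anchor the tail directly at $L_N$ with a first-moment count of run starts: $\Pr(R_{\mathcal F}\ge r)\le p^r+(N-1)(1-p)p^r$. Since $p^{L_N}=((1-p)N)^{-1}$, this gives $\Pr(R_{\mathcal F}\ge L_N+s)\le 2p^s$. The same survival-function split then works at $\lceil L_N\rceil$ without any reference to $\mu$.

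\textbf{What each buys.} Your argument is self-contained and elementary. It gives explicit constants, for example $n_0$ of order $1/(p(1-p))$ and $c=2$. It needs no control of an asymptotic remainder. It also shows where the $(1-p)$ normalization inside the logarithm comes from: each run start other than position $1$ costs a factor $1-p$. The paper's machinery is heavier than a one-sided moment bound requires. What it does add is that $\mathbb{E}[R_{\mathcal F}]$ really is $L_N+\mathcal{O}(1)$, so the logarithmic growth is actually attained. Your union bound does not show this, and the lemma does not need it.
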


The proof is given in Appendix~\ref{app:proof_longest_run_2}. By the Jensen inequality, we obtain
\begin{align}\label{eq:longest_run_third_moment}
    \mathbb{E}
    \!\left[
        (R_{\mathcal{F}}+1)^3
    \right] = \mathbb{E}
    \!\left[\left(
        (R_{\mathcal{F}}+1)^4\right)^{3/4}
    \right]
    \le
    \left(
        \mathbb{E}
        \!\left[
            (R_{\mathcal{F}}+1)^4
        \right]
    \right)^{3/4}
    \le
    C_R^4
    \left(
        \log_{1/p}\!\big((1-p)N\big)
    \right)^3 .
\end{align}

Thus, the expectation of the random factor in \eqref{eq:proof_prob_ldec_R} satisfies
\begin{align}
&\mathbb{E}
\left[
    (R_{\mathcal{F}}+1)^3
    \left(
        1+
        \frac{(R_{\mathcal{F}}+1)^4}{N^{4\gamma_2}}
    \right)^{1/4}
\right]
\lec{(a)}{\le}
\mathbb{E}
\left[
    (R_{\mathcal{F}}+1)^3
\right]
+
\frac{1}{N^{\gamma_2}}
\mathbb{E}
\left[
    (R_{\mathcal{F}}+1)^4
\right],
\end{align}
where (a) follows from $(1+x)^{1/4}\le 1+x^{1/4}$ for $x\ge0$.
By \eqref{eq:longest_run_fourth_moment} and \eqref{eq:longest_run_third_moment}, we obtain
\begin{align}
&\mathbb{E}
\left[
    (R_{\mathcal{F}}+1)^3
    \left(
        1+
        \frac{(R_{\mathcal{F}}+1)^4}{N^{4\gamma_2}}
    \right)^{1/4}
\right]
\nonumber\\
&\qquad\le
C_R^4
\left(
    \log_{1/p}\!\big((1-p)N\big)
\right)^3
+
\frac{C_R^4}{N^{\gamma_2}}
\left(
    \log_{1/p}\!\big((1-p)N\big)
\right)^4 .
\end{align}
Since $\gamma_2>0$, the factor
\(
    \frac{\log_{1/p}((1-p)N)}{N^{\gamma_2}}
\)
is bounded for sufficiently large $N$. Therefore, there exists $n_0\in \mathbb{N}$, such that for $N> n_0$ we have:
\begin{align}\label{eq:proof_prob_random_factor_final}
&\mathbb{E}
\left[
    (R_{\mathcal{F}}+1)^3
    \left(
        1+
        \frac{(R_{\mathcal{F}}+1)^4}{N^{4\gamma_2}}
    \right)^{1/4}
\right]
\nonumber\\
&\qquad\le
C_R^4
\left(
    \log_{1/p}\!\big((1-p)N\big)
\right)^3 .
\end{align}

Taking expectation in \eqref{eq:proof_prob_ldec_R}, applying \eqref{eq:proof_prob_random_factor_final}, and absorbing all constants into $C_3$, we obtain, for all $N>n_0$,
\begin{align}
    \mathbb{E}_{\mathcal{F}\sim\mathsf{F}_{p,N}}
    \!\left[
        \ldec(\mathcal{F})
    \right]
    \le
    C_3
    \max\{\mu^2,q^2\}
    \frac{
        \left(
            \log_{1/p}\!\big((1-p)N\big)
        \right)^3
    }{
        N^{3\gamma_2}
    }
    \psi\!\left(\|\enc\|_{\hild{2}{d}}^2\right),
\end{align}
where $C_3>0$ is independent of $N$. This proves Theorem~\ref{th:prob_dec_bound}.

\subsection{Proof of Corollary~\ref{cor:prob_convergence}}

Recall that, in the probabilistic setting, the efficient encoder in
\eqref{eq:prob_efficient_encoder_opt} is constructed as
\begin{align}\label{eq:proof_prob_eff_enc}
    \encdagger
    =
    \operatorname*{arg\,min}_{u\in\hild{2}{d}}
    \left\{
        \frac{1}{K}
        \sum_{k=1}^K
        \left\|
            u(\alpha_k)-\mathbf{x}_k
        \right\|_2^2
        +
        \enclamb
        \|u''\|_{\lpd{2}{d}}^2
    \right\},
\end{align}
where the encoder smoothing parameter is
\(
    \enclamb
    =
    \Gamma_{p,N}(m_1+m_2R_{\mathrm e}).
\)
Here $R_{\mathrm e},m_1,m_2$ are the constants from
Theorem~\ref{th:efficient_encoder}, and $\Gamma_{p,N}:=\frac{C_1}{2q^2}\max\{\mu^2,q^2\}\cdot
    \mathbb{E}_{\mathcal{F}\sim\mathsf{F}_{p,N}}
    [
        A_{\mathcal{F},N}^{3/4}
        (
            1+\frac{A_{\mathcal{F},N}}{16}
        )^{1/4}
    ]$ is the coefficient multiplying
$\psi(\|\enc\|_{\hild{2}{d}}^2)$ in the upper bound.

By Theorem~\ref{th:prob_dec_bound}, for sufficiently large $N$,
\[
    \Gamma_{p,N}
    =
    \mathcal{O}\!\left(
        \frac{\log_{1/p}^3(N)}{N^{3\gamma_2}}
    \right).
\]
Since $R_{\mathrm e},m_1,m_2$ are independent of $N$, it follows that
\begin{align}\label{eq:proof_prob_lambda_rate}
    \enclamb
    =
    \mathcal{O}\!\left(
        \frac{\log_{1/p}^3(N)}{N^{3\gamma_2}}
    \right).
\end{align}

Let $\widetilde{\enc}$ be the natural cubic spline interpolant of the input data, i.e.,
\[
    \widetilde{\enc}(\alpha_k)=\mathbf{x}_k,
    \qquad k\in[K].
\]
Since the encoder design points and input data are fixed,
$\|\widetilde{\enc}''\|_{\lpd{2}{d}}^2$ is independent of $N$.

By the optimality of $\encdagger$ in \eqref{eq:proof_prob_eff_enc}, comparing
$\encdagger$ with $\widetilde{\enc}$ gives
\begin{align}
    \frac{1}{K}
    \sum_{k=1}^K
    \left\|
        \encdagger(\alpha_k)-\mathbf{x}_k
    \right\|_2^2
    +
    \enclamb
    \|(\encdagger)''\|_{\lpd{2}{d}}^2
    &\le
    \frac{1}{K}
    \sum_{k=1}^K
    \left\|
        \widetilde{\enc}(\alpha_k)-\mathbf{x}_k
    \right\|_2^2
    +
    \enclamb
    \|\widetilde{\enc}''\|_{\lpd{2}{d}}^2
    \nonumber\\
    &=
    \enclamb
    \|\widetilde{\enc}''\|_{\lpd{2}{d}}^2 .
    \label{eq:proof_prob_eff_opt_compare}
\end{align}

On the other hand, the efficient-encoder surrogate bound gives
\begin{align}\label{eq:proof_prob_eff_loss_bound}
    \mathbb{E}_{\mathcal{F}\sim\mathsf{F}_{p,N}}
    \!\left[
        \mathcal{L}(\fhat_{\mathcal{F}})
    \right]
    \le
    2q^2\left[
    \frac{1}{K}
    \sum_{k=1}^K
    \left\|
        \encdagger(\alpha_k)-\mathbf{x}_k
    \right\|_2^2
    +
    \enclamb
    \left(
        R_{\mathrm e}
        +
        \|(\encdagger)''\|_{\lpd{2}{d}}^2
    \right)\right].
\end{align}
Using \eqref{eq:proof_prob_eff_opt_compare} in
\eqref{eq:proof_prob_eff_loss_bound}, we obtain
\begin{align}
    \mathbb{E}_{\mathcal{F}\sim\mathsf{F}_{p,N}}
    \!\left[
        \mathcal{L}(\fhat_{\mathcal{F}})
    \right]
    &\le
    \enclamb
    \left(
        R_{\mathrm e}
        +
        \|\widetilde{\enc}''\|_{\lpd{2}{d}}^2
    \right).
\end{align}
The term
\(
    R_{\mathrm e}
    +
    \|\widetilde{\enc}''\|_{\lpd{2}{d}}^2
\)
depends only on the fixed encoder design points and input data, and is therefore independent of
$N$. Combining this with \eqref{eq:proof_prob_lambda_rate} yields
\[
    \mathbb{E}_{\mathcal{F}\sim\mathsf{F}_{p,N}}
    \!\left[
        \mathcal{L}(\fhat_{\mathcal{F}})
    \right]
    =
    \mathcal{O}\!\left(
        \frac{\log_{1/p}^3(N)}{N^{3\gamma_2}}
    \right).
\]
This proves Corollary~\ref{cor:prob_convergence}.
\section{Experimental Results}\label{sec:exp_result}

In this section, we present the empirical evaluation of the proposed \gcc{} framework. We first detail our experimental methodology, including the selected computing tasks, baseline methods, and evaluation metrics. Subsequently, we present distinct sets of experiments designed to validate our theoretical convergence bounds and demonstrate the performance superiority of \gcc{}. All experiments are implemented using PyTorch~\cite{paszke2019pytorch} and executed on a single-GPU environment.

\subsection{Experimental Setup}

\textbf{Models and Workloads:} To evaluate the versatility of the proposed framework, we consider two classes of target computation tasks $f(\cdot)$: deep neural networks (DNNs) and polynomial functions.
\begin{itemize}
    \item \textit{Deep Neural Networks:} We evaluate \gcc{} on two architectures representing different levels of model complexity. As a relatively shallow architecture, we consider LeNet-5~\cite{lecun1998gradient}, which contains approximately $6\times10^4$ parameters, on the MNIST dataset~\cite{lecun2010mnist} for handwritten-digit classification. To evaluate the scalability of \gcc{} to large, high-dimensional models, we consider the Vision Transformer ViT-B/16~\cite{dosovitskiy2020image}, which contains approximately 80 million parameters, on the ImageNet-1K dataset~\cite{deng2009imagenet}, a large-scale benchmark for 1,000-class image classification. For both architectures, the output of the final softmax layer is treated as the output of the target function. Accordingly, the target mappings are
\(
    f:\mathbb{R}^{32^2}\to\mathbb{R}^{10}
\)
for LeNet-5 and
\(
    f:\mathbb{R}^{3\times224^2}\to\mathbb{R}^{1000}
\)
for ViT-B/16.
    \item \textit{Polynomial Functions:} To benchmark against classical exact coded computing schemes, we also evaluate \gcc{} on a high-dimensional multivariate polynomial computation task.
\end{itemize}

\textbf{Baselines for Comparison:} We evaluate \gcc{} against two primary baselines:
\begin{itemize}
    \item \textit{Berrut Approximate Coded Computing (\texttt{BACC}):}
Introduced in~\cite{jahani2022berrut}, \texttt{BACC} serves as a baseline coded-computing scheme for approximating general nonlinear functions.

\item \textit{Lagrange Coded Computing (\texttt{LCC}):}
Proposed in~\cite{yu2019lagrange}, \texttt{LCC} is an exact-recovery coded-computing scheme designed for polynomial computations and is provably optimal for this class of tasks.

\end{itemize}

\textbf{Hyper-parameters:} 
For \gcc{}, there are two scalar hyper-parameters, $\enclamb$ and $\declamb$, that must be selected. We determine these parameters via cross-validation using a logarithmic grid search. Empirically, however, the performance of \gcc{} is not particularly sensitive to their precise values. In particular, the limiting choice $\enclamb\to0$ and $\declamb\to0$, which yields minimum-curvature natural cubic spline interpolants for both the encoder and decoder (see Remark~\ref{rem:natural_spline_decoder}), outperforms the baseline methods in all experiments.
Accordingly, we use this parameter-free limiting choice in some experiments, while selecting $\enclamb$ and $\declamb$ through cross-validation in the remaining experiments.

\begin{figure*}[t]
     \centering
     \begin{subfigure}[b]{0.49\textwidth}
         \centering
         \includegraphics[width=1\textwidth]{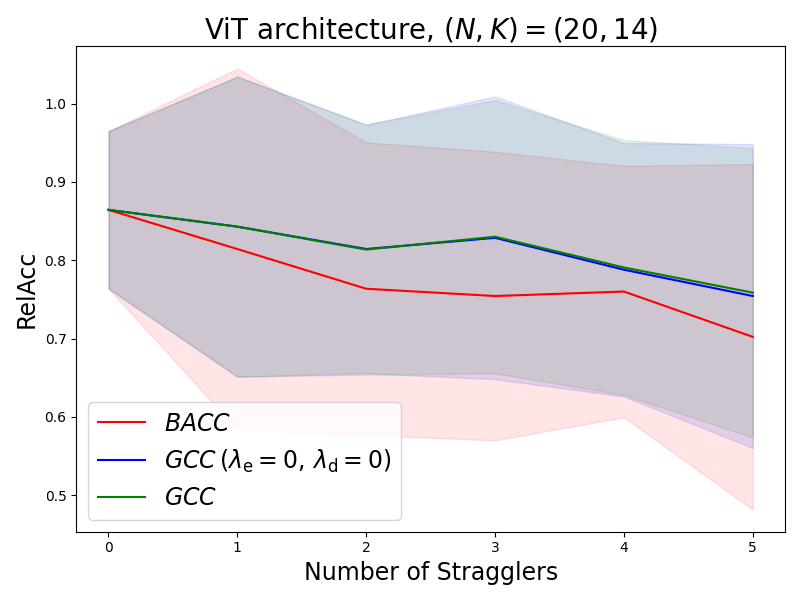}
         \caption{ViT: Relative Accuracy}
         \label{fig:racc_vit}
     \end{subfigure}
     \hfill
     \begin{subfigure}[b]{0.49\textwidth}
         \centering
         \includegraphics[width=\textwidth]{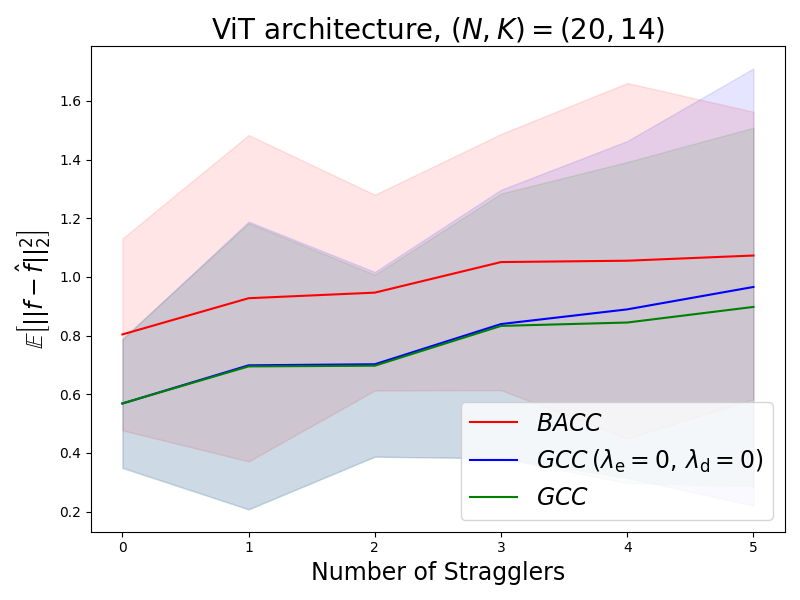}
         \caption{ViT: Mean Squared Error}
         \label{fig:mse_vit}
     \end{subfigure}
     
     \vspace{0.3cm}
     
     \begin{subfigure}[b]{0.49\textwidth}
         \centering
         \includegraphics[width=1\textwidth]{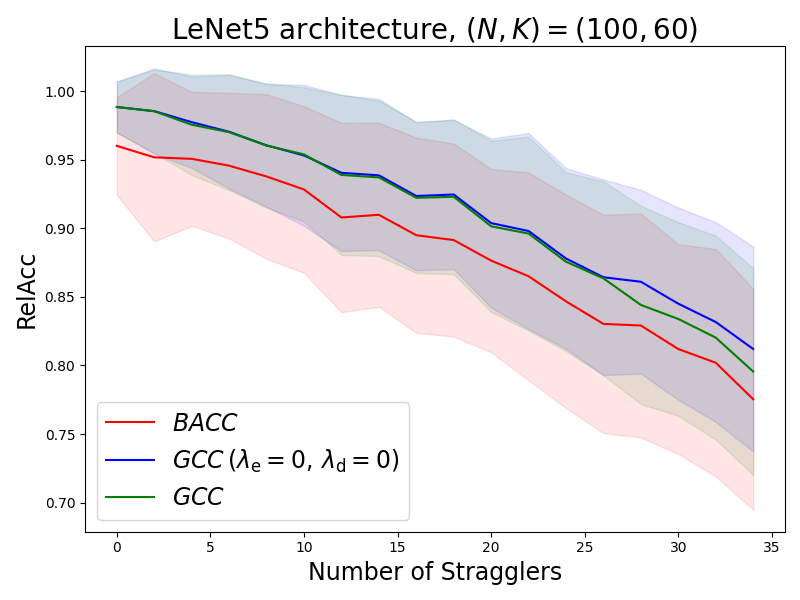}
         \caption{LeNet5: Relative Accuracy}
         \label{fig:racc_lenet}
     \end{subfigure}
     \hfill
     \begin{subfigure}[b]{0.49\textwidth}
         \centering
         \includegraphics[width=1\textwidth]{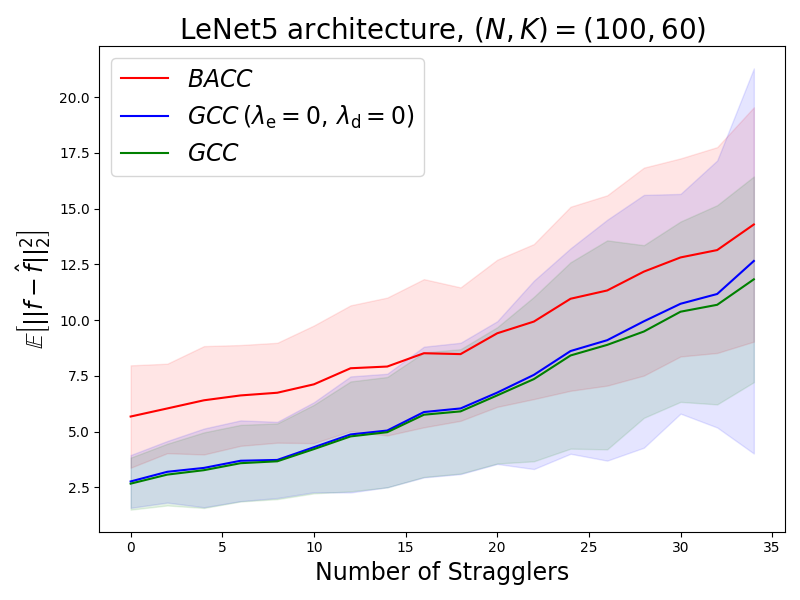}
         \caption{LeNet5: Mean Squared Error}
         \label{fig:mse_lenet}
     \end{subfigure}
     \caption{Performance comparison of \gcc{}, natural spline interpolating \gcc{} ($\enclamb=\declamb=0$), and \texttt{BACC} across varying numbers of stragglers ($S$). The top row displays results for the ViT architecture, while the bottom row shows results for LeNet5. Shaded regions denote the 95\% confidence intervals.}
     \label{fig:perf_comp}
\end{figure*}

\textbf{Interpolation Points:} Following \cite{jahani2022berrut}, to ensure numerical stability, we select Chebyshev points for both the encoder design points $\{\alpha_k\}_{k=1}^K$ and the worker evaluation nodes $\{\beta_n\}_{n=1}^N$. Specifically, we use Chebyshev nodes of the first and second kind: $\alpha_k = -\cos\left(\frac{(2k-1)\pi}{2K}\right)$ and $\beta_n = -\cos\left(\frac{(n-1)\pi}{N-1}\right)$. Thus, the spacing parameters of the decoder design points are $\gamma_1=1$ and $\gamma_2=1$ (see Example~\ref{ex:spacing_points}). This choice guarantees optimal conditioning and provides a fair, identical setup for comparison with the \texttt{BACC} framework~\cite{jahani2022berrut}.

\textbf{Evaluation Metrics:} We quantify performance using two complementary metrics:
\begin{itemize}
    \item \textit{Mean Squared Error (MSE):} Serving as our primary theoretical loss defined in \eqref{eq:gen_obj}, MSE measures the empirical average of the $L_2$ distance between the true function output and our approximation over the input data distribution $\mathcal{X}$ and the non-straggler set $\stset$, i.e., $\mathbb{E}_{\mathbf{x}\sim \mathcal{X}, \stset}\left[\frac{1}{K}\sum^K_{k=1} \| f(\mathbf{x}_k) - \fhat(\mathbf{x}_k)\|^2_2\right]$. This provides an unbiased estimate of the expected loss defined in \eqref{eq:gen_obj}.
    \item \textit{Relative Accuracy (RelAcc):} To translate MSE into task-specific performance for classification models, RelAcc measures the ratio of the approximated model's prediction accuracy to the exact base model's prediction accuracy on the original dataset.
\end{itemize}

\subsection{Performance Evaluation}

\textbf{1. \gcc{} vs. \texttt{BACC} on DNNs across Straggler Regimes:}\\
We first evaluate the performance of \gcc{} for varying numbers of stragglers $S$ in the worst-case setting, using LeNet-5 and ViT-B/16 as the target functions. For each input batch, we independently generate 20 straggler realizations by selecting $S$ failed workers and average the resulting performance metrics. We repeat this procedure over 20 independently sampled input batches and report the overall averages together with the corresponding 95\% confidence intervals. Figure~\ref{fig:perf_comp} presents the resulting performance comparison.

We consider both \gcc{} with tuned smoothing parameters and natual spline interpolating variant obtained by $\enclamb=0$ and $\declamb=0$. Both variants consistently achieve lower MSE and higher RelAcc than \texttt{BACC} across nearly all straggler configurations. As shown in Figures~\ref{fig:mse_vit} and~\ref{fig:mse_lenet}, the two variants perform similarly when the number of stragglers is relatively small. In high-straggler regimes, however, the tuned smoothing parameters provide a clear advantage, and \gcc{} outperforms its interpolating variant.

\textbf{2. \gcc{} versus \texttt{LCC} for Polynomial Computation:}\\
Although \texttt{LCC} guarantees exact recovery for polynomial computations, it requires the number of non-straggling workers to meet a strict recovery threshold~\cite{yu2019lagrange}. For a polynomial target function $f$, this threshold is
\(
    \deg(f)(K-1)+1.
\)
If the number of available workers falls below this threshold, exact recovery is no longer guaranteed.

Figure~\ref{fig:perf_comp_lcc} compares \texttt{LCC} and \gcc{} on a polynomial computation task. As shown in Figure~\ref{fig:lag_comp_below_thresh}, when the number of non-straggling workers meets or exceeds the recovery threshold,
\(
    \deg(f)(K-1)+1
    =
    3\times4+1
    =
    13,
\)
or equivalently, when $S\le7$, \texttt{LCC} achieves zero reconstruction error, while \gcc{} incurs only a small approximation error of order $\mathcal{O}(10^{-3})$. However, when the number of stragglers increases and the number of available workers falls below the recovery threshold, as shown in Figure~\ref{fig:lag_comp_above_thresh}, \gcc{} continues to provide accurate approximations. In contrast, the reconstruction error of \texttt{LCC} increases sharply because its exact-recovery condition is no longer satisfied.
\begin{figure*}[t]
     \centering
     \begin{subfigure}[b]{0.49\textwidth}
         \centering
         \includegraphics[width=1\textwidth]{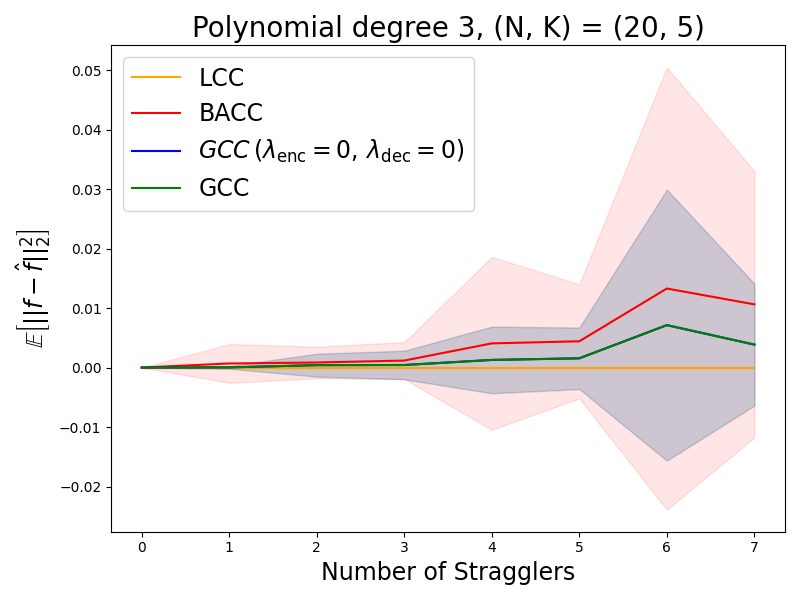}
         \caption{Performance within the \texttt{LCC} recovery threshold}
         \label{fig:lag_comp_below_thresh}
     \end{subfigure}
     \hfill
     \begin{subfigure}[b]{0.49\textwidth}
         \centering
         \includegraphics[width=1\textwidth]{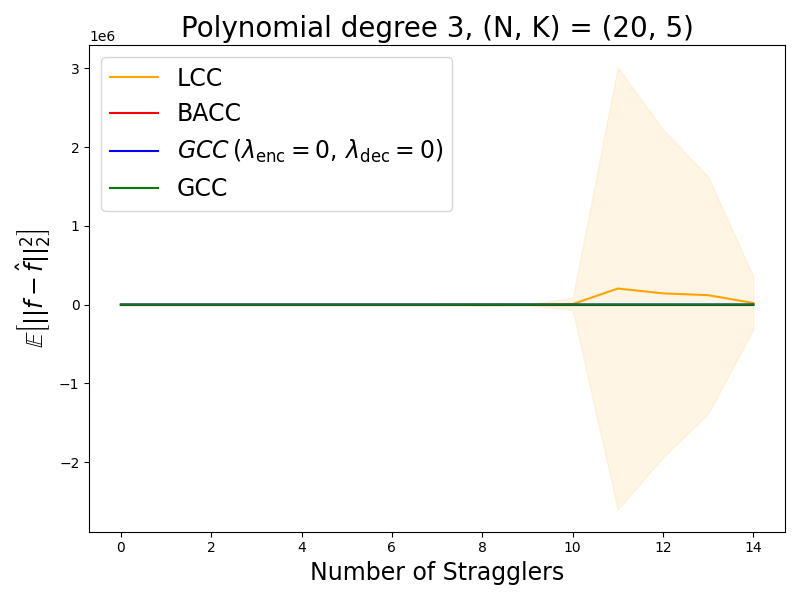}
         \caption{Performance across all straggler regimes}
         \label{fig:lag_comp_above_thresh}
     \end{subfigure}
     \caption{Comparison of Mean Squared Error (MSE) between \gcc{} and \texttt{LCC} for a polynomial task. (a) A zoomed-in view showing \texttt{LCC}'s exact recovery prior to exceeding the recovery threshold. (b) The full evaluation showing \texttt{LCC}'s instability beyond the threshold, contrasted with \gcc{}'s robustness.}
     \label{fig:perf_comp_lcc}
\end{figure*}

\textbf{3. Asymptotic Convergence Rate (\texttt{BACC} versus \gcc{}):}\\
To empirically evaluate the convergence guarantees established in Theorem~\ref{th:ldec_worst_asymp} and Corollary~\ref{cor:worst_convergence}, we fix the batch size $K$ and the number of stragglers $S$ while increasing the total number of workers $N$. Figure~\ref{fig:conv_comp_bacc} reports the resulting MSE for \gcc{} and \texttt{BACC} on a logarithmic scale. The empirical decay observed for \gcc{} is consistent with the theoretical upper bound
\(
    \mathcal{O}\!\left(N^{-3\gamma_2}\right),
\)
which reduces to $\mathcal{O}(N^{-3})$ for the Chebyshev design-point configuration used in the experiments, where $\gamma_2=1$. Moreover, the results show that the error of \gcc{} decreases more rapidly with $N$ than that of \texttt{BACC}.

\textbf{4. Convergence Rates under Worst-Case and Probabilistic Straggler Models:}\\
Finally, we examine how the straggler model affects the convergence behavior of \gcc{}. Figure~\ref{fig:conv_comp_prob} compares the worst-case setting with a fixed number of stragglers, $S=5$, against probabilistic settings with different worker-failure probabilities $p$. The results are consistent with Theorem~\ref{th:prob_dec_bound} and Corollary~\ref{cor:prob_convergence}. Although the MSE increases with $p$, reflecting the corresponding increase in the expected number of stragglers, the error continues to decrease as $N$ grows. In particular, the observed convergence behavior in the probabilistic settings is consistent with the theoretical upper bound
\(
    \mathcal{O}\!\left(
        \log_{1/p}^3(N)N^{-3}
    \right)
\)
established in Corollary~\ref{cor:prob_convergence}.

\begin{figure*}[t]
     \centering
     \begin{subfigure}[b]{0.49\textwidth}
         \centering
         \includegraphics[width=1\textwidth]{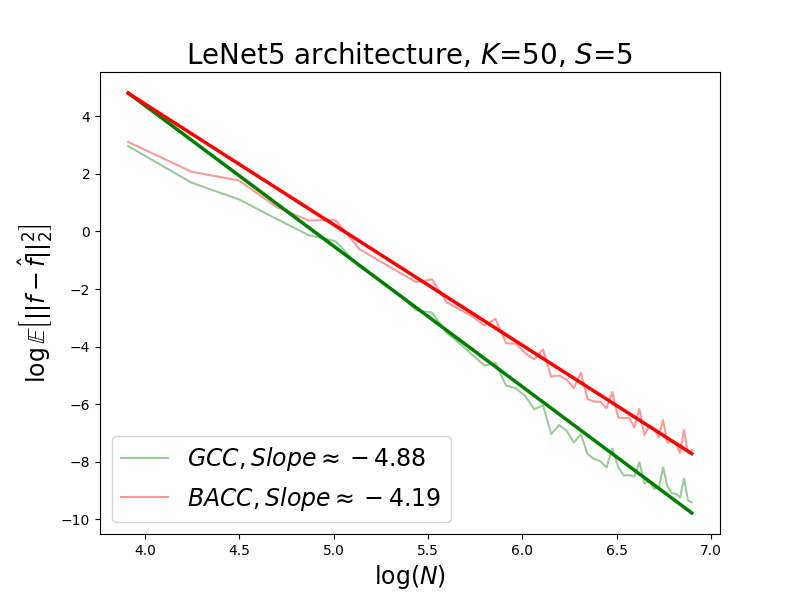}
         \caption{Convergence rate under worst-case stragglers}
         \label{fig:conv_comp_bacc}
     \end{subfigure}
     \hfill
     \begin{subfigure}[b]{0.49\textwidth}
         \centering
         \includegraphics[width=1\textwidth]{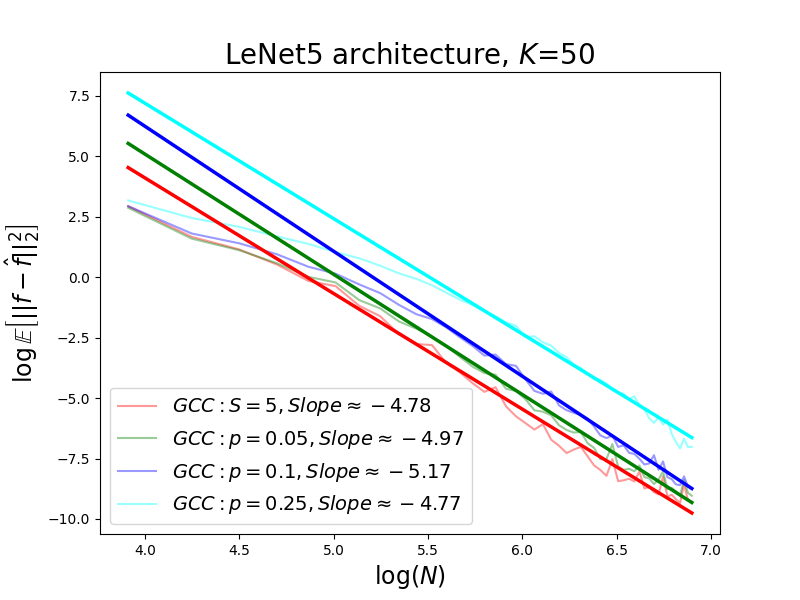}
         \caption{Convergence rate under probabilistic straggler settings}
         \label{fig:conv_comp_prob}
     \end{subfigure}
     \caption{Empirical validation of the asymptotic convergence rates as the number of worker nodes $N$ increases. (a) Comparison of the MSE decay between \texttt{BACC} and \gcc{} in a worst-case straggler environment. (b) The MSE decay for \gcc{} under various probabilistic worker failure rates alongside a worst-case straggler setting, validating the theoretical upper bound of $\mathcal{O}(N^{-3} \log_{1/p} N)$.}
     \label{fig:conv_comp}
\end{figure*}

\section{Conclusion and Future Directions}\label{sec:conclusion}

In this paper, we introduced General Coded Computing (\gcc{}), a unified framework that leverages learning theory to overcome the rigid algebraic limitations of classical coded computing. By reformulating distributed computation as an end-to-end loss minimization problem within second-order Sobolev spaces, \gcc{} enables the resilient, scalable execution of complex, non-linear workloads, such as deep neural networks. Through rigorous theoretical analysis and extensive empirical validation, we demonstrated that \gcc{} guarantees  function recovery and convergence under both worst-case and probabilistic straggler regimes. 

While this work establishes a new mathematical foundation for coded computing, it also opens several promising avenues for future exploration:

\begin{itemize}
    \item \textbf{Optimization of Design Points:} In our current formulations, the encoder and decoder design points, denoted by $\{\alpha_k\}_{k=1}^K$ and $\{\beta_n\}_{n=1}^N$, are selected \emph{a priori} using standard arrangements (e.g., equidistant or Chebyshev nodes). A natural extension is to develop algorithms that jointly optimize these points based on the specific target function or underlying data distribution.
    
    \item \textbf{Optimality of Convergence Bounds:} Although we established strong upper bounds on the approximation error and convergence rates for various straggler scenarios, an important theoretical direction is to tighten these bounds or rigorously prove their optimality.
    
    \item \textbf{Alternative Smoothness Assumptions:} The encoder and decoder designs in \gcc{} fundamentally rely on the assumption that both the encoding and decoding functions belong to a second-order Sobolev space. Expanding the framework to accommodate alternative mathematical notions of smoothness, such as Lipschitz continuity or broader Hölder classes of functions, would further generalize this theoretical foundation to an even wider class of functions.
\end{itemize}

\bibliographystyle{ieeetr}

\section{Biographies}
\begin{IEEEbiographynophoto}{Parsa Moradi}
is a Ph.D. candidate in the Department of Electrical and Computer Engineering at the University of Minnesota Twin Cities. He received his B.Sc. and M.Sc. degrees from the Department of Electrical Engineering, Sharif University of Technology, Tehran, Iran, in 2016 and 2019, respectively. His research interests include deep learning, generative AI,  distributed machine learning, and AI for healthcare.
\end{IEEEbiographynophoto}

\begin{IEEEbiographynophoto}{Behrooz Tahmasebi}
 is a Postdoctoral Fellow in Applied Mathematics and Computer Science at Harvard University, working in the Geometric Machine Learning Laboratory. He received his Ph.D. in Electrical Engineering and Computer Science (EECS) from the Massachusetts Institute of Technology (MIT), where he conducted research at the Computer Science and Artificial Intelligence Laboratory (CSAIL) under the supervision of Prof. Stefanie Jegelka. His research focuses on geometric deep learning, symmetries, and invariances in machine learning. He received a Best Paper Award at the HiLD Workshop at ICML 2024 and co-presented a NeurIPS 2025 tutorial on geometric machine learning. 
\end{IEEEbiographynophoto}

\begin{IEEEbiographynophoto}{Mohammad Ali Maddah-Ali}
(IEEE Fellow, 2023) is an Associate Professor at the University of Minnesota Twin Cities. He received his B.Sc. degree in Electrical Engineering from Isfahan University of Technology, his M.A.Sc. degree from the University of Tehran, and his Ph.D. in Electrical and Computer Engineering from the University of Waterloo, Canada, in 2007.

From 2007 to 2008, he was with the Wireless Technology Laboratories at Nortel Networks, Ottawa, ON, Canada. He then held a Postdoctoral Fellowship at the Department of Electrical Engineering and Computer Sciences, University of California, Berkeley, from 2008 to 2010. From September 2010 to September 2020, he served as a Communication Research Scientist at Nokia Bell Labs, NJ, USA. 

Dr. Maddah-Ali is the recipient of several honors, including the NSERC Postdoctoral Fellowship (2007), the Best Paper Award at the IEEE International Conference on Communications (ICC) in 2014, the IEEE Communications Society and IEEE Information Theory Society Joint Paper Award in 2015, and the IEEE Information Theory Society Paper Award in 2016. He served as an Associate Editor for the IEEE Transactions on Information Theory (2019–2022) and as Lead Editor for the IEEE Journal on Selected Areas in Information Theory. He is currently a distinguished lecturer of the IEEE Information Theory Society.
\end{IEEEbiographynophoto}

\appendices
\section{Definitions}\label{app:defs}

\begin{definition}[Weak derivative]\label{def:weak}
    Let $g$ and $h$ be functions on $\Omega=(a,b)$ for which the following integrals are well-defined. We say that $h$ is the weak derivative of $g$ if
\[
    \int^b_a g(t)\varphi'(t)\,dt
    =
    -
    \int_a^b h(t)\varphi(t)\,dt
\]
for every smooth test function $\varphi$ that $\varphi(a)=\varphi(b)=0$. In this case, we write $g'=h$. 

Higher-order weak derivatives are defined recursively, and for vector-valued functions they are defined component-wise. If $g$ is classically differentiable, then its weak derivative coincides with its classical derivative. Therefore, the condition $g\in H^2(\Omega;\mathbb{R}^M)$ means that $g$, $g'$, and $g''$ all belong to $L^2(\Omega;\mathbb{R}^M)$, where derivatives are understood in the weak sense.
\end{definition}

\begin{definition}[$L^p$ spaces and norms]
Let $\Omega\subset\mathbb{R}$ be an open interval and let $M\in\mathbb{N}$. For $1\le p<\infty$, we denote by
$\lpm{p}$ the space of measurable functions $g:\Omega\to\mathbb{R}^M$ such that
\[
\int_{\Omega} |g_j(t)|^p\,dt < \infty,\qquad \forall j\in[M],
\]
where $g(t)=[g_1(t),\dots,g_M(t)]^T$. The space $\lpm{p}$ is endowed with the norm
\[
\|g\|_{\lpm{p}} := \left(\sum^M_{j=1}\int_{\Omega}|g_i(t)|^p \,dt\right)^{\frac{1}{p}}.
\]
For $p=\infty$, we denote by $\lpm{\infty}$the space of measurable functions
$g:\Omega\to\mathbb{R}^M$ such that $\operatorname*{ess\,sup}_{t\in\Omega}|g_j(t)|<\infty$ for all $j\in[M]$,
equipped with the norm
\[
\|g\|_{\lpm{\infty}} := \max_{j \in [M]} \sup_{t\in \Omega} |g_j(t)|.
\]
\end{definition}

\begin{definition}[Local $L^p$ spaces]
A function $g:\Omega\to\mathbb{R}^M$ belongs to $L^p_{\mathrm{loc}}(\Omega;\mathbb{R}^M)$ if
$g\in L^p(V;\mathbb{R}^M)$ for every compact set $V\subset \Omega$.
\end{definition}

\begin{definition}[Local Sobolev spaces]
The local Sobolev space $W^{m,p}_{\mathrm{loc}}(\Omega;\mathbb{R}^M)$ is defined analogously by requiring that
$g\in L^p_{\mathrm{loc}}(\Omega;\mathbb{R}^M)$ and $g^{(i)}\in L^p_{\mathrm{loc}}(\Omega;\mathbb{R}^M)$
for all $i\in[m]$.
\end{definition}


\begin{definition}[Sobolev space with compact support]\label{def:sobz}
Let $\Omega=(a,b)$ and $m\in\mathbb{N}$. We define $W^{m,p}_0(\Omega;\mathbb{R}^M)$ as the set of functions
$g\in \smp$ satisfying the homogeneous boundary conditions
\[
g(a)=\mathbf{0},\ g'(a)=\mathbf{0},\ \dots,\ g^{(m-1)}(a)=\mathbf{0}.
\]
On this subspace, it is often convenient to use the seminorm induced by the highest-order derivative:
\[
\|g\|_{W^{m,p}_0(\Omega;\mathbb{R}^M)}
:=\|g^{(m)}\|_{\lpm{p}},\qquad 1\le p<\infty,
\]
and, for $p=\infty$,
\[
\|g\|_{W^{m,\infty}_0(\Omega;\mathbb{R}^M)}
:=\|g^{(m)}\|_{\lpm{\infty}}.
\]
\end{definition}

\section{Sobolev Spaces}\label{app:sobolev_props}

We first recall a standard Gagliardo-Nirenberg interpolation inequality for one-dimensional Sobolev spaces, which bounds the $L^r$ norm of a function using its $L^q$ norm and the $L^p$ norm of its derivative.

\begin{theorem}[Theorem 7.34, \cite{leoni2024first}] \label{th:interp_ineq} 
Let $\Omega \subseteq \mathbb{R}$ be an open interval and let $g \in \soblocm{1}{1}$. Assume $1 \leqslant p,q,r \leqslant \infty$ and $r \geqslant q$. Then:
\begin{align}
    \norm{g}_{\lpm{r}} \leqslant \ell^{\frac{1}{r}-\frac{1}{q}}\norm{g}_{\lpm{q}} + \ell^{1 - \frac{1}{p}+\frac{1}{r}}\norm{g'}_{\lpm{p}},
\end{align}
for every $0 < \ell < \mathcal{L}^1(\Omega)$, where $\mathcal{L}^1(\Omega)$ denotes the Lebesgue measure (length) of the interval $\Omega$.
\end{theorem}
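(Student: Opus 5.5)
The plan is to reduce to a scalar, one-interval estimate derived from the fundamental theorem of calculus, and then assemble local estimates into a global one. Since $g\in\soblocm{1}{1}$, each component $g_j$ has an absolutely continuous representative whose classical derivative equals the weak derivative a.e. So I will work componentwise with this representative. At the end, I will combine components using the definition of the $\lpm{\cdot}$ norms (sums of $p$-th powers over $j\in[M]$) and Minkowski's inequality in $\ell^r$ over the index $j$.

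\textbf{Step 1 (pointwise bound).} Fix $0<\ell<\mathcal{L}^1(\Omega)$ and $x\in\Omega$. Because $\ell$ is strictly smaller than the length of $\Omega$, there is a subinterval $I_x\subset\Omega$ of length $\ell$ containing $x$. For every $y\in I_x$, absolute continuity gives $|g_j(x)|\le |g_j(y)|+\int_{I_x}|g_j'|$. Averaging over $y\in I_x$ and applying H\"older on $I_x$ yields
\[
|g_j(x)|\le \ell^{-1}\!\int_{I_x}|g_j| + \int_{I_x}|g_j'|
\le \ell^{-1/q}\|g_j\|_{L^q(I_x)}+\ell^{1-1/p}\|g_j'\|_{L^p(I_x)}.
\]
Taking the supremum over $x$ and bounding the local norms by the global ones proves the case $r=\infty$, since then $1/r=0$ and the exponents match.

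\textbf{Step 2 (finite $r$).} I will cover $\Omega$ by a family of intervals $J_i$ of length exactly $\ell$ with overlap at most two. Consecutive disjoint intervals suffice, with the last one shifted back inside $\Omega$. On each $J_i$, use $\|g_j\|_{L^r(J_i)}\le \ell^{1/r}\sup_{J_i}|g_j|$ together with the local version of Step 1, in which the averaging interval is $J_i$ itself:
\[
\|g_j\|_{L^r(J_i)}\le \ell^{1/r-1/q}\|g_j\|_{L^q(J_i)}+\ell^{1-1/p+1/r}\|g_j'\|_{L^p(J_i)}.
\]
Next I take $r$-th powers and sum over $i$, using Minkowski in $\ell^r$ over $i$. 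For the first term, $r\ge q$ gives $\big(\sum_i a_i^r\big)^{1/r}\le\big(\sum_i a_i^q\big)^{1/q}$, so the local $L^q$ norms reassemble into $\|g_j\|_{L^q(\Omega)}$.

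\textbf{Main obstacle.} I expect the hard part to be the derivative term together with the exact constants. When $p\le r$, the same $\ell^q\subset\ell^r$ embedding recombines the local $L^p$ norms of $g'$. When $p>r$, it does not, and I would instead try a sharper pointwise bound: integrate the Step 1 estimate in $x$ directly, with $I_x$ chosen to depend measurably on $x$ (e.g.\ a sliding window), and apply Minkowski's integral inequality instead of a discrete partition. Removing the overlap factor so that the constants are exactly $1$, as stated, may require this sliding-window averaging $\frac1\ell\int_{I_x}$, which distributes each point's mass evenly; that is what I would try first.
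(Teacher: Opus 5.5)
The paper gives no proof of this statement; it quotes Leoni. So your argument has to stand on its own, and it has a genuine gap in Step~2.

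Your Step~1 is correct and is the standard proof of the case $r=\infty$: work componentwise, use $\|g_j\|_{L^q}\le\|g\|_{L^q}$, and take a maximum over $j$. You acknowledge that the covering argument breaks down for $p>r$. The remedy you propose cannot work either. Any scheme that integrates the pointwise bound $|g_j(x)|\le \ell^{-1}\int_{I_x}|g_j|+\int_{I_x}|g_j'|$ in $L^r$ and estimates the two pieces separately needs
\[
\bigl\|x\mapsto\textstyle\int_{I_x}|g_j'|\bigr\|_{L^r}\le \ell^{1-1/p+1/r}\|g_j'\|_{L^p},
\]
and this is false for $p>r$:
\begin{itemize}
\item Take $g(x)=x$ on $\Omega=(-1,1)$. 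The left side equals $2^{1/r}\ell$ and the right side equals $2^{1/p}\ell^{1-1/p+1/r}$. Their ratio $(2/\ell)^{1/r-1/p}$ is unbounded as $\ell\to0$.
\item On $\Omega=\mathbb{R}$ the left side can even be infinite while $g\in L^q$ and $g'\in L^p$.
\item Equivalently, Young's inequality for $\mathbf{1}_{[-\ell/2,\ell/2]}*|g'|$ from $L^p$ to $L^r$ requires $1+1/r-1/p\le 1$, i.e.\ $p\le r$.
\end{itemize}
The stated inequality holds only because the $L^q$ term compensates, and a term-by-term Minkowski estimate cannot capture that. Even for $p\le r$, your covering has an overlapping last interval, and sliding windows pinned at the endpoints of a bounded interval have the same problem. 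Either way you get constants like $2^{1/q}$ and $2^{1/p}$, not the constant $1$ in the statement.

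The missing idea is to interpolate between $L^q$ and $L^\infty$ rather than integrate the pointwise bound. For $q\le r<\infty$, with the paper's vector-valued norms,
\[
\|g\|_{L^r}^r=\sum_{j}\int_\Omega|g_j|^{r}\le\Bigl(\max_j\|g_j\|_{L^\infty}\Bigr)^{r-q}\sum_j\int_\Omega|g_j|^{q}=\|g\|_{L^\infty}^{r-q}\|g\|_{L^q}^{q}.
\]
Set $A:=\ell^{-1/q}\|g\|_{L^q}+\ell^{1-1/p}\|g'\|_{L^p}$; you may assume $A<\infty$, since otherwise there is nothing to prove. Your Step~1 gives $\|g\|_{L^\infty}\le A$, and trivially $\ell^{-1/q}\|g\|_{L^q}\le A$. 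Hence
\[
\|g\|_{L^q}^{q/r}=\ell^{1/r}\bigl(\ell^{-1/q}\|g\|_{L^q}\bigr)^{q/r}\le \ell^{1/r}A^{q/r}.
\]
Therefore
\[
\|g\|_{L^r}\le A^{1-q/r}\,\ell^{1/r}A^{q/r}=\ell^{1/r}A=\ell^{1/r-1/q}\|g\|_{L^q}+\ell^{1-1/p+1/r}\|g'\|_{L^p}.
\]
This has constant exactly $1$, holds for every $p\in[1,\infty]$, and works on bounded and unbounded intervals alike, with no covering needed. The case $r=\infty$, which includes $q=\infty$, is just your Step~1.
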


By specializing this theorem to the $L^2$ and $L^\infty$ norms, we obtain the following useful corollary for bounding the supremum norm.

\begin{corollary}\label{col:interp_ineq}
Suppose $g \in \soblocm{1}{1}$ and $\Omega \subseteq \mathbb{R}$ is an open interval. If $\frac{\norm{g}_{\lpm{2}}}{\norm{g'}_{\lpm{2}}} < \mathcal{L}^1(\Omega)$, then:
\begin{align}
    \norm{g}_{\lpm{\infty}} \leqslant 2\sqrt{\norm{g}_{\lpm{2}}\cdot\norm{g'}_{\lpm{2}}}.
\end{align}
\end{corollary}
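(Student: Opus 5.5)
The plan is to specialize Theorem~\ref{th:interp_ineq} to $r=\infty$ and $q=p=2$, and then choose the free parameter $\ell$ to balance the two terms. With these exponents, $\tfrac1r-\tfrac1q=-\tfrac12$ and $1-\tfrac1p+\tfrac1r=\tfrac12$. The theorem therefore gives, for every $0<\ell<\mathcal{L}^1(\Omega)$,
\[
\norm{g}_{\lpm{\infty}} \le \ell^{-1/2}\norm{g}_{\lpm{2}} + \ell^{1/2}\norm{g'}_{\lpm{2}} .
\]
The hypotheses are met because $g\in\soblocm{1}{1}$ and $r=\infty\ge q=2$.

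Next I will pick $\ell=\norm{g}_{\lpm{2}}/\norm{g'}_{\lpm{2}}$, which is the minimizer of the right-hand side over $\ell>0$. By the assumption of the corollary, this $\ell$ is strictly less than $\mathcal{L}^1(\Omega)$. Substituting it, each term becomes $\sqrt{\norm{g}_{\lpm{2}}\norm{g'}_{\lpm{2}}}$, and their sum is the claimed bound $2\sqrt{\norm{g}_{\lpm{2}}\norm{g'}_{\lpm{2}}}$.

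The main obstacle is ensuring that $\ell>0$ is admissible, since the theorem requires $0<\ell$. Two degenerate cases must be handled separately.

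\emph{Case $\norm{g}_{\lpm{2}}=0$.} Then $g=0$ almost everyfor. Taking the continuous representative, $g\equiv 0$, so both sides of the bound are zero.

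\emph{Case $\norm{g'}_{\lpm{2}}=0$.} Then the ratio in the hypothesis is undefined or infinite, so the hypothesis implicitly excludes this case (or, equivalently, requires $g\equiv 0$). I will note this convention explicitly rather than prove anything new here.

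The remaining point is the vector-valued setting. Theorem~\ref{th:interp_ineq} is already stated for $\mathbb{R}^M$-valued functions, so no componentwise argument is needed. If one preferred to work componentwise, the norm conventions would have to be checked, but applying the theorem as stated avoids this.
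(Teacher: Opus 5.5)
Your proposal is correct and follows the paper's proof exactly: specialize Theorem~\ref{th:interp_ineq} to $p=q=2$, $r=\infty$, and substitute the balancing choice $\ell^*=\norm{g}_{\lpm{2}}/\norm{g'}_{\lpm{2}}$, which the hypothesis places below $\mathcal{L}^1(\Omega)$. Your separate treatment of the degenerate case $\norm{g}_{\lpm{2}}=0$ is a small improvement: there $\ell^*=0$ is not an admissible value of $\ell$, a case the paper's proof passes over.
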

\begin{proof}
Substituting $p=2$, $q=2$, and $r=\infty$ into Theorem~\ref{th:interp_ineq} yields:
\begin{align}
    \norm{g}_{\lpm{\infty}} \leqslant \ell^{-\frac{1}{2}}\norm{g}_{\lpm{2}} + \ell^{\frac{1}{2}}\norm{g'}_{\lpm{2}}.
\end{align}
Minimizing the right-hand side with respect to $\ell$, we find that the optimal value, denoted by $\ell^*$, is:
\begin{align}
    \ell^* = \frac{\norm{g}_{\lpm{2}}}{\norm{g'}_{\lpm{2}}}.
\end{align}
Because we assume $\frac{\norm{g}_{\lpm{2}}}{\norm{g'}_{\lpm{2}}} < \mathcal{L}^1(\Omega)$, this optimal value $\ell^*$ strictly falls within the valid interval $(0, \mathcal{L}^1(\Omega))$ required by Theorem~\ref{th:interp_ineq}. Substituting $\ell^*$ back into the inequality completes the proof.
\end{proof}

\begin{corollary}[Corollary 7.36, \cite{leoni2024first}] \label{cor:interp_ineq_2}
Let $\Omega=(a, b)$, and let $1 \leq p, q, r \leq \infty$ such that $1+1 / r \geq 1 / p$ and $r \geq q$. Suppose $g \in \soblocm{1}{1}$ with $g^{\prime} \in \lpm{p}$. Let $x_0 \in[a, b]$ be a point such that $|g(x_0)|=\min_{x \in [a, b]}|g(x)|$. Then:
\begin{align}
    \norm{g-g(x_0)}_{\lpm{r}} \leq 8\norm{g}_{\lpm{q}}^\alpha \norm{g^{\prime}}_{\lpm{p}}^{1-\alpha},
\end{align}
where $\alpha:=0$ if $r=q$ and $1-1 / p+1 / r=0$; otherwise,
\begin{align}
    \alpha:=\frac{1-1 / p+1 / r}{1-1 / p+1 / q}.
\end{align}
\end{corollary}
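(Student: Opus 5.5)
The statement to prove is Corollary~\ref{cor:interp_ineq_2}, which the paper cites from \cite{leoni2024first}. The plan is a direct one-dimensional argument: a pointwise bound from the fundamental theorem of calculus, followed by a case split on whether $\norm{g}_{\lpm{q}}$ is small or large relative to $\norm{g'}_{\lpm{p}}$.

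\textbf{Reduction to the scalar case.} Since $g\in\soblocm{1}{1}$, each component $g_j$ has an absolutely continuous representative on $\Omega$. Because $g'\in\lpm{p}\subset L^1$ on the bounded interval, this representative extends continuously to $[a,b]$, so $x_0$ is well defined. I will prove the inequality componentwise (with $x_0$ chosen for each component) and then recombine. The recombination should only change constants, which I would absorb; if the vector norm structure causes trouble, I would fall back to stating the result for $M=1$, which is the form used in the paper.

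\textbf{Pointwise bound.} For each $x$,
\[
|g(x)-g(x_0)|\le \int_{I(x)} |g'|\le |x-x_0|^{1-1/p}\norm{g'}_{\lpm{p}},
\]
where $I(x)$ is the interval between $x_0$ and $x$, and the second step is Hölder.

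\textbf{Case split.} Set $\ell^*$ equal to $(\norm{g}_{\lpm{q}}/\norm{g'}_{\lpm{p}})^{1/(1-1/p+1/q)}$; this is the length scale at which the two sides balance.
\begin{itemize}
\item[(i)] \emph{Case $\ell^*\ge b-a$.} Integrate the pointwise bound over $\Omega$ to get $\norm{g-g(x_0)}_{\lpm{r}}\le (b-a)^{1-1/p+1/r}\norm{g'}_{\lpm{p}}$. Then use $(b-a)^{\cdots}\le(\ell^*)^{\cdots}$ to convert this into $\norm{g}^\alpha_{\lpm{q}}\norm{g'}^{1-\alpha}_{\lpm{p}}$.
\item[(ii)] \emph{Case $\ell^*<b-a$.} Apply Theorem~\ref{th:interp_ineq} with $\ell=\ell^*$, which gives $\norm{g}_{\lpm{r}}\le 2\norm{g}^\alpha_{\lpm{q}}\norm{g'}^{1-\alpha}_{\lpm{p}}$. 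Since $|g(x_0)|=\min_{[a,b]}|g|$, we have $|g(x_0)|\le (b-a)^{-1/q}\norm{g}_{\lpm{q}}$. Hence $\norm{g(x_0)}_{\lpm{r}}\le (b-a)^{1/r-1/q}\norm{g}_{\lpm{q}}\le (\ell^*)^{1/r-1/q}\norm{g}_{\lpm{q}}$, using $r\ge q$ and $\ell^*<b-a$. The right-hand side again equals $\norm{g}^\alpha\norm{g'}^{1-\alpha}$. The triangle inequality then gives the constant $3\le 8$.
\end{itemize}

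\textbf{Main obstacle.} I expect the hard part to be the degenerate exponent bookkeeping. The hypothesis $1+1/r\ge 1/p$ is what keeps the exponent $1-1/p+1/r$ nonnegative. The special case $r=q$ with $1-1/p+1/r=0$, where $\alpha:=0$, forces $p=1$ and $r=q=\infty$. There the claim reduces to $\operatorname{osc} g\le\norm{g'}_{L^1}$, which follows directly from the pointwise bound. Beyond that, I would carefully verify the exponent identities $(\ell^*)^{1/r-1/q}\norm{g}_{\lpm{q}}=\norm{g}^\alpha\norm{g'}^{1-\alpha}$ and $(\ell^*)^{1-1/p+1/r}\norm{g'}_{\lpm{p}}=\norm{g}^\alpha\norm{g'}^{1-\alpha}$.
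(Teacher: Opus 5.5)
The paper gives no proof of this statement. It only quotes Corollary~7.36 of \cite{leoni2024first}, and it never invokes the result afterwards: its proofs use Theorem~\ref{th:interp_ineq} and Corollary~\ref{col:interp_ineq}. So your remark that the $M=1$ form is ``the form used in the paper'' is not accurate. Your argument is therefore an independent, self-contained derivation from Theorem~\ref{th:interp_ineq}, and for $M=1$ it is correct. With $\beta:=1-1/p+1/q>0$ one has $(1/r-1/q)/\beta=\alpha-1$, so both exponent identities you list hold. Case (i) gives constant $1$ and case (ii) gives $2+1=3$, so you actually prove the inequality with $3$ in place of $8$. Two small points:
\begin{itemize}
\item The exponent $1-1/p+1/r$ is nonnegative simply because $p\ge1$. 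The hypothesis $1+1/r\ge 1/p$ is therefore vacuous rather than doing any work.
\item You should dispose of $g\equiv0$ (and $g'\equiv0$) first, so that $0<\ell^*<\infty$ when you apply Theorem~\ref{th:interp_ineq} with $\ell=\ell^*$. Finiteness of $\|g\|_{L^q}$ is automatic, since $g$ extends continuously to $[a,b]$.
\end{itemize}

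The vector-valued step, however, is a real gap. Choosing a separate $x_{0,j}$ for each component bounds $\sum_j\|g_j-g_j(x_{0,j})\|_{L^r}^r$. That is not the quantity in the statement, which uses a single $x_0$ minimizing $|g|$. Also, $M$-dependent constants cannot be absorbed into the fixed numerical constant $8$.

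Worse, with the paper's conventions the statement is false for large $M$. Here the conventions are $\|g\|_{L^\rho}^\rho=\sum_j\|g_j\|_{L^\rho}^\rho$, with $|g(x_0)|$ Euclidean. Take $\Omega=(0,1)$, $p=\infty$ and $q=r=1$. Then $\alpha=1$ and the claim reads $\|g-g(x_0)\|_{L^1}\le 8\|g\|_{L^1}$. Let $g(x)=(1+x)\gamma(x)$, where $\gamma$ runs at constant speed along the unit sphere from $M^{-1/2}(1,\dots,1)$ to $e_1$ on $[0,M^{-1/2}]$ and equals $e_1$ afterwards; $g$ is Lipschitz, so $g'\in L^\infty$. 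Then:
\begin{itemize}
\item $x_0=0$ is the unique minimizer of $|g|=1+x$;
\item $\|g\|_{L^1}\le 4$;
\item $\|g-g(0)\|_{L^1}\ge(1-M^{-1/2})(M-1)M^{-1/2}$, which grows like $\sqrt{M}$.
\end{itemize}
So restricting to $M=1$ is necessary, not merely a fallback.

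If you want a vector version, use pointwise Euclidean norms, $\|g\|_{L^\rho}:=\bigl\||g|\bigr\|_{L^\rho}$. Your proof then runs verbatim with the same $x_0$, for two reasons. First, the bound $|g(x)-g(x_0)|\le\int_{I(x)}|g'|$ holds for vectors. Second, in case (ii) you can apply Theorem~\ref{th:interp_ineq} to the scalar function $|g|$, which is locally absolutely continuous with $\bigl|(|g|)'\bigr|\le|g'|$ a.e.
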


While the preceding results are stated for $\soblocm{1}{1}$, the encoded functions in our proposed scheme reside in $\sobm{2}{2}$. The following theorem formalizes that these interpolation inequalities natively apply to our setting.

\begin{theorem}\label{th:interp_ineq2}
$\sobm{2}{2} \subseteq \sobm{1}{1} \subseteq \soblocm{1}{1}$.
\end{theorem}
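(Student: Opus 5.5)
The statement to prove is Theorem~\ref{th:interp_ineq2}: $\sobm{2}{2} \subseteq \sobm{1}{1} \subseteq \soblocm{1}{1}$. It consists of two elementary inclusions. I will prove each one using only the boundedness of $\Omega$ and the Cauchy--Schwarz inequality, working component-wise in $\mathbb{R}^M$.

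\textbf{First inclusion.} Let $g\in\sobm{2}{2}$. By Definition~\ref{def:sobolev}, $g$, $g'$ and $g''$ lie in $\lpm{2}$. Because $\Omega$ is a bounded interval, $\operatorname{Leb}(\Omega)<\infty$. For each component $j\in[M]$ and each $i\in\{0,1\}$, Cauchy--Schwarz gives
\[
\int_\Omega |g_j^{(i)}(t)|\,dt \le \operatorname{Leb}(\Omega)^{1/2}\Big(\int_\Omega |g_j^{(i)}(t)|^2\,dt\Big)^{1/2}<\infty .
\]
Hence $g,g'\in\lpm{1}$. The weak derivative $g'$ is the same object in both spaces, since the defining identity in Definition~\ref{def:weak} does not depend on the integrability class. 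Therefore $g\in\sobm{1}{1}$, with
\[
\|g\|_{\sobm{1}{1}}\le \operatorname{Leb}(\Omega)^{1/2}\sqrt{2}\,\|g\|_{\sobm{2}{2}}.
\]
The constant is not needed for the inclusion itself.

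\textbf{Second inclusion.} Let $g\in\sobm{1}{1}$ and let $V\subset\Omega$ be compact. Then $\int_V|g_j|\le\int_\Omega|g_j|<\infty$, and the same holds for $g_j'$. So $g$ and $g'$ belong to $L^1(V;\mathbb{R}^M)$ for every such $V$. This is exactly the definition of the local Sobolev space given in Appendix~\ref{app:defs}.

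\textbf{Expected difficulty.} Neither step is hard. The only point needing care is that the weak derivative on $\Omega$ restricts to the weak derivative on the interior of $V$. This holds because test functions supported in $V$ are also admissible test functions on $\Omega$.
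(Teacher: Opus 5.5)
Your proof is correct and follows the paper's argument essentially verbatim: Cauchy--Schwarz with the finite length of $\Omega$ gives $g,g'\in\lpm{1}$, and then monotonicity of the integral on compact subsets gives the local inclusion. One incidental slip is that your displayed norm estimate is missing a factor $\sqrt{M}$: summing the $2M$ component bounds gives $\|g\|_{\sobm{1}{1}}\le \sqrt{2M\operatorname{Leb}(\Omega)}\,\|g\|_{\sobm{2}{2}}$, and constant components $g_j\equiv 1$ show that your $\sqrt{2}$ version fails for $M\ge 3$, although, as you note, the inclusion itself does not depend on this constant.
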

\begin{proof}

Let $g = [g_1, \dots, g_m]^T \in \sobm{2}{2}$. Applying the Cauchy-Schwarz inequality to the $L^1$ norm of $g$, we have:
\begin{align}\label{eq:col:interp_ineq2_1}
    \norm{g}_{\lpm{1}} &= \sum_{j=1}^m \int_{\Omega} |g_j(t)|\,dt \nonumber \\
    &\leqslant \sum_{j=1}^m \left(\int_{\Omega} 1^2\,dt \cdot \int_{\Omega} |g_j(t)|^2\,dt\right)^{\frac{1}{2}} \nonumber \\ 
    &\leqslant \left(\mathcal{L}^1(\Omega)\right)^\frac{1}{2} \sum_{j=1}^m \left(\int_{\Omega} |g_j(t)|^2\,dt\right)^{\frac{1}{2}} \lec{(a)}{<} \infty,
\end{align}
where (a) follows because $\Omega$ has a finite length and $g \in \sobm{2}{2}$. Similarly, bounding the $L^1$ norm of the first derivative yields:
\begin{align}\label{eq:col:interp_ineq2_2}
    \norm{g'}_{\lpm{1}} &= \sum_{j=1}^m \int_{\Omega} |g_j'(t)|\,dt \nonumber \\
    &\leqslant \sum_{j=1}^m \left(\int_{\Omega} 1^2\,dt \cdot \int_{\Omega} |g_j'(t)|^2\,dt\right)^{\frac{1}{2}} \nonumber \\ 
    &\leqslant \left(\mathcal{L}^1(\Omega)\right)^\frac{1}{2} \sum_{j=1}^m \left(\int_{\Omega} |g_j'(t)|^2\,dt\right)^{\frac{1}{2}} \lec{(a)}{<} \infty.
\end{align}
Equations \eqref{eq:col:interp_ineq2_1} and \eqref{eq:col:interp_ineq2_2} confirm that $g$ and its derivative are absolutely integrable, establishing that $\sobm{2}{2} \subseteq \sobm{1}{1}$. 

Finally, to show that $g \in \soblocm{1}{1}$, consider any closed subset $[c,d] \subset \Omega$. We observe that:
\begin{align}
    \sum_{j=1}^m \left(\int^d_c |g_j(t)|\,dt\right) &\lec{(a)}{\leqslant} \sum_{j=1}^m \left(\int_\Omega |g_j(t)|\,dt\right) \lec{(b)}{<} \infty, \\
    \sum_{j=1}^m \left(\int^d_c |g_j'(t)|\,dt\right) &\lec{(a)}{\leqslant} \sum_{j=1}^m \left(\int_\Omega |g_j'(t)|\,dt\right) \lec{(b)}{<} \infty,
\end{align}
where (a) is due to $[c,d] \subset \Omega$ and the non-negativity of the integrand, and (b) follows directly from our finding that $g \in \sobm{1}{1}$. Therefore, $g$ is locally absolutely integrable, meaning $g \in \soblocm{1}{1}$, which completes the proof.
\end{proof}


\subsection{Sobolev Equivalent Norms} \label{app:sob_equi_norm}
Various norms have been defined on Sobolev spaces in the literature that are equivalent to the standard norm $\norm{\cdot}_{\sobm{p}{m}}$ (see \cite{adams2003sobolev}, \cite[Ch. 7]{berlinet2011reproducing}, and \cite[Sec. 10.2]{wahba1990spline}). Note that two norms $\norm{\cdot}_{W_1}$ and $\norm{\cdot}_{W_2}$ are considered equivalent if there exist positive constants $\eta_1, \eta_2$ such that:
$$
\eta_1\cdot\norm{g}_{W_2} \leqslant \norm{g}_{W_1} \leqslant \eta_2\cdot\norm{g}_{W_2}.
$$
The specific equivalent norm we utilize is the one introduced in \cite{kimeldorf1971some}. Let $\Omega = (a,b) \subset \mathbb{R}$. Let $\smpeq$ as the Sobolev space endowed with the following norm:
\begin{align}\label{eq:sob_norm_eq}
    \norm{g}_{\smpeq} := \left(\sum^M_{j=1} \left(|g_j(a)|^p +\sum^{m-1}_{i=1} \left|g_j^{(i)}(a)\right|^p \right) + \norm{g^{(m)}}_{\lpm{p}}^p \right)^\frac{1}{p}.
\end{align}

\begin{proposition}\label{prop:sob_hilb_comp} 
(See \cite[Section 7.2]{leoni2024first}, \cite[Theorem 121]{berlinet2011reproducing}, and \cite{wahba1990spline}). For any open interval $\Omega \subseteq \mathbb{R}$ and $m, M \in \mathbb{N}$, the spaces endowed with the following norm:
\begin{align}
    \hilmtilde{m} &:= \sobmeq{m}{2},
\end{align}
are Reproducing Kernel Hilbert Spaces (RKHSs).
\end{proposition}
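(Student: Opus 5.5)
The plan is to verify the RKHS property directly from the definition: $\hilmtilde{m}$ must be a Hilbert space, and point evaluation must be a bounded linear functional. Since the norm in \eqref{eq:sob_norm_eq} with $p=2$ is a sum of squares of linear quantities in $g$, it comes from the inner product
\[
\langle g,h\rangle_{\sim}:=\sum_{j=1}^M\sum_{i=0}^{m-1} g_j^{(i)}(a)\,h_j^{(i)}(a)+\int_\Omega \langle g^{(m)}(t),h^{(m)}(t)\rangle\,dt .
\]
Everything reduces to the scalar case $M=1$ because the norm is a componentwise sum. The vector-valued kernel is then the diagonal $\phi(t,s)I_M$, consistent with Remark~\ref{rem:rep_vec}. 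Throughout, I take $\Omega=(a,b)$ bounded, as everywhere else in the paper. For an unbounded interval the two norms are not equivalent, so I would restrict the statement to bounded $\Omega$.

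\textbf{Step 1: a Taylor representation.} For $g\in\sobm{m}{2}$, each $g^{(i)}$ with $i\le m-1$ lies in $\sobm{1}{2}$. By Theorem~\ref{th:interp_ineq2} it is in $\soblocm{1}{1}$, so it has an absolutely continuous representative on $[a,b]$. This makes the boundary values $g^{(i)}(a)$ well defined. Integrating $m$ times gives
\[
g(t)=\sum_{i=0}^{m-1}\frac{g^{(i)}(a)}{i!}(t-a)^i+\int_a^b\frac{(t-s)_+^{m-1}}{(m-1)!}\,g^{(m)}(s)\,ds .
\]
Applying Cauchy--Schwarz yields $|g(t)|\le C\,\|g\|_{\smpeq}$ uniformly in $t\in\Omega$, with $C$ depending only on $b-a$ and $m$. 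Hence point evaluation is bounded in the tilde norm. The same representation, differentiated $i$ times, gives $\|g^{(i)}\|_{L^2}\le C'\|g\|_{\smpeq}$ for $i<m$. This proves $\|g\|_{\sobm{m}{2}}\le \eta_2\|g\|_{\smpeq}$.

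\textbf{Step 2: the reverse inequality.} I need a trace bound $|g^{(i)}(a)|\le C''\|g^{(i)}\|_{\sobm{1}{2}}$. I would take it from Theorem~\ref{th:interp_ineq} with $r=\infty$, $p=q=2$ and a fixed $\ell<b-a$, applied to $g^{(i)}$. This is the 1D embedding $W^{1,2}\hookrightarrow C([a,b])$, and it gives $\|g\|_{\smpeq}\le\eta_1^{-1}\|g\|_{\sobm{m}{2}}$.

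\textbf{Step 3: completeness and the kernel.} The two norms are equivalent and $\sobm{m}{2}$ is complete, so $\hilmtilde{m}$ is complete and hence a Hilbert space. Bounded point evaluation (Step 1) plus the Riesz representation theorem then produce the reproducing kernel. If an explicit kernel is wanted, I would read it off the Taylor formula:
\[
\phi(t,s)=\sum_{i<m}\frac{(t-a)^i(s-a)^i}{(i!)^2}+\int_a^b\frac{(t-u)_+^{m-1}(s-u)_+^{m-1}}{((m-1)!)^2}\,du .
\]
For $m=2$ on $(-1,1)$ this should reproduce $1+ts$ and $\phi_0$ from Remark~\ref{rem:kernel}, up to the choice of base point.

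The main obstacle is Step 2 together with the well-definedness of $g^{(i)}(a)$. Both require the absolutely continuous representative up to the endpoint, and the constant in the trace bound must stay finite, which uses boundedness of $\Omega$. I would handle this carefully through Theorem~\ref{th:interp_ineq} rather than by an ad hoc argument.
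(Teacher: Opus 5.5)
Your proof is correct. The paper gives no argument for Proposition~\ref{prop:sob_hilb_comp}; it only cites Leoni, Berlinet--Thomas-Agnan and Wahba. Your proof is the standard argument behind those citations, written out in full:
\begin{itemize}
\item Taylor's formula with integral remainder gives bounded point evaluation and $\|g\|_{\sobm{m}{2}}\le\eta_2\|g\|_{\sobmeq{m}{2}}$.
\item The one-dimensional embedding from Theorem~\ref{th:interp_ineq} gives the reverse inequality.
\item Completeness is inherited from $\sobm{m}{2}$, and Riesz produces the kernel.
\item The explicit kernel $\sum_{i<m}(t-a)^i(s-a)^i/(i!)^2+\int_a^b (t-u)_+^{m-1}(s-u)_+^{m-1}/((m-1)!)^2\,du$ is exactly the decomposition $R^P+R^0$ that Appendix~\ref{app:smoothspline} invokes without derivation.
\end{itemize}
The citation buys brevity. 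Your route shows where boundedness of $\Omega$ enters, and that matters here: restricting to bounded $\Omega$ is a genuine correction, not a convenience. For $\Omega=\mathbb{R}$ or $(-\infty,b)$ the norm \eqref{eq:sob_norm_eq} is undefined, since there is no finite left endpoint. On $(a,\infty)$, already for $m=1$, the space $\sobm{1}{2}$ under $|g(a)|^2+\|g'\|_{L^2}^2$ is not complete. Smooth cutoffs $\chi((t-a)/n)\log(1+t-a)$ form a Cauchy sequence whose only candidate limit is $\log(1+t-a)\notin L^2$. So the statement as written for ``any open interval'' is false, and your version is the right one.

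There are two small points to tighten.

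First, in Step~1, membership in $\soblocm{1}{1}$ only gives absolute continuity on compact subintervals of $\Omega$. The endpoint value $g^{(i)}(a)$ exists because $g^{(i+1)}\in L^1(a,b)$, which is the global inclusion $W^{1,2}\subseteq W^{1,1}$ on a bounded interval. This is the Cauchy--Schwarz step in the proof of Theorem~\ref{th:interp_ineq2}, applied to $g^{(i)}$ rather than to $g$. You flagged this as the main obstacle, and that is where the fix sits.

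Second, the kernel comparison. For $m=2$ and $a=-1$, your integral term reproduces $\phi_0$ of Remark~\ref{rem:kernel} exactly. However, the polynomial part of your kernel is $1+(t+1)(s+1)$, not $1+ts$, and this is not a base-point effect. The kernel $\phi_0$ is tied to $a=-1$, and any other base point changes it. The paper's $1+ts+\phi_0$ is instead the kernel of a different equivalent inner product, one that makes $\{1,t\}$ orthonormal on the null space. Neither point affects the proposition.
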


\section{Smoothing Spline Basics}
\label{app:smoothspline}

Consider the data model $y_i = f(t_i) + \epsilon_i$ for $i=1,\dots,n$, where the knots $t_i \in \Omega=(a,b) \subset \mathbb{R}$, and the noise terms satisfy $\mathbb{E}[\epsilon_i] = 0$ and $\mathbb{E}[\epsilon_i^2] \leqslant \sigma_0^2$. 
Assuming the true function $f \in \sobeq{m}{2}$ and $\mathcal{T}:=\{t_i\}^n_{i=1}$, the smoothing spline estimator is defined as the solution to the following regularized optimization problem:
\begin{align}\label{eq:sm_spline_obj}
    \splineCC{\lambda}{\mathcal{T}}{1}(\mathbf{y}) := \underset{g \in \sobeq{m}{2}}{\operatorname{argmin}} \frac{1}{n} \sum^n_{i=1}\left(g(t_i) - y_i\right)^2 + \lambda \int_{\Omega} \left(g^{(m)}(t)\right)^2\,dt,
\end{align}
where $\mathbf{y} = [y_1,\dots, y_n]^T$ is the vector of observations and $\lambda > 0$ is the smoothing parameter. Based on Proposition~\ref{prop:sob_hilb_comp}, the space $\hiltilde{m} := \sobeq{m}{2}$ equipped with the norm $\|\cdot\|_{\sobeq{m}{2}}$ is a RKHS associated with a reproducing kernel $\phi(\cdot,\cdot)$. Therefore, by the reproducing property, for any $v \in \sobeq{m}{2}$, we have:
\begin{align}\label{eq:smothspline_kernel}
    v(t) = \langle v(\cdot), \phi(\cdot, t)\rangle_{\hiltilde{m}}.
\end{align}
It is shown that the kernel decomposes as $\phi(t,s) = R^P(t, s) + R^0(t,s)$, where $R^0(t, s)$ is the reproducing kernel for the subspace $\hilz{m}$, and $R^P(t,s)$ corresponds to the null space of the penalty functional, which spans all polynomials of degree strictly less than $m$ \cite{wahba1975smoothing,wahba1990spline}.

By the Representer Theorem \cite{scholkopf2001generalized}, the unique minimizer of \eqref{eq:sm_spline_obj} admits the finite-dimensional representation:
\begin{align}
    u^*(\cdot) = \sum^{m}_{i=1} d_i \zeta_i(\cdot) + \sum^n_{j=1} c_j \nu_j(\cdot),
\end{align}
where $\nu_j(\cdot) = R^0(\cdot, t_j)$ for $j \in [n]$, and $\{\zeta_i(\cdot)\}_{i=1}^m$ forms a basis for the space of polynomials of degree at most $m-1$. Substituting this representation back into \eqref{eq:sm_spline_obj} and optimizing over the coefficient vectors $\mathbf{c} = [c_1,\dots, c_n]^T$ and $\mathbf{d} = [d_1,\dots, d_m]^T$ yields the closed-form solution for the fitted values at the design points \cite{wahba1990spline}:
\begin{align}\label{eq:spline_linear}
    \splineCC{\lambda}{\mathcal{T}}{1}(\mathbf{y})|_{\mathbf{t}} = \mathbf{Q}\left(\mathbf{Q}^T \mathbf{Q} + \lambda \mathbf{\Gamma}\right)^{-1} \mathbf{Q}^T \mathbf{y},
\end{align}
where the matrices are defined as follows:
\begin{align}
\mathbf{Q}_{n \times(n+m)} &= \left[\begin{array}{ll}
\mathbf{T}_{n \times m} & \boldsymbol{\Sigma}_{n \times n}
\end{array}\right], \nonumber \\
\bm{\Gamma}_{(n+m) \times(n+m)} &= \left[\begin{array}{cc}
\mathbf{0}_{m \times m} & \mathbf{0}_{m \times n} \\
\mathbf{0}_{n \times m} & \boldsymbol{\Sigma}_{n \times n}
\end{array}\right], \nonumber \\
\mathbf{T}_{ij} &= \zeta_j(t_i), \nonumber \\
\boldsymbol{\Sigma}_{ij} &= R^0(t_i, t_j).
\end{align}
Equation \eqref{eq:spline_linear} demonstrates that the smoothing spline operator mapping the data $\mathbf{y}$ to the fitted values is strictly linear. More generally, the estimated function itself can be expressed via a linear operator $\mathbf{A}_\lambda$ applied to any input vector $\mathbf{z} \in \mathbb{R}^n$:
\begin{align}\label{eq:ssfunction_def}
    \splineCC{\lambda}{\mathcal{T}}{1}(\mathbf{z}) := \mathbf{A_{\lambda}} \mathbf{z}.
\end{align}
Furthermore, $u^*(\cdot)$ is known to be a natural spline \cite{wahba1975smoothing, wahba1990spline}. Consequently, if $\{b_i(\cdot)\}_{i=1}^n$ denotes a basis for the $m$-th order natural splines (such as truncated power or B-spline basis functions) corresponding to the knots $\{t_i\}_{i=1}^n$, the solution is equivalently written as:
\begin{align}\label{eq:spline_bspline_1}
     u^*(t) &= \sum^n_{i=1} \xi_i b_i(t), \\
     \boldsymbol{\xi} &= \left(\mathbf{N}^T\mathbf{N} + \lambda\Phi\right)^{-1}\mathbf{N}^T\mathbf{y},
\end{align}
where $\boldsymbol{\xi} := [\xi_1,\dots,\xi_n]^T$, $\mathbf{N}_{ij} = b_i(t_j)$, and $\Phi_{ij} = \int_\Omega b''_i(t)b''_j(t)\,dt$ for $i, j \in [n]$.

\section{Proof of Lemmas}\label{app:proof_lemmas}

\subsection{Proof of Lemma~\ref{lem:composition_sobolev}}\label{app:proof_lemma_omposition_sobolev}

All derivatives below are understood in the weak sense (Definition~\ref{def:weak}), and the chain-rule identities hold almost everywhere on $\Omega$. To prove that $f \circ \enc \in \hild{2}{m}$, we must show that the function and its first two derivatives have bounded $\lp{2}$ norms. For analytical convenience, let us define the shifted function $$f_0(\mathbf{x}) := f(\mathbf{x}) - f(\mathbf{0}).$$ Thus, $f_0(\mathbf{0}) = \mathbf{0}$. Since adding a constant vector does not affect the derivatives, demonstrating that $f_0 \circ \enc \in \hild{2}{m}$ is equivalent to showing $f \circ \enc \in \hild{2}{m}$. Let
\[
    q^2:=\sum_{j=1}^m q_j^2,
    \qquad
    \mu^2:=\sum_{j=1}^m \mu_j^2 .
\]
First, since $\|\nabla f_j(\mathbf{x})\|_2\le q_j$, each component $f_{0j}$ is $q_j$-Lipschitz. Therefore, for almost every $t\in\Omega$,
\begin{align}
    \|f_0(\enc(t))\|_2^2
    &=
    \sum_{j=1}^m
    |f_{0j}(\enc(t))-f_{0j}(\mathbf{0})|^2
    \nonumber\\
    &\le
    \sum_{j=1}^m
    q_j^2\|\enc(t)\|_2^2
    =
    q^2\|\enc(t)\|_2^2 .
    \label{eq:composition_l2_bound}
\end{align}
Integrating over $\Omega$ gives
\[
    \|f_0\circ\enc\|_{\lpd{2}{m}}^2
    \le
    q^2\|\enc\|_{\lpd{2}{d}}^2
    <
    \infty,
\]
because $\enc\in\hild{2}{d}$.

Next, for each $j\in[m]$, the weak chain rule gives
\[
    (f_{0j}\circ\enc)'(t)
    =
    \left\langle
        \nabla f_{0j}(\enc(t)),
        \enc'(t)
    \right\rangle
    \quad \text{a.e. } t\in\Omega .
\]
Thus,
\begin{align}
    \|(f_0\circ\enc)'\|_{\lpd{2}{m}}^2
    &=
    \sum_{j=1}^m
    \int_\Omega
    \left|
        \left\langle
            \nabla f_{0j}(\enc(t)),
            \enc'(t)
        \right\rangle
    \right|^2 dt
    \nonumber\\
    &\le
    \sum_{j=1}^m
    q_j^2
    \int_\Omega
    \|\enc'(t)\|_2^2 dt
    =
    q^2\|\enc'\|_{\lpd{2}{d}}^2
    <
    \infty .
    \label{eq:composition_first_derivative_bound}
\end{align}

It remains to control the second derivative. Using the expression for the second derivative, we obtain
\begin{align}
&\int_\Omega
\left|
    (f_{0j}\circ\enc)''(t)
\right|^2\, dt
\nonumber\\
&=
\int_\Omega
\left|
    \enc'(t)^T
    D^2 f_{0j}(\enc(t))
    \enc'(t)
    +
    \left\langle
        \nabla f_{0j}(\enc(t)),
        \enc''(t)
    \right\rangle
\right|^2 \,dt
\nonumber\\
&\overset{\mathrm{(a)}}{\le}
2
\int_\Omega
\left|
    \enc'(t)^T
    D^2 f_{0j}(\enc(t))
    \enc'(t)
\right|^2 \,dt
+
2
\int_\Omega
\left|
    \left\langle
        \nabla f_{0j}(\enc(t)),
        \enc''(t)
    \right\rangle
\right|^2 \,dt
\nonumber\\
&\overset{\mathrm{(b)}}{\le}
2
\int_\Omega
\|D^2 f_{0j}(\enc(t))\|_{\mathrm{op}}^2
\|\enc'(t)\|_2^4\, dt
+
2
\int_\Omega
\|\nabla f_{0j}(\enc(t))\|_2^2
\|\enc''(t)\|_2^2\, dt
\nonumber\\
&\overset{\mathrm{(c)}}{\le}
2\mu_j^2
\int_\Omega
\|\enc'(t)\|_2^4\, dt
+
2q_j^2
\int_\Omega
\|\enc''(t)\|_2^2\, dt .
\label{eq:composition_second_derivative_component}
\end{align}
Here, (a) follows from $(a+b)^2\le 2a^2+2b^2$, (b) uses operator norm inequality 
\(
    |\mathbf{v}^T\mathbf{A}\mathbf{v}|
    \le
    \|\mathbf{A}\|_{\mathrm{op}}\|\mathbf{v}\|_2^2
\)
and the Cauchy--Schwarz inequality for the inner-product term, and (c) follows from the uniform bounds
\(
    \|D^2 f_{0j}(\mathbf{y})\|_{\mathrm{op}}\le \mu_j
\)
and
\(
    \|\nabla f_{0j}(\mathbf{y})\|_2\le q_j .
\)
Summing over $j\in[m]$ yields
\begin{align}
    \|(f_0\circ\enc)''\|_{\lpd{2}{m}}^2
    &\le
    2\mu^2
    \int_\Omega
    \|\enc'(t)\|_2^4\, dt
    +
    2q^2
    \|\enc''\|_{\lpd{2}{d}}^2 .
    \label{eq:composition_second_derivative_bound}
\end{align}
Since $\enc\in\hild{2}{d}$ and $\Omega\subset\mathbb{R}$ is bounded, the one-dimensional Sobolev embedding~\cite{nocedal2006numerical} implies $\enc'\in L^\infty(\Omega;\mathbb{R}^d)$. Hence,
\[
    \int_\Omega \|\enc'(t)\|_2^4 dt
    \le
    \|\enc'\|_{\lpd{\infty}{d}}^2
    \|\enc'\|_{\lpd{2}{d}}^2
    <
    \infty .
\]
Moreover, $\|\enc''\|_{\lpd{2}{d}}<\infty$ by assumption. Therefore,
\[
    f_0\circ\enc,\quad
    (f_0\circ\enc)',\quad
    (f_0\circ\enc)''
    \in
    \lpd{2}{m}.
\]
Thus $f_0\circ\enc\in\hild{2}{m}$, and consequently $f\circ\enc\in\hild{2}{m}$.

\subsection{Proof of Lemma~\ref{lem:decoder_error_supnorm}}\label{app:proof_lem:decoder_error_supnorm}

We begin by proving the following lemma.
\begin{lemma}\label{lem:decoder_error_poincare}
Let $\Omega=(-1,1)$, and suppose that $|\mathcal{F}|\ge 1$. Then the decoder error function $h$ satisfies
\begin{align}\label{eq:decoder_error_poincare}
    \|h\|_{\lpd{2}{m}}
    \le
    \sqrt{2}\,
    \|h'\|_{\lpd{2}{m}} .
\end{align}
\end{lemma}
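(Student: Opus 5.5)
The plan is to reduce the vector inequality to its scalar components and to prove, for each $j\in[m]$, the one-dimensional Poincaré-type bound $\|h_j\|_{L^2(\Omega)}^2\le 2\|h_j'\|_{L^2(\Omega)}^2$. Summing over $j$ and taking square roots then gives \eqref{eq:decoder_error_poincare}. The key ingredient is that each scalar error $h_j=u^*_{\mathrm{dec},j}-g_j$ vanishes at some point $t_0\in[-1,1]$. No boundary condition is available, so a plain Poincaré inequality does not apply; the zero must come from the structure of the decoder.

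\textbf{Step 1: existence of a zero of $h_j$.} I will treat the two regimes separately.
\begin{itemize}
\item If $\declamb=0$, Remark~\ref{rem:natural_spline_decoder} says the decoder interpolates the data. Then $h_j(\beta_v)=0$ for every $v\in\mathcal{F}$, which is nonempty because $|\mathcal{F}|\ge1$.
\item If $\declamb>0$, I will use the first-order optimality condition of \eqref{eq:decoder_opt} in the direction of the constant function $1$. Constants lie in the null space of the penalty $\|u''\|^2$, so perturbing $u_j\mapsto u_j+\epsilon$ does not change the penalty. Stationarity in $\epsilon$ therefore forces $\sum_{v\in\mathcal{F}}\big(u^*_{\mathrm{dec},j}(\beta_v)-g_j(\beta_v)\big)=0$, i.e.\ $\sum_{v\in\mathcal F}h_j(\beta_v)=0$. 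Either all these residuals are zero, or two of them have opposite signs. In the latter case $h_j$ is continuous, since $H^2$ functions in one dimension are continuous (Theorem~\ref{th:interp_ineq2} plus the standard absolutely continuous representative). The intermediate value theorem then gives a zero $t_0$ between two design points, hence in $(-1,1)$.
\end{itemize}

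\textbf{Step 2: the Poincaré estimate from a zero.} Since $h_j$ is absolutely continuous with $h_j'\in L^2$, we have $h_j(t)=\int_{t_0}^t h_j'(s)\,ds$. Cauchy--Schwarz gives $|h_j(t)|^2\le |t-t_0|\,\|h_j'\|_{L^2(\Omega)}^2$. Integrating over $\Omega=(-1,1)$ yields
\[
\|h_j\|_{L^2(\Omega)}^2\le \|h_j'\|_{L^2(\Omega)}^2\int_{-1}^{1}|t-t_0|\,dt=(1+t_0^2)\,\|h_j'\|_{L^2(\Omega)}^2\le 2\,\|h_j'\|_{L^2(\Omega)}^2,
\]
because $|t_0|\le1$. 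Summing over $j$ completes the argument.

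I expect the main obstacle to be Step 1 in the smoothing case, $\declamb>0$. The variational argument must be made carefully: the minimizer exists and is unique, the objective is differentiable along constant directions, and the decoupling into scalar components lets us perturb one component at a time. The limiting case $\declamb=0$ is immediate by interpolation.
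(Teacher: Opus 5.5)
Your proof is correct and follows the paper's route. You reduce to components, produce a zero of $h_j$ (by interpolation when $\declamb=0$, and via the fact that constants lie in the null space of the penalty when $\declamb>0$), and then apply the fundamental theorem of calculus and Cauchy--Schwarz to get the factor $1+t_0^2\le 2$. The only difference is cosmetic: the paper argues by contradiction, shifting the $j$-th decoder component down by the smallest residual if all residuals had the same sign, whereas you use the stationarity condition $\sum_{v\in\mathcal{F}}h_j(\beta_v)=0$ together with the intermediate value theorem. Both arguments use only that $\decstar$ is a minimizer, so you can drop the planned uniqueness check, which in fact fails when $|\mathcal{F}|=1$.
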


\begin{proof}
We prove the claim component-wise. Let $h_j$ be the $j$-th component of $h$. We first show that $h_j$ has at least one zero in $\Omega$.

If $\declamb=0$, the decoder is interpreted as the natural cubic interpolant, and hence
\[
    h_j(\beta_v)=0,
    \qquad v\in\mathcal{F}.
\]
Thus, the claim is immediate.

Now suppose $\declamb>0$. Assume, toward a contradiction, that $h_j$ has no zero on $\Omega$. Since $h_j\in H^2(\Omega)$, it is continuous; hence it must have a fixed sign on $\Omega$. Without loss of generality, suppose $h_j(t)>0$ for all $t\in\Omega$. Let
\[
    c:=\min_{v\in\mathcal{F}} h_j(\beta_v)>0,
\]
and define a shifted decoder component
\[
    \widetilde{\dec}_j(t):=\dec_j^*(t)-c .
\]
This shift does not change the second derivative, so $\|\widetilde{\dec_j}^{\prime\prime}\|^2_{\lpd{2}{m}}=\|(\dec_j^*)^{\prime\prime}\|^2_{\lpd{2}{m}}$. Moreover, for every $v\in\mathcal{F}$,
\[
    0\le h_j(\beta_v)-c < h_j(\beta_v),
\]
which implies
\[
    \sum_{v\in\mathcal{F}}
    \left(
        \widetilde{\dec}_j(\beta_v)-g_j(\beta_v)
    \right)^2
    <
    \sum_{v\in\mathcal{F}}
    \left(
        \dec_j^*(\beta_v)-g_j(\beta_v)
    \right)^2 .
\]
Thus, $\widetilde{\dec}_j$ achieves a strictly smaller objective value than $\dec_j^*$ for the $j$-th scalar smoothing-spline problem, contradicting the optimality of $\dec_j^*$. The case $h_j(t)<0$ for all $t$ is handled by shifting upward. Hence, there exists $x_{0,j}\in\Omega$ such that $h_j(x_{0,j})=0$.

For any $x\in\Omega$, the fundamental theorem of calculus gives
\[
    h_j(x)
    =
    \int_{x_{0,j}}^x h_j'(s)\,ds .
\]
By Cauchy--Schwarz,
\[
    |h_j(x)|^2
    \le
    |x-x_{0,j}|
    \int_{\Omega} |h_j'(s)|^2\,ds .
\]
Integrating over $x\in\Omega$ yields
\begin{align}
    \|h_j\|_{\lp{2}}^2
    &\le
    \|h_j'\|_{\lp{2}}^2
    \int_{\Omega} |x-x_{0,j}|\,dx
    \nonumber\\
    &=
    \|h_j'\|_{\lp{2}}^2
    \left(
        \frac{(x_{0,j}+1)^2}{2}
        +
        \frac{(1-x_{0,j})^2}{2}
    \right)
    \nonumber\\
    &=
    (x_{0,j}^2+1)
    \|h_j'\|_{\lp{2}}^2
    \le
    2\|h_j'\|_{\lp{2}}^2 .
\end{align}
Summing over $j\in[m]$ gives
\[
    \|h\|_{\lpd{2}{m}}^2
    \le
    2\|h'\|_{\lpd{2}{m}}^2,
\]
which proves the result.
\end{proof}

\begin{proof}
If $\|h'\|_{\lpd{2}{m}}=0$, then Lemma~\ref{lem:decoder_error_poincare} implies $\|h\|_{\lpd{2}{m}}=0$, and \eqref{eq:decoder_error_supnorm} is trivial. Otherwise, Lemma~\ref{lem:decoder_error_poincare} gives
\[
    \frac{\|h\|_{\lpd{2}{m}}}{\|h'\|_{\lpd{2}{m}}}
    \le
    \sqrt{2}
    <
    2
    =
    |\Omega|.
\]
Since $h \in \hild{2}{m}$, Theorem~\ref{th:interp_ineq2} ensures that $h \in \soblocmd{1}{1}{m}$. Applying the interpolation inequality in Corollary~\ref{col:interp_ineq} completes the proof.


\end{proof}

\subsection{Proof of Lemma~\ref{lem:lit_spline_noiseless}}\label{app:proof_lem:lit_spline_noiseless}

This is the specialization of the smoothing-spline error bound of ~\cite[Theorem~4.10]{ragozin1983error} to the second-order case on $\Omega=(-1,1)$. 

\subsection{Proof of Lemma~\ref{lem:composition_second_derivative_bound}}\label{app:proof_lem:composition_second_derivative_bound}

From the proof of Lemma~\ref{lem:composition_sobolev}, for $g=f\circ\enc$ we have
\begin{align}
    \|g''\|_{\lpd{2}{m}}^2
    &\le
    2\mu^2
    \int_\Omega
    \|\enc'(t)\|_2^4\,dt
    +
    2q^2
    \int_\Omega
    \|\enc''(t)\|_2^2\,dt
    \nonumber\\
    &\le
    2\max\{\mu^2,q^2\}
    \left(
        \|\enc'\|_{\lpd{4}{d}}^4
        +
        \|\enc''\|_{\lpd{2}{d}}^2
    \right).
    \label{eq:composition_cor_start}
\end{align}
By the Sobolev interpolation inequality provided in Theorem~\ref{th:interp_ineq} applied to $\enc'$ with $p=q=2$, $r=\infty$, and $\ell=1$ we obtain:
\[
    \|\enc'\|_{\lpd{4}{d}}
    \le
    \|\enc'\|_{\lpd{2}{d}}
    +
    \|\enc''\|_{\lpd{2}{d}} .
\]
Therefore,
\begin{align}
    \|g''\|_{\lpd{2}{m}}^2
    &\le
    C
    \max\{\mu^2,q^2\}
    \left[
        \|\enc''\|_{\lpd{2}{d}}^2
        +
        \left(
            \|\enc'\|_{\lpd{2}{d}}
            +
            \|\enc''\|_{\lpd{2}{d}}
        \right)^4
    \right]
    \nonumber\\
    &\lec{(a)}{\le}
    C
    \max\{\mu^2,q^2\}
    \left(
        \|\enc\|_{\hild{2}{d}}^2
        +
        4\|\enc\|_{\hild{2}{d}}^4
    \right)
    \nonumber\\
    &=
    C
    \max\{\mu^2,q^2\}
    \psi\!\left(
        \|\enc\|_{\hild{2}{d}}^2
    \right),
\end{align}
where (a) follows from the fact that $\max\{ \|\enc\|_{\lpd{2}{d}},  \|\enc'\|_{\lpd{2}{d}},  \|\enc''\|_{\lpd{2}{d}}\} \leq \|\enc\|_{\hild{2}{d}}^2$. Absorbing the numerical constant into $C_{\mathrm{comp}}$ proves the result.

\subsection{Proof of Lemma~\ref{lem:efficient_encoder_uniform_bound}}
\label{app:proof_uniform_encoder_bound}

Fix $\lambda>0$, and let $u_\lambda$ be a minimizer of
\[
    \operatorname*{arg\,min}_{u\in\hild{2}{d}}
    \left\{
        \frac{1}{K}
        \sum_{k=1}^K
        \|u(\alpha_k)-\mathbf{x}_k\|_2^2
        +
        \lambda
        \|u''\|_{\lpd{2}{d}}^2
    \right\}.
\]
We will show that $\|u_\lambda\|_{\hild{2}{d}}^2$ is bounded by a constant depending only on the encoder design points and the input data.

Let $\mathcal{N}_{\alpha}$ denote the space of natural cubic splines associated with the fixed encoder design points $\{\alpha_k\}_{k=1}^K$. For each $i\in[K]$, let $\ell_i\in\mathcal{N}_{\alpha}$ be the natural cubic spline satisfying
\[
    \ell_i(\alpha_k)=\mathbf{1}_{\{i=k\}},
    \qquad k\in[K].
\]
Thus, each $\ell_i$ is a natural cubic spline and belongs to $H^2(\Omega)$. Moreover, for any vector $\mathbf{y}=[y_1,\ldots,y_K]^T\in\mathbb{R}^K$, the natural cubic spline interpolating the values $\{y_k\}_{k=1}^K$ at the encoder design points $\{\alpha_k\}_{k=1}^K$ is
\(
    \sum_{i=1}^K y_i\ell_i(t).
\)

We now explain why $u_\lambda$ can be represented using this basis. For any function $u\in\hild{2}{d}$, replacing each component of $u$ by the natural cubic spline interpolant with the same values at $\{\alpha_k\}_{k=1}^K$ leaves the empirical fitting term unchanged. By the variational property of natural cubic splines, this replacement does not increase the second-derivative norm. Hence, a minimizer may be chosen in $(\mathcal{N}_{\alpha})^d$. Therefore, for each coordinate $j\in[d]$, we can write
\[
    (u_\lambda)_j(t)
    =
    \sum_{i=1}^K
    y_{\lambda,i}^{j}\ell_i(t),
\]
where
\[
    \mathbf{y}_{\lambda}^{j}
    :=
    \big[(u_\lambda)_j(\alpha_1),\ldots,(u_\lambda)_j(\alpha_K)\big]^T
\]
is the vector of values of the $j$-th coordinate of $u_\lambda$ at the encoder design points. Similarly, define
\[
    \mathbf{x}^{j}
    :=
    \big[x_1^{j},\ldots,x_K^{j}\big]^T .
\]

For the $j$-th coordinate, the objective as a function of $\mathbf{y}^{j}$ is
\[
    J_j(\mathbf{y}^{j})
    =
    \frac{1}{K}
    \left\|
        \mathbf{y}^{j}-\mathbf{x}^{j}
    \right\|_2^2
    +
    \lambda
    (\mathbf{y}^{j})^T
    \Omega
    \mathbf{y}^{j},
\]
where
\[
    \Omega_{i\ell}
    :=
    \int_\Omega
    \ell_i''(t)\ell_\ell''(t)\,dt .
\]
The matrix $\Omega$ is positive semidefinite, since for every $\mathbf{a}\in\mathbb{R}^K$,
\[
    \mathbf{a}^T\Omega\mathbf{a}
    =
    \int_\Omega
    \left(
        \sum_{i=1}^K a_i\ell_i''(t)
    \right)^2 dt
    \ge 0 .
\]

Since $J_j$ is a quadratic function of $\mathbf{y}^{j}$, its minimizer satisfies the first-order optimality condition
\[
    \nabla J_j(\mathbf{y}_{\lambda}^{j})=\mathbf{0}.
\]
Computing the gradient gives
\[
    \nabla J_j(\mathbf{y}^{j})
    =
    \frac{2}{K}
    \left(
        \mathbf{y}^{j}-\mathbf{x}^{j}
    \right)
    +
    2\lambda\Omega\mathbf{y}^{j}.
\]
Therefore,
\[
    \frac{2}{K}
    \left(
        \mathbf{y}_{\lambda}^{j}-\mathbf{x}^{j}
    \right)
    +
    2\lambda\Omega\mathbf{y}_{\lambda}^{j}
    =
    \mathbf{0}.
\]
By rearranging, we obtain
\begin{align}\label{eq:11}
    \left(
        I_K+\bar{\lambda}\Omega
    \right)
    \mathbf{y}_{\lambda}^{j}
    =
    \mathbf{x}^{j},
\end{align}
where $\bar{\lambda}:=K\lambda$. Since $\Omega$ is positive semi-definite, all eigenvalues of $I_K+\bar{\lambda}\Omega$ are at least one. Hence, for every vector $\mathbf{z}\in\mathbb{R}^K$,
\[
    \left\|
        (I_K+\bar{\lambda}\Omega)\mathbf{z}
    \right\|_2
    \ge
    \|\mathbf{z}\|_2.
\]
Applying this with $\mathbf{z}=\mathbf{y}_{\lambda}^{j}$ and using \eqref{eq:11}
\[
    \|\mathbf{y}_{\lambda}^{j}\|_2
    \le
    \|\mathbf{x}^{j}\|_2.
\]
Summing over $j\in[d]$ yields
\begin{align}\label{eq:value_uniform_bound}
    \sum_{j=1}^d
    \|\mathbf{y}_{\lambda}^{j}\|_2^2
    \le
    \sum_{j=1}^d
    \|\mathbf{x}^{j}\|_2^2
    =
    \sum_{k=1}^K
    \|\mathbf{x}_k\|_2^2 .
\end{align}

It remains to translate this bound on the values of $u_\lambda$ at the encoder design points into a Sobolev-norm bound. For each coordinate $j\in[d]$, using the representation above and the triangle inequality followed by Cauchy--Schwarz, we have
\begin{align}
    \|(u_\lambda)_j\|_{H^2(\Omega)}
    &=
    \left\|
        \sum_{i=1}^K
        y_{\lambda,i}^{j}\ell_i
    \right\|_{H^2(\Omega)}
    \nonumber\\
    &\le
    \sum_{i=1}^K
    |y_{\lambda,i}^{j}|
    \|\ell_i\|_{H^2(\Omega)}
    \nonumber\\
    &\le
    \|\mathbf{y}_{\lambda}^{j}\|_2
    \left(
        \sum_{i=1}^K
        \|\ell_i\|_{H^2(\Omega)}^2
    \right)^{1/2}.
\end{align}
Squaring both sides gives
\[
    \|(u_\lambda)_j\|_{H^2(\Omega)}^2
    \le
    \|\mathbf{y}_{\lambda}^{j}\|_2^2
    \sum_{i=1}^K
    \|\ell_i\|_{H^2(\Omega)}^2 .
\]
Summing over $j\in[d]$, we obtain
\begin{align}
    \|u_\lambda\|_{\hild{2}{d}}^2
    &=
    \sum_{j=1}^d
    \|(u_\lambda)_j\|_{H^2(\Omega)}^2
    \nonumber\\
    &\le
    \left(
        \sum_{i=1}^K
        \|\ell_i\|_{H^2(\Omega)}^2
    \right)
    \left(
        \sum_{j=1}^d
        \|\mathbf{y}_{\lambda}^{j}\|_2^2
    \right)
    \nonumber\\
    &\le
    \left(
        \sum_{i=1}^K
        \|\ell_i\|_{H^2(\Omega)}^2
    \right)
    \left(
        \sum_{k=1}^K
        \|\mathbf{x}_k\|_2^2
    \right),
\end{align}
where the last inequality follows from \eqref{eq:value_uniform_bound}.

The first factor depends only on the encoder design points $\{\alpha_k\}_{k=1}^K$, while the second depends only on the input data $\{\mathbf{x}_k\}_{k=1}^K$. Therefore, setting
\[
    R_{\mathrm e}
    :=
    \left(
        \sum_{i=1}^K
        \|\ell_i\|_{H^2(\Omega)}^2
    \right)
    \left(
        \sum_{k=1}^K
        \|\mathbf{x}_k\|_2^2
    \right)
\]
gives
\[
    \|u_\lambda\|_{\hild{2}{d}}^2
    \le
    R_{\mathrm e}
\]
for every $\lambda>0$. This completes the proof.

\subsection{Proof of Lemma~\ref{lem:longest_run_moment}}\label{app:proof_longest_run_2}

To simplify notation throughout this proof, let $X := R_{\stset}$ denote the non-negative integer random variable representing the longest run of ones, and let its expectation be $\mu := \mathbb{E}[R_{\stset}]$. 

\begin{lemma}(\cite[Theorem 2]{gordon1986extreme}) \label{lem:longest_run}
Let $Z_{p, N}$ be the maximal sub-sequence length of consecutive ones (the longest run) in an i.i.d. Bernoulli sequence $B_1,\dots, B_N$ with $\mathbb{E}[B_i] = p$ for $i \in [N]$. Let $\theta=\frac{\pi^2}{\ln (1/p)}$. Then:
\begin{align}
    \left|\mathbb{E}\left[Z_{p, N}\right]-\left(\log _{1/p}((1-p)N)+\frac{\gamma}{\ln (1/p)}-\frac{1}{2}\right)\right| &< g_1(\theta)+o(1),\\
    \Pr\left(Z_{p, N} - \mathbb{E}\left[Z_{p, N}\right] > t\right) &\leqslant C_3p^t,\\
    \Pr\left(Z_{p, N} - \mathbb{E}\left[Z_{p, N}\right] < -t\right) &\leqslant \exp(-C_4p^{-t}) + \exp\left(\frac{-(1-p)N}{8}\right),
\end{align}
where $\gamma \approx 0.577$ is the Euler-Mascheroni constant, $g_1(\theta):=\frac{\theta^{1/2}}{2 \pi e^\theta(1-e^{-\theta})^2}$, and $C_3, C_4$ are positive constants.
\end{lemma}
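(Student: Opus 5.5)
Lemma~\ref{lem:longest_run} is quoted from \cite[Theorem~2]{gordon1986extreme}, so in the paper it can simply be cited. If I were to reprove it, I would use the classical \emph{renewal/declumping} argument. I would treat the two tail bounds first and the sharp expectation formula last, since only the tails and $\mathbb{E}[Z_{p,N}]=\log_{1/p}((1-p)N)+\mathcal{O}(1)$ are needed for Lemma~\ref{lem:longest_run_moment}.

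\textbf{Upper tail.} A run of ones of length at least $r$ must start either at position $1$ or immediately after a zero. A union bound over starting positions then gives
\[
\Pr(Z_{p,N}\ge r)\le p^r+N(1-p)p^r .
\]
Set $m_N:=\log_{1/p}((1-p)N)$ and take $r=\mathbb{E}[Z_{p,N}]+t$. Once we know $|\mathbb{E}[Z_{p,N}]-m_N|=\mathcal{O}(1)$, this bound becomes $C_3p^t$.

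\textbf{Lower tail.} Let $W$ be the number of zeros in the sequence. By a Chernoff bound, $W\ge (1-p)N/2$ except on an event of probability at most $\exp(-(1-p)N/8)$. On the complementary event, the runs of ones that follow successive zeros behave like i.i.d.\ geometric variables. The only exception is the last run, which is truncated, but truncation only shortens runs. Hence the event that all runs are shorter than $r$ has probability at most
\[
(1-p^r)^{(1-p)N/2}\le \exp\!\big(-(1-p)Np^r/2\big).
\]
Taking $r=\mathbb{E}[Z_{p,N}]-t$ and again using $\mathbb{E}[Z_{p,N}]=m_N+\mathcal{O}(1)$, we get $(1-p)Np^r\asymp p^{-t}$. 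This yields the bound $\exp(-C_4p^{-t})+\exp(-(1-p)N/8)$.

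\textbf{Expectation.} To locate the mean, I would first run both arguments with $r=m_N\pm t$ directly. This gives $\mathbb{E}[Z_{p,N}]=m_N+\mathcal{O}(1)$ without circularity, and the tail bounds then follow with adjusted constants. For the sharp constant, I would apply the Chen--Stein Poisson approximation to the number of declumped run starts, i.e., positions $i$ with $B_{i-1}=0$ and $B_i=\cdots=B_{i+r-1}=1$. This gives
\[
\Pr(Z_{p,N}<r)=\exp\!\big(-(1-p)Np^r\big)+o(1)
\]
uniformly in integer $r$. Summing $\Pr(Z_{p,N}\ge r)$ over $r$ compares the mean to the mean of a discretized Gumbel variable. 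The continuous Gumbel mean supplies $\gamma/\ln(1/p)$, and the integer rounding supplies $-1/2$. The remaining oscillating part, obtained via Poisson summation, is bounded by $g_1(\theta)$ with $\theta=\pi^2/\ln(1/p)$.

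The main obstacle is this last step: the uniform Chen--Stein error control, together with the Fourier estimate of the lattice fluctuation that produces $g_1(\theta)$. If only the weaker statement $\mathbb{E}[Z_{p,N}]=m_N+\mathcal{O}(1)$ is needed, as in Lemma~\ref{lem:longest_run_moment}, this step can be skipped entirely.
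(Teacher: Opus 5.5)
Your proposal is correct in substance but takes a genuinely different route. The paper gives no proof of this lemma; it simply imports it from Gordon--Schilling--Waterman. You instead derive the tail bounds from first principles. Your ingredients are all sound: the union bound $\Pr(Z_{p,N}\ge r)\le(1+(1-p)N)p^r$, the Chernoff bound on the number $W$ of zeros, and your bootstrapping. You first run both estimates at $r=m_N\pm t$, with $m_N=\log_{1/p}((1-p)N)$, to get $\mathbb{E}[Z_{p,N}]=m_N+\mathcal{O}(1)$, and then recentre; this is non-circular and yields constants $C_3,C_4$ depending only on $p$.

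One justification should be repaired. ``Truncation only shortens runs'' points the wrong way: to upper-bound $\Pr(\text{all runs}<r)$, every run you use must be fully observed. Also, conditioning on $\{W\ge (1-p)N/2\}$ does not leave the runs i.i.d.\ geometric. The clean version embeds $B_1,\dots,B_N$ in an infinite i.i.d.\ sequence. There, the initial run $G_0$ and the runs $G_1,G_2,\dots$ between consecutive zeros are i.i.d.\ with $\Pr(G_j\ge r)=p^r$. On $\{W\ge k\}$, the runs $G_0,\dots,G_{k-1}$ lie entirely inside $[N]$, so $\Pr(Z_{p,N}<r)\le\Pr(W<k)+(1-p^r)^k$, and the possibly truncated final run is simply discarded.

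The sharp part of the statement is only outlined in your proposal: the constant $\gamma/\ln(1/p)-1/2$ and the fluctuation bound $g_1(\theta)$. Note also that a uniform $o(1)$ Chen--Stein error is not enough on its own when you sum over $r$; you need your tail bounds to make the errors summable. So for that part the lemma still rests on the citation, exactly as in the paper.

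What your route buys is a nonasymptotic argument, valid for every $N$, that delivers exactly what the paper consumes. The proof of Lemma~\ref{lem:longest_run_moment} uses only the upper tail and $\mathbb{E}[R_{\mathcal F}]\le m_N+\mathcal{O}(1)$. Your bound gives $\Pr(Z_{p,N}\ge m_N+t)\le 2p^t$ whenever $(1-p)N\ge 1$. Inserting this directly into the survival-function formula for $\mathbb{E}[(Z_{p,N}+1)^4]$ yields the $\mathcal{O}(m_N^4)$ bound as soon as $m_N\ge 1$. This avoids centring at $\mu$ and avoids Remark~\ref{rem:guibas_error_terms}'s appeal to the Guibas--Odlyzko rate for the $o(1)$ term. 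The citation, in turn, supplies the exact second-order constant and the oscillation bound, which the paper never actually uses.
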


From Lemma~\ref{lem:longest_run}, there exists a constant $C_3 > 0$ such that the upper tail probability is bounded by $\Pr(X - \mu > s) \leqslant C_3 p^s$ for any real $s > 0$.

\begin{remark}\label{rem:guibas_error_terms}
Regarding the $o(1)$ asymptotic error term in the expectation bound of Lemma~\ref{lem:longest_run}, the authors of \cite{guibas1980long} established that this residual converges to zero at least at a rate of $\mathcal{O}(\log(N)^3/N)$. Because this term decays rapidly as the number of worker nodes $N$ increases, its numerical contribution becomes negligible even for moderately small values of $N$. Consequently, this term can be securely upper-bounded by an absolute constant that is independent of $N$.
\end{remark}

For any non-negative integer-valued random variable $Y$, the expectation of $Y^4$ can be expressed via its survival function:
\begin{equation}
    \mathbb{E}[Y^4] = \sum_{y=1}^{\infty} \left(y^4 - (y-1)^4\right) \Pr(Y \geqslant y).
\end{equation}
Let $Y = X + 1$. Since $X$ takes values in $\{0, 1, 2, \dots\}$, $Y$ takes values in $\{1, 2, 3, \dots\}$. Substituting $x = y - 1$, we obtain:
\begin{equation}
    \mathbb{E}[(X+1)^4] = \sum_{x=0}^{\infty} \left((x+1)^4 - x^4\right) \Pr(X \geqslant x).
\end{equation}

To leverage the tail bound, we partition the infinite sum into a core component and a tail component. To strictly satisfy the condition $s > 0$ in the assumed bound, we define the integer split point as $m = \lfloor \mu \rfloor + 2$.
\begin{equation}
    \mathbb{E}[(X+1)^4] = \sum_{x=0}^{m} \left((x+1)^4 - x^4\right) \Pr(X \geqslant x) + \sum_{x=m+1}^{\infty} \left((x+1)^4 - x^4\right) \Pr(X \geqslant x).
\end{equation}

For the core sum, we apply the trivial probability bound $\Pr(X \geqslant x) \leqslant 1$. This yields a telescoping series:
\begin{equation}
    \sum_{x=0}^{m} \left((x+1)^4 - x^4\right) \Pr(X \geqslant x) \leqslant \sum_{x=0}^{m} \left((x+1)^4 - x^4\right) = (m+1)^4 - 0^4 = (m+1)^4.
\end{equation}
Because $m = \lfloor \mu \rfloor + 2 \leqslant \mu + 2$, we have:
\begin{equation}\label{eq:core_sum_bound}
    \sum_{x=0}^{m} \left((x+1)^4 - x^4\right) \Pr(X \geqslant x) \leqslant (\mu + 3)^4.
\end{equation}

For the tail sum, we analyze the term $\Pr(X \geqslant x)$ for $x \geqslant m + 1$. Let $t = x - m$, so that $x = m + t$. Since $x \geqslant m + 1$, the index $t$ ranges from $1$ to $\infty$. Because $X$ is integer-valued, the event $\{X \geqslant x\}$ is identical to $\{X > x - 1\}$. We rewrite the probability relative to the center $\mu$:
\begin{equation}
    \Pr(X \geqslant x) = \Pr(X > x - 1) = \Pr(X - \mu > x - 1 - \mu).
\end{equation}
Substitute $x = \lfloor \mu \rfloor + 2 + t$:
\begin{equation}
    x - 1 - \mu = (\lfloor \mu \rfloor - \mu) + t + 1.
\end{equation}
By the definition of the floor function, $\lfloor \mu \rfloor > \mu - 1$, which means $\lfloor \mu \rfloor - \mu > -1$. Therefore:
\begin{equation}
    (\lfloor \mu \rfloor - \mu) + t + 1 > -1 + t + 1 = t.
\end{equation}
Let $s = x - 1 - \mu$. We have established that $s > t \geqslant 1 > 0$. We can now apply the upper tail bound from Lemma~\ref{lem:longest_run}. Since $p \in (0, 1)$ and $s > t$, it strictly follows that $p^s < p^t$. Thus:
\begin{equation}
    \Pr(X \geqslant m + t) \leqslant \Pr(X - \mu > s) \leqslant C_3 p^s < C_3 p^t.
\end{equation}

We now substitute $x = m + t$ and this bound into the tail sum:
\begin{equation}
    \sum_{x=m+1}^{\infty} \left((x+1)^4 - x^4\right) \Pr(X \geqslant x) \leqslant \sum_{t=1}^{\infty} \Big( (m+t+1)^4 - (m+t)^4 \Big) C_3 p^t.
\end{equation}
Expanding the difference of the fourth powers yields:
\begin{equation}
    (m+t+1)^4 - (m+t)^4 = 4(m+t)^3 + 6(m+t)^2 + 4(m+t) + 1.
\end{equation}
Applying the binomial expansion to $(m+t)^k$ and grouping by powers of $m$, we get:
\begin{equation}
    (m+t+1)^4 - (m+t)^4 = 4m^3 + (12t+6)m^2 + (12t^2+12t+4)m + (4t^3+6t^2+4t+1).
\end{equation}
We distribute the summation and the constant $C_3 p^t$ across these terms. Defining the convergent arithmetico-geometric series $S_k = \sum_{t=1}^{\infty} t^k p^t$, we substitute $S_k$ into our expanded polynomial:
\begin{align}\label{eq:tail_sum_bound}
    \sum_{x=m+1}^{\infty} \left((x+1)^4 - x^4\right) \Pr(X \geqslant x) &\leqslant C_3 \Big[ 4m^3 \sum_{t=1}^{\infty} p^t + m^2 \sum_{t=1}^{\infty} (12t+6)p^t \nonumber \\
    &\quad + m \sum_{t=1}^{\infty} (12t^2+12t+4)p^t + \sum_{t=1}^{\infty} (4t^3+6t^2+4t+1)p^t \Big] \nonumber \\
    &= C_3 \Big[ 4S_0 m^3 + 6(2S_1 + S_0)m^2 \nonumber \\
    &\quad + 4(3S_2 + 3S_1 + S_0)m + (4S_3 + 6S_2 + 4S_1 + S_0) \Big].
\end{align}
Note that the constants $S_k$ depend only on $p$ and evaluate to finite values: $S_0 = \frac{p}{1-p}$, $S_1 = \frac{p}{(1-p)^2}$, $S_2 = \frac{p(1+p)}{(1-p)^3}$, and $S_3 = \frac{p(1+4p+p^2)}{(1-p)^4}$.

Combining the bounds for the core sum \eqref{eq:core_sum_bound} and the tail sum \eqref{eq:tail_sum_bound}, we arrive at the exact inequality:
\begin{align}\label{eq:exact_fourth_moment_bound}
    \mathbb{E}[(X+1)^4] &\leqslant (\mu + 3)^4 + C_3 \Big[ 4S_0 m^3 + 6(2S_1 + S_0)m^2 \nonumber \\
    &\quad + 4(3S_2 + 3S_1 + S_0)m + (4S_3 + 6S_2 + 4S_1 + S_0) \Big].
\end{align}

Finally, we establish the asymptotic bound. Since $m \leqslant \mu + 2$, the right-hand side of \eqref{eq:exact_fourth_moment_bound} is a polynomial in $\mu$ of degree $4$, meaning $\mathbb{E}[(X+1)^4] = \mathcal{O}(\mu^4)$. From Lemma~\ref{lem:longest_run}, we know that $\mu \leqslant \log_{1/p}((1-p)N) + \mathcal{O}(1) + o(1)$. According to Remark~\ref{rem:guibas_error_terms}, there exists an integer $n_0 \in \mathbb{N}$ such that for all $N > n_0$, the $o(1)$ term is strictly less than $1$. Therefore, for $N > n_0$, we can absorb this term to obtain $\mu \leqslant \log_{1/p}((1-p)N) + \mathcal{O}(1)$, which in turn implies $\mu^4 = \mathcal{O}(\log_{1/p}((1-p)N)^4)$. Consequently, there exists an absolute constant $C > 0$ such that for all $N > n_0$, the fourth moment is bounded by:
\begin{equation}
    \mathbb{E}[(R_{\stset}+1)^4] \leqslant C\log_{1/p}((1-p)N)^4.
\end{equation}
This concludes the proof.

\end{document}